\newcommand{\arxiv}[1]{\iftoggle{neurips}{}{#1}}
\newcommand{\neurips}[1]{\iftoggle{neurips}{#1}{}}
\newcommand{\calA}{\mathcal{A}}
\newcommand{\calP}{\mathcal{P}}
\newcommand{\calS}{\mathcal{S}}
\renewcommand{\epsilon}{\varepsilon}
\newcommand{\indep}{\perp\kern-6pt \perp}
\newcommand{\barf}{\bar{f}}
\newcommand{\chisq}[2]{\chi^2\left(\left.#1\ \right\|\ #2\right)}
\newcommand{\cond}[2]{\left[\left.#1\ \right|\ #2\right]}
\newcommand{\htheta}{\widehat{\theta}}
\newcommand{\epsopt}{\varepsilon_{\mathsf{opt}}}
\newcommand{\pref}[1]{\prettyref{#1}}
\newcommand{\savehyperref}[2]{\texorpdfstring{\hyperref[#1]{#2}}{#2}}
\newcommand{\paren}[1]{{\left( #1 \right)}}
\newcommand{\brac}[1]{{\left[ #1 \right]}}
\newcommand{\defeq}{:=}
\newcommandx{\whp}[1][1=\delta]{with probability at least $1-#1$}
\newcommandx{\Whp}[1][1=\delta]{With probability at least $1-#1$}
\DeclarePairedDelimiter{\set}{\{}{\}}
\newcommand{\R}{\mathbb{R}}
\newcommand{\RNum}[1]{\uppercase\expandafter{\romannumeral #1\relax}}
\newcommand{\cA}{\mathcal{A}}
\newcommand{\cD}{\mathcal{D}}
\newcommand{\cF}{\mathcal{F}}
\newcommand{\cS}{\mathcal{S}}
\newcommand{\abs}[1]{\left|#1\right|}
\newcommand{\nrm}[1]{\left\|#1\right\|}
\newcommand{\linf}[1]{\nrm{#1}_{\infty}}
\newcommand{\Ncov}{\mathcal{N}}
\newcommand{\tbO}{\widetilde{O}}
\def\thm@space@setup{%
  \thm@preskip=8pt plus 2pt minus 4pt
  \thm@postskip=\thm@preskip
}
\let\oldparagraph\paragraph
\renewcommand{\paragraph}[1]{\oldparagraph{#1.}}
\title{Trajectory Bellman Residual Minimization: \\
A Simple Value-Based Method for LLM Reasoning}
    \author{Yurun Yuan\footnotemark[2] \qquad Fan Chen\footnotemark[3] \qquad
    Zeyu Jia\footnotemark[3]\qquad Alexander Rakhlin\footnotemark[1]~\footnotemark[3]\qquad  Tengyang Xie\footnotemark[1]~\footnotemark[2]
    \\\\
    \footnotemark[2] UW-Madison \qquad\qquad \footnotemark[3] MIT}
    \author{%
    Yurun Yuan$^1$ \quad
    Fan Chen$^2$ \quad
    Zeyu Jia$^2$ \quad
    Alexander Rakhlin$^{2}$\thanks{Corresponding authors.} \quad
    Tengyang Xie$^{1}$\footnotemark[1] \\
    \vspace{1mm} \\
    $^1$University of Wisconsin--Madison \qquad $^2$Massachusetts Institute of Technology \\
    \texttt{\{\href{mailto:yurun\_yuan@cs.wisc.edu}{{yurun\_yuan}},\href{mailto:tx@cs.wisc.edu}{tx}\}@cs.wisc.edu} \\
    \texttt{\{\href{mailto:fanchen@mit.edu}{fanchen},\href{mailto:zyjia@mit.edu}{zyjia},\href{mailto:rakhlin@mit.edu}{rakhlin}\}@mit.edu}
    }
\date{\today}
\newcommand{\epsstat}{\varepsilon_\stat}
\newcommand{\Concinf}[1]{C_{\mathrm{conc},\infty}(#1)}
\newcommand{\Conchis}[1]{C_{\mathrm{conc}}(#1)}
\begin{document}

\maketitle

\arxiv{
\def\thefootnote{}
\footnotetext{Email: \texttt{\{\href{mailto:yurun\_yuan@cs.wisc.edu}{{yurun\_yuan}},\href{mailto:tx@cs.wisc.edu}{tx}\}@cs.wisc.edu}, \texttt{\{\href{mailto:fanchen@mit.edu}{fanchen},\href{mailto:zyjia@mit.edu}{zyjia},\href{mailto:rakhlin@mit.edu}{rakhlin}\}@mit.edu}}
\def\thefootnote{$*$}
\footnotetext{Corresponding authors}
\def\thefootnote{\arabic{footnote}}
}

\begin{abstract}
Policy-based methods currently dominate reinforcement learning (RL) pipelines for large language model (LLM) reasoning, leaving value-based approaches largely unexplored. We revisit the classical paradigm of Bellman Residual Minimization and introduce Trajectory Bellman Residual Minimization (TBRM), an algorithm that naturally adapts this idea to LLMs, yielding a simple yet effective off-policy algorithm that optimizes a single trajectory-level Bellman objective using the model's own logits as $Q$-values. TBRM removes the need for critics, importance-sampling ratios, or clipping, and can operate with only one rollout per prompt. We prove convergence to the near-optimal KL-regularized policy from arbitrary off-policy data via an improved change-of-trajectory-measure analysis. Experiments on standard mathematical-reasoning benchmarks show that TBRM matches or surpasses policy-based baselines, like PPO and GRPO, with comparable or lower computational and memory overhead. Our results indicate that value-based RL might be a principled and efficient alternative for enhancing reasoning capabilities in LLMs. The codebase for TBRM is publicly available at \url{https://github.com/rlx-lab/TBRM}.
\end{abstract}

\section{Introduction}
Large language models (LLMs) have become the de-facto backbone for modern natural-language understanding and generation \citep{brown2020language,ouyang2022training}. While ever-larger pre-training corpora push the frontier of \emph{knowledge}, high-value downstream usage increasingly hinges on \emph{reasoning}: the capacity to carry out multi-step thinking, apply abstract rules to practical situations, and generalize from observed patterns to solve complex, structured problems. Reinforcement learning (RL) with verifiable, outcome-based rewards has emerged as a powerful paradigm for enhancing this reasoning capability in LLMs \citep{guo2025deepseek}, especially for mathematical problem-solving where correctness can be objectively determined.

Recent advances in LLM post-training for mathematical reasoning have primarily employed policy-based variants---Proximal Policy Optimization \citep[PPO;][]{schulman2017proximal} and Group Relative Policy Optimization \citep[GRPO;][]{shao2024deepseekmath,guo2025deepseek}---which optimize policies to maximize objective rewards that indicate successful task completion. These approaches leverage the clear evaluation criteria of mathematical tasks, where responses can be automatically verified as correct or incorrect without requiring human judgment.

Despite their empirical success, policy-based methods face several practical challenges. They typically require fresh on-policy rollouts from the current model, increasing computational demands. They often rely on additional components like critic models, advantage normalization, and clipping mechanisms, adding complexity to implementation and tuning. Moreover, for token-level decisions, these methods (1) simplistically attribute outcome-based rewards (e.g., correctness of the entire response) to individual tokens, often assigning credit primarily to the final token, and (2) bootstrap advantages with truncated rollout horizons when additional critic models are used, potentially compromising effective credit assignment during training.

Classical RL offers a compelling alternative: \emph{value-based} methods learn an action-value function \(Q\) and act, for instance, greedily with respect to it. In the LLM setting, each token is an action, and the model's logits (that is, raw network outputs) naturally provide a parametric family for \(Q\), under the KL-regularized RL framework \citep[e.g.,][]{schulman2017equivalence}.
Despite this natural alignment and the inherent strengths of value-based methods for LLMs, their application to LLMs has been limited, with policy-based techniques being more prevalent. One potential reason, we conjecture, is the perceived difficulty in reconciling traditional iterative bootstrapping in value-based RL \citep[e.g., the Q-learning family,][]{watkins1992q,mnih2015human} with the scale of LLM training, as iterative-style algorithms are typically less stable than their optimization-style counterparts (e.g., policy-based methods).

Our starting point is \emph{Bellman Residual Minimization} \citep[BRM;][]{schweitzer1985generalized,baird1995residual}, a decades-old idea that fits the Q-function by directly minimizing its Bellman residual, designed for deterministic environments (which LLMs naturally are).
By leveraging the recent theoretical advances in trajectory-level change of measure \citep{jia2025we}, we recognize that BRM can be lifted from the \emph{step (token)} to the \emph{trajectory} level: we square a single residual spanning the whole rollout and regress the model's logits onto it.
This approach eliminates the aforementioned per-step-signal barrier, removes the need for critics, importance weights, and clipping, and provably maintains fully off-policy optimization due to the value-based nature of the algorithm.
The resulting algorithm is \emph{Trajectory BRM} (TBRM), which builds on classical BRM. Below we state our contributions, focused on  theoretical analysis and extensive experiments on \arxiv{mathematical}\neurips{math} reasoning tasks.

\subsection{Our Results}\label{sec:intro:our_results}

\begin{enumerate}[leftmargin=*]
    \item \textbf{Algorithm.}  Building explicitly on the classical idea of Bellman Residual Minimization, we present \emph{TBRM}, a single-objective, off-policy algorithm that fits the trajectory-level Bellman residual using the LLM logits as $Q$-values. TBRM dispenses with critics, advantage estimates, importance ratios, or clipping, and can operate with only \emph{one rollout per prompt}, while scaling effectively with multiple rollouts in practice. 
    \item \textbf{Theory.}
    We prove that, under standard realizability assumptions, the algorithm converges to the \emph{optimal KL-regularized policy} even when training data are generated by \emph{arbitrary} behavior policies in deterministic environments (such as LLMs).
    Our results build upon the recent change-of-trajectory-measure result of \citet{jia2025we}. We  significantly simplify that proof and improve the rate of convergence in terms of horizon factors.
    Overall, our results offer a theoretically grounded alternative to popular (yet ad-hoc) methods like GRPO.
    \item \textbf{Experiments.}  On six mathematical-reasoning benchmarks---namely AIME24/25, AMC23, MATH500, Minerva-Math, and OlympiadBench---TBRM performs on par with or better than PPO and GRPO baselines. Notably, TBRM achieves up to \(30.5\%\) accuracy on AIME24 with Qwen2.5-Math-7B. Compared to GRPO, it improves the average benchmark score by \(1.3\%\) absolute, while under comparable conditions to PPO, it achieves better performance with \(22.5\%\) less training time and \(33\%\) lower GPU memory. We further demonstrate that TBRM benefits from additional rollouts and the model learns emergent reasoning patterns, such as verification, backtracking, and decomposition, that align with human mathematical practice.
\end{enumerate}

Collectively, our findings suggest that value-based approaches offer a compelling alternative to policy gradient methods especially for enhancing mathematical reasoning capabilities in LLMs. By re-casting value learning at the trajectory level, TBRM provides a \emph{principled, efficient, and theoretically grounded} approach for improving performance on mathematical reasoning tasks while dramatically reducing computational requirements.

\section{Preliminaries}

This section provides the necessary background for our work. We first review the fundamentals of KL-regularized RL (\cref{sec:kl_rl}), then discuss prominent reinforcement learning algorithms applied to large language models (\cref{sec:rl_llm}), and finally introduce the autoregressive function approximation framework (\cref{sec:arfa}) that serves as the foundation for our proposed approach.

\subsection{KL-Regularized Reinforcement Learning}\label{sec:kl_rl}

Reinforcement learning (RL) provides a framework for sequential decision-making problems where an agent interacts with an environment to maximize cumulative rewards. In the context of Markov Decision Processes (MDPs), which provide the theoretical foundation for RL, we consider an episodic finite-horizon framework. Formally, a horizon-$H$ episodic MDP $M=(H,\Scal, \Acal, \Pcal, r, \rho)$ consists of a (potentially very large) state space $\Scal$, an action space $\Acal$, a probability transition function $\Pcal: \Scal \times \Acal \to \Delta(\Scal)$, a reward function $r: \Scal \times \Acal \to \mathbb{R}$, and an initial state distribution $\rho \in \Delta(\Scal)$. The state space is typically layered such that $\Scal = \Scal_1 \cup \Scal_2 \cup \cdots \cup \Scal_H$, where $\Scal_h$ is the set of states reachable at step $h$. A policy $\pi: \Scal \to \Delta(\Acal)$ maps states to distributions over actions and induces a distribution over trajectories $\tau = (s_1, a_1, \ldots, s_H, a_H)$ and rewards $(r_1, \ldots, r_H)$, where the initial state is sampled as $s_1 \sim \rho$, and for $h=1,\ldots,H$: $a_h \sim \pi(s_h)$, $r_h = r(s_h, a_h)$, and $s_{h+1} \sim \Pcal(s_h, a_h)$. We let $\E_{\tau \sim \pi}[\cdot]$ and $\mathbb{P}_{\tau \sim \pi}[\cdot]$ denote expectation and probability under this process, and $\E_{\pi}[\cdot]$ and $\mathbb{P}_{\pi}[\cdot]$ for brevity when $\tau$ is not explicitly mentioned.

For any policy $\pi$, we define the occupancy measures that characterize the probabilities of visiting states and selecting actions when following $\pi$. Specifically, the state occupancy measure $d^{\pi}(s_h) \coloneqq \mathbb{P}_{\tau \sim \pi}[s_h \in \tau]$ represents the probability of visiting state $s_h \in \Scal_h$ under policy $\pi$. Similarly, the state-action occupancy measure $d^{\pi}(s_h, a_h) \coloneqq \mathbb{P}_{\tau \sim \pi}[(s_h, a_h) \in \tau]$ gives the probability of the state-action pair $(s_h, a_h)$ occurring in a trajectory. We also define the trajectory occupancy measure $d^{\pi}(\tau) \coloneqq \mathbb{P}_{\tau' \sim \pi}[\tau' = \tau]$, which is the probability of generating the exact trajectory $\tau$ when following policy $\pi$. It is important to distinguish between $d^{\pi}(\tau)$ and $\pi(\tau) \coloneqq \prod_{(s_h,a_h) \in \tau} \pi(a_h \mid s_h)$, as they differ when the transition dynamics are stochastic.

In standard RL, the objective is to find a policy $\pi$ that maximizes the expected cumulative reward $J(\pi) = \E_{\tau \sim \pi}[r(\tau)]$, where $r(\tau) = \sum_{h=1}^H r(s_h,a_h)$.
In many practical applications, particularly in the context of large language models, it is beneficial to incorporate a regularization term that encourages the learned policy to stay close to a reference policy $\piref$. This leads to the KL-regularized RL objective \citep{ziebart2008maximum,ziebart2010modeling,neu2017unified,ouyang2022training}
\begin{align*}\neurips{\textstyle}
J_{\beta}(\pi) = \E_{\tau \sim \pi}[r(\tau)] - \beta \cdot \E_{\tau \sim \pi}\left[\log\frac{\pi(\tau)}{\piref(\tau)}\right]
= \E_{\tau \sim \pi} \left[ \sum_{h=1}^H \left( r(s_h,a_h) - \beta \log\frac{\pi(a_h\mid s_h)}{\piref(a_h\mid s_h)} \right) \right],
\end{align*}
where $\beta > 0$ is a regularization parameter that controls the strength of the penalty $D_{\text{KL}}(\pi \| \piref) = \E_{\tau \sim \pi}\left[\log\frac{\pi(\tau)}{\piref(\tau)}\right]$, known as the Kullback-Leibler divergence. %

\subsection{Reinforcement Learning Methods for Large Language Models}\label{sec:rl_llm}

In this section, we briefly review popular reinforcement learning methods for large language models. For the sake of generality, we continue to use the terminology of MDPs (i.e., we use $s$ to represent state and $a$ to represent action). This terminology naturally encompasses the case of large language models in both single-turn and multi-turn interaction settings.

In the single-turn setting where $x \sim \rho$ denotes the input prompt and $y_1, y_2, \dotsc, y_H$ denote the output tokens, we can define $s_1 \coloneqq x$ and $s_h \coloneqq (x,y_1,\dotsc,y_{h-1})$ for $h > 1$, with $a_h \coloneqq y_h$ for $h=1,\dotsc,H$. In the multi-turn setting, which consists of multiple interaction turns $(x^\iter{1}, y_{1:H}^\iter{1})$, $(x^\iter{2}, y_{1:H}^\iter{2})$, and so forth, we can adapt the transition function accordingly. Here, $y_{1:H}^\iter{i}$ is a shorthand notation for the sequence of tokens $y_1^\iter{i}, y_2^\iter{i}, \dotsc, y_H^\iter{i}$ in the $i$-th turn. For instance, if a state-action pair $(s,a)$ contains the complete response for one turn (e.g., in a conversation with three or more turns), where $s = (x^\iter{1}, y_{1:H}^\iter{1}, x^\iter{2}, y_{1:H-1}^\iter{2})$ and $a = y_H^\iter{2}$, the next state would transition to $s' = (x^\iter{1}, y_{1:H}^\iter{1}, x^\iter{2}, y_{1:H}^\iter{2}, x^\iter{3})$, rather than simply concatenating the previous state and action as in the single-turn case.

\paragraph{Proximal Policy Optimization (PPO)} PPO \citep{schulman2017proximal} introduces a clipped surrogate objective to constrain policy updates:
\begin{align*}\neurips{\textstyle}
\mathcal{J}^{\sf PPO}(\theta) = \E_{(s_h, a_h) \sim \pi_{\theta_{\sf old}}} \left[ \min\left( \frac{\pi_\theta(a_h\mid s_h)}{\pi_{\theta_{\sf old}}(a_h\mid s_h)} \widehat A_h(s_h,a_h), \text{clip}\left( \frac{\pi_\theta(a_h\mid s_h)}{\pi_{\theta_{\sf old}}(a_h\mid s_h)}, 1-\varepsilon, 1+\varepsilon\right) \widehat A_h(s_h,a_h) \right) \right],
\end{align*}
where $\widehat A_h$ is the advantage estimate, and $\varepsilon$ is a hyperparameter. The advantage $\widehat A_h$ is typically computed using Generalized Advantage Estimation \citep[GAE;][]{schulman2015high}: $\widehat A_h(s_h,a_h) = \sum_{l} (\lambda)^l \delta_{t+l}$, where $\delta_h = r_h + V_\phi(s_{h+1}) - V_\phi(s_h)$ is the temporal difference error and $V_\phi$ is an estimate of the value function of the KL regularized reward $r(s_h,a_h) - \beta \log\frac{\pi(a_h\mid s_h)}{\piref(a_h\mid s_h)}$.

For LLMs, PPO has been widely used for enhancing mathematical reasoning capabilities, where objective rewards signal the correctness of the model's solutions.

\paragraph{Group Relative Policy Optimization (GRPO)} GRPO \citep{shao2024deepseekmath} is a policy-based method that, in practical implementations for LLMs like DeepSeek-R1, samples $G$ responses $o^\iter{1}, \dots, o^\iter{G}$ for each prompt $x$ and computes advantages by normalizing rewards within each prompt group.
In the MDP terminology above, this corresponds to: $s_1^\iter{i} = x \sim \rho$, $o^\iter{i} = (a_1^\iter{i},a_2^\iter{i},\dotsc,a_{|o^\iter{i}|^\iter{i}})$, $s_h^\iter{i} = (x,a_1^\iter{i},\dotsc,a_{h-1}^\iter{i})$, and $r(x,o^\iter{i}) = \sum_{h} r(s_h^\iter{i},a_h^\iter{i})$.
The advantage for the $i$-th response $o^\iter{i}$ (and implicitly for each token within that response) is computed as:
\arxiv{\begin{align*}
    \widehat A^\iter{i} = \frac{r(x, o^\iter{i}) - \operatorname{mean}(\{r(x, o^\iter{1}), \dots, r(x, o^\iter{G})\})}{\operatorname{std}(\{r(x, o^\iter{1}), \dots, r(x, o^\iter{G})\}),}
\end{align*}}
\neurips{$\widehat A^\iter{i} = \frac{r(x, o^\iter{i}) - \operatorname{mean}(\{r(x, o^\iter{1}), \dots, r(x, o^\iter{G})\})}{\operatorname{std}(\{r(x, o^\iter{1}), \dots, r(x, o^\iter{G})\})}$,}
where $r(x, o^\iter{i})$ is the outcome for response $o^\iter{i}$ to prompt $x$ as we defined above. This response-level advantage $\widehat A^\iter{i}$ is then used to replace the step-wise advantage function $\widehat A_h(s_h,a_h)$ in the PPO objective $\mathcal{J}^{\sf PPO}$, but then GRPO objective accommodates the KL-regularization at the end:
\begin{align*}
    \mathcal{J}^{\sf GRPO}(\theta) &= \mathbb{E}_{x \sim \rho, \{o^\iter{i}\}_{i=1}^G \sim \pi_{\theta_{\sf old}}(\cdot \mid x)} \Bigg[ \frac{1}{G} \sum_{i=1}^G \frac{1}{|o_i|} \sum_{h=1}^{|o_i|} \Bigg\{ \\
    & \quad \min \left[ \frac{\pi_\theta(a_h^\iter{i} \mid s_h^\iter{i})}{\pi_{\theta_{\sf old}}(a_h^\iter{i} \mid s_h^\iter{i})} \widehat A^\iter{i}, \text{clip}\left(\frac{\pi_\theta(a_h^\iter{i} \mid s_h^\iter{i})}{\pi_{\theta_{\sf old}}(a_h^\iter{i} \mid s_h^\iter{i})}, 1-\epsilon, 1+\epsilon \right) \widehat A^\iter{i} \right] - \beta D_{\text{KL}}(\pi \| \piref) \Bigg\}\Bigg].
\end{align*}
The normalization by mean and standard deviation is intended to stabilize training by reducing variance. GRPO is often considered a simpler alternative to PPO for post-training LLMs. This is partly because PPO typically involves training a separate critic network and incorporates more complex mechanisms for policy updates. In the context of LLMs, the full complexity of PPO might not always be necessary, leading to the adoption of more streamlined policy gradient methods like GRPO.

\subsection{Autoregressive Function Approximation}\label{sec:arfa}

Having established the principles of KL-regularized RL and reviewed current RL methods for LLMs, we now introduce the key formulation that bridges these concepts: autoregressive function approximation. This framework allows us to naturally parameterize value functions and policies using autoregressive models like LLMs, which is central to our proposed method.

Note that KL-regularized RL has been widely studied in classical RL literature \citep[e.g.,][]{schulman2017equivalence,nachum2017bridging,haarnoja2018soft} and the similar idea of autoregressive function approximation has also been introduced by \citet{guo2022efficient}. This subsection should be viewed as a discussion of preliminary background and unified notations that will enable our proposed approach.

Given a reference model $\piref$, we first define the following modified reward function
\begin{align}\neurips{\textstyle}
\label{def:ar_rwd}
R_{\beta}(s_h,a_h) = \frac{r(s_h,a_h)}{\beta} + \log\piref(a_h\mid s_h),
\end{align}
and, therefore, the original KL-regularized RL objective can be rewritten as:\neurips{ $J_{\beta}(\pi) = \beta \cdot \E_{\tau \sim \pi} \left[ R_{\beta}(\tau) - \log\pi(\tau) \right] = \beta \cdot \E_{\tau \sim \pi} \left[ \sum_{h=1}^H \left( R_{\beta}(s_h,a_h) - \log\pi(a_h\mid s_h) \right) \right]$.}
\arxiv{\begin{align*}
J_{\beta}(\pi) = \beta \cdot \E_{\tau \sim \pi} \left[ R_{\beta}(\tau) - \log\pi(\tau) \right] = \beta \cdot \E_{\tau \sim \pi} \left[ \sum_{h=1}^H \left( R_{\beta}(s_h,a_h) - \log\pi(a_h\mid s_h) \right) \right].
\end{align*}}
Here, KL-regularization is equivalent to entropy regularization. However, we will continue to use the KL-regularization terminology throughout the remainder of the paper for consistency.

The optimal policy for the above objective, denoted $\pi^\star_{\beta} = \text{arg}\max_{\pi} J_{\beta}(\pi)$, has a closed-form solution that takes the form of a softmax distribution,
\begin{align*}
\pi^\star_{\beta}(a_h\mid s_h) \propto \piref(a_h\mid s_h) \exp\left(\nicefrac{Q^\star_{r}(s_h,a_h)}{\beta}\right), \quad \text{or} \quad \pi^\star_{\beta}(a_h\mid s_h) \propto \exp\left(Q^\star_{R_\beta}(s_h,a_h)\right),
\end{align*}
where $Q^\star_{r}$ and $Q^\star_{R_\beta}$ are the optimal action-value functions for the original reward $r$ and transformed reward functions $R_\beta$, respectively.

We now formalize the Bellman operators under the shifted reward function $R_\beta$, and discuss the key properties induced by the KL-regularization. For a given policy $\pi$ and any Q-function $Q$, we define the Bellman operator as
\neurips{\begin{small}}\begin{align*}
(\Tcal^\pi_\beta Q)(s_h,a_h) \coloneqq &~ R_\beta(s_h,a_h) + \E_{s_{h+1} \sim \Pcal(\cdot \mid s_h,a_h), a_{h+1} \sim \pi(\cdot \mid s_{h+1})} \left[ Q(s_{h+1},a_{h+1}) - \log\pi(a_{h+1} \mid s_{h+1}) \right].
\end{align*}\neurips{\end{small}}
Throughout this paper, we will use $V_Q(s) \coloneqq \log \sum_{a \in \Acal} e^{Q(s,a)}$ to denote the softmax of the given $Q$. With this, the Bellman optimality operator becomes
\neurips{\begin{small}}\begin{align*}
(\Tcal_\beta Q)(s_h,a_h) \coloneqq &~ R_\beta(s_h,a_h) + \E_{s_{h+1} \sim \Pcal(\cdot \mid s_h,a_h)} \Big[\underbrace{\max_{\pi\in\Delta(\Acal)} \E_{a_{h+1} \sim \pi(\cdot \mid s_{h+1})}\left[ Q(s_{h+1},a_{h+1}) - \log\pi(a_{h+1} \mid s_{h+1}) \right]}_{=\log \sum_{a \in \Acal} \exp(Q(s_{h+1},a))\eqqcolon V_Q(s_{h+1})}\Big] \\
= &~ R_\beta(s_h,a_h) + \E_{s_{h+1} \sim \Pcal(\cdot \mid s_h,a_h)} \left[ V_Q(s_{h+1}) \right].
\end{align*}\neurips{\end{small}}
With the above definitions, the optimal policy takes the form of the following Boltzmann distribution:
\neurips{\begin{small}}\begin{align*}
\pi_Q(\cdot \mid s_h) \coloneqq \argmax_{\pi \in \Delta(\Acal)} \E_{a_{h} \sim \pi(\cdot \mid s_{h})}\left[ Q(s_{h},a_{h}) - \log \pi(a_{h} \mid s_{h}) \right] = e^{Q(s_h,\cdot) - V_Q(s_h)}.
\end{align*}\neurips{\end{small}}
We can further define the KL-regularized value functions for a given policy $\pi$:
\neurips{\begin{small}}\begin{gather*}
\neurips{\textstyle}
Q_\beta^\pi(\widetilde s_h,\widetilde a_h) \coloneqq \E_{\pi} \left[ R_\beta(s_{h},a_{h}) + \sum_{h'=h+1}^H \left( R_\beta(s_{h'},a_{h'}) - \log\pi(a_{h'} \mid s_{h'}) \right) \midmid (s_h,a_h) = (\widetilde s_h,\widetilde a_h) \right], \\
\neurips{\textstyle}
V_\beta^\pi(\widetilde s_h) \coloneqq \E_{\pi} \left[ \sum_{h'=h}^H \left( R_\beta(s_{h'},a_{h'}) - \log\pi(a_{h'} \mid s_{h'}) \right) \midmid s_h = \widetilde s_h \right] = \E_{a_h \sim \pi(\cdot \mid \widetilde s_h)} \left[ Q_\beta^\pi(\widetilde s_h, a_h) - \log\pi(a_h \mid \widetilde s_h) \right].
\end{gather*}\neurips{\end{small}}
The corresponding optimal value functions are $Q_\beta^\star \coloneqq Q^{\pi^\star_\beta}_\beta$ and $V_\beta^\star \coloneqq V^{\pi^\star_\beta}_\beta$. Here, $Q_\beta^\pi$ and $Q_\beta^\star$ are also the fixed points of $\mathcal{T}_\beta^\pi$ and $\mathcal{T}_\beta$, respectively.

For autoregressive function approximation architectures, such as large language models, we can directly leverage logits  to parameterize $Q$, $V_Q$, and $\pi_Q$ as follows.
Let $\theta$ be the model weights. Define
\begin{gather}
\label{eq:ar_func_approx}
\boxed{Q_\theta(s,a) \coloneqq \logit_\theta(s,a),\,
V_\theta(s) \coloneqq \softmax\circ\logit_\theta(s,\cdot),\,
\log\pi_\theta(a\mid s) \coloneqq Q_\theta(s,a) - V_\theta(s),}
\end{gather}
where $\softmax\circ\logit_\theta(s,\cdot) \coloneqq \log\sum_{a \in \Acal} \exp(\logit_\theta(s,a))$.\footnote{Here, $\softmax$ denotes the $\mathsf{log\text{-}sum\text{-}exp}$ operator for notational convenience, rather than the vocabulary softmax layer used to produce token probabilities in language models.} While we assumed here temperature to be $1$ for simplicity, any temperature can be incorporated by appropriately scaling the reward \cref{def:ar_rwd}.

\section{Trajectory Bellman Residual Minimization}\label{sec:tbrm}

In this section, we introduce our main algorithm, \textbf{Trajectory Bellman Residual Minimization (TBRM)}, designed specifically for large language models problems.
As we discussed in \cref{sec:kl_rl}, the transition dynamics of large language models can be viewed as deterministic.
\emph{For the remainder of this section, we will apply the autoregressive function approximation defined in \cref{eq:ar_func_approx} and assume deterministic transition dynamics.}

Recall that Bellman error $Q(s_h,a_h) - (\Tcal_\beta Q)(s_h,a_h)$ over state-action pairs $(s_h,a_h)$ is employed as the proxy for controlling the performance $J_\beta(\pi_Q)$ of $\pi_Q$. Minimizing the square of Bellman error on $(s_h,a_h)$ in deterministic MDPs is equivalent for minimizing the square of Bellman residual $Q(s_h,a_h) - R_\beta(s_h,a_h) - V_Q(s_{h+1})$ given the $(s_h,a_h,s_{h+1})$ tuple. This leads to the classical Bellman residual minimization objective \citep[BRM;][]{schweitzer1985generalized,baird1995residual}, which we expand using the definition of $R_\beta$ and autoregressive function approximation in \cref{eq:ar_func_approx}:
\begin{gather*}
\label{eq:classical_brm}\numberthis
\Lcal_{\Dcal}^{\sf BRM}(\theta) = \frac{1}{|\Dcal|}\sum_{\tau\in\Dcal} \color{blue}\sum_{h=1}^{H}\color{black} \Big(\logit_\theta(s_h,a_h) - (\Tcal_\beta \logit_\theta)(s_h,a_h)\Big)^2
\\
= \frac{1}{|\Dcal|}\sum_{\tau\in\Dcal} \color{blue}\sum_{h=1}^{H}\color{black} \left(\logit_\theta(s_h,a_h) - \frac{r(s_h,a_h)}{\beta} - \log\piref(s_h,a_h) - \log\sum_{a\in\calA}\exp(\logit_\theta(s_{h+1},a))\right)^2.
\end{gather*}
Here $\Dcal$ denotes data which can be either purely offline or updated online as a replay buffer. In the context of LLMs, directly minimizing BRM may not be possible because the token-level reward signal is either unavailable (e.g., if we assign the outcome reward to the final token) or very sparse.
Crucially, minimizing the square of per-step Bellman error as in $\Lcal_{\Dcal}^{\sf BRM}(\theta)$ is sufficient but not necessary for maximization of $J_\beta(\pi_\theta)$. Indeed,  a weaker control of Bellman errors over certain distributions is sufficient for optimizing $J_\beta(\pi_\theta)$ (see, e.g., \citet{xie2020q} or \cref{cor:all-pol-sub-opt}). As we prove below (\cref{sec:tbrm_theory}), it is sufficient to instead consider a trajectory-based variant of BRM,
\begin{gather*}
\Lcal_{\Dcal}^{\sf TBRM}(\theta) = \frac{1}{|\Dcal|}\sum_{\tau\in\Dcal} \left( {\color{blue}\sum_{h=1}^{H}} \logit_\theta(s_h,a_h) - (\Tcal_\beta \logit_\theta)(s_h,a_h)\right)^2
\\
\label{eq:brm_mc_obj}\numberthis
= \frac{1}{|\Dcal|}\sum_{\tau\in\Dcal} \left(\logit_\theta(s_1,a_1) - \frac{r(\tau)}{\beta} - \log\piref(\tau) + \sum_{h=2}^{H}\log\pi_\theta(a_{h} \mid s_{h}) \right)^2,
\end{gather*}
where the second equality used the autoregressive function approximation, i.e., $\log\pi_\theta(a_{h} \mid s_{h}) = \logit_\theta(a_{h} \mid s_{h}) - \log\sum_{a\in\calA}\exp(\logit_\theta(s_{h},a))$, to simplify the expression. An immediate benefit of TBRM is that we only require the trajectory outcome $r(\tau)$, rather than the process reward $r(s_h,a_h)$ as in BRM, and credit assignment will be (provably) carried out in the learning procedure itself.
\arxiv{We now formally let $\Dcal \coloneq \{\tau^\iter{i}\}_{i=1}^{|\Dcal|}$be a trajectory dataset with  outcome rewards, $\tau^\iter{i} = (s_1^\iter{i},a_1^\iter{i},\dotsc,s_H^\iter{i},a_H^\iter{i},r(\tau^\iter{i}))$.}
In our experiments, we update $\Dcal$ online, and \cref{alg:brm_mc} represents the exact implementation we used in \cref{sec:exp}.

\begin{algorithm}[ht]
    \caption{Trajectory Bellman Residual Minimization (TBRM)}
    \label{alg:brm_mc}
    {\bfseries input:} Task prompt dataset $\Dcal_\mathsf{task}$, reward function $r$, reward scaling coefficient $\beta$, reference policy $\piref$ with parameter $\theta_\rf$, and number of iterations $T$.
    \begin{algorithmic}[1]
        \State Initialize $\theta \leftarrow \theta_\rf$.
        \For{$t = 1$ to $T$}
        \State Sample a batch $\mathcal{D}_\mathsf{task}^t\subset \Dcal_\mathsf{task}$.
        \State For each question $q\in \mathcal{D}_\mathsf{task}^t$, sample a trajectory $\tau$ from policy $\pi_\theta$ with initial state $s_1 = q$, and collect these trajectories into dataset $\Dcal_t$.
        \State Update $\theta$ via gradient descent to minimize $\Lcal_{\Dcal_t}^{\sf TBRM}(\theta)$ as defined in \cref{eq:brm_mc_obj}:
            \begin{small}
            \begin{align*}
            \nabla_\theta \frac{1}{|\Dcal_t|}\sum_{\tau\in\Dcal_t} \left(\logit_\theta(s_1,a_1) - \frac{r(\tau)}{\beta} - \log\piref(\tau) + \sum_{h=2}^{H}\log\pi_\theta(a_{h} \mid s_{h}) \right)^2,
            \end{align*}
            \end{small}
            where $\logit_\theta(s_1,a_1)$ is the raw logit of the first output token. 
        \EndFor
        \State \Return $\pi_\theta$.
        \end{algorithmic}
    \end{algorithm}
\neurips{\vspace{-2pt}}

As we formally prove in \cref{sec:tbrm_theory}, TBRM is a fully \emph{off-policy} algorithm with a directly optimizable objective, and it provably converges to a near-optimal policy with any off-policy data (though the degree of off-policyness may affect sample efficiency, \cref{eq:conc}). In contrast, policy-based counterparts are usually on-policy in nature: policy-gradient-based algorithms (like REINFORCE) require to sample new trajectories in an on-policy manner. PPO-like algorithms  require on-policy actions for their actor components but optimize a surrogate loss instead. PPO's critic update also requires on-policy rollouts.
In contrast, the off-policy nature of TBRM removes the need for additional techniques such as importance sampling ratios, clipping, critic models, or (multiple) on-policy rollouts.

Readers may question: \emph{why hasn't this simple variant of BRM received attention in the literature?} We conjecture that TBRM has been suspected to suffer from the curse of horizon, at least from the theoretical perspective. In the theory of offline RL \citep[e.g.,][]{chen2019information,xie2020q}, a key technique is to control the expected Bellman error $|\E_{\pi}[\sum_h ((\Tcal_\beta Q)(s_h,a_h) - Q(s_h,a_h)))]|$ on a certain (unavailable) distribution $d^\pi$ by instead minimizing the per-step squared Bellman error $\sum_h\E_{\mu}[((\Tcal_\beta Q)(s_h,a_h) - Q(s_h,a_h))^2]$ on the data distribution $d^\mu$ generated by $\mu$. This step only incurs the cost of the state-wise distribution-shift  $\frac{d^\pi(s_h,a_h)}{d^\mu(s_h,a_h)}$. When it comes to trajectory-level data, minimization of the square of expected Bellman error $\E_{\mu}[(\sum_h[(\Tcal_\beta Q)(s_h,a_h) - Q(s_h,a_h)])^2]$, as in TBRM, would appear to incur the trajectory-level distribution-shift cost $\Pi_h \frac{\pi(a_h \mid s_h)}{\mu(a_h \mid s_h)}$, and thus possibly cause an exponential blow-up with horizon $H$ compared to the state-wise case. However, the recent theoretical results \citep{jia2025we} challenge this conventional wisdom and indicate that the Markov property can be the key to avoiding trajectory-level distribution-shift when conducting trajectory-level change of measure.
In \cref{sec:tbrm_theory}, we formally prove that TBRM indeed only incurs state-wise distribution-shift regardless of its trajectory-level objective, and show that TBRM can efficiently converge to a near-optimal policy with finite-sample analysis.

\paragraph{Comparison with other related algorithms}
We note that algorithms with similar structure to TBRM have been derived previously from diverse perspectives in both deep RL \citep{haarnoja2017reinforcement,schulman2017equivalence,nachum2017bridging,haarnoja2018soft} and LLM applications \citep{guo2022efficient,ethayarajh2024kto,team2025kimi,ji2024enhancing,wang2024offline}. This convergence is unsurprising, as TBRM and related algorithms fundamentally aim to minimize Bellman error, albeit through different formulations and optimization approaches.
However, to the best of our knowledge, TBRM is the only optimization algorithm (rather than iterative ones like Q-learning; \cref{app:hard-instances} demonstrates the benefit of optimization over iteration) that requires only one rollout per prompt among all of these approaches. \cref{sec:comparison} provides a detailed comparison of TBRM with other related algorithms.
The present paper formally establishes finite-sample guarantees for TBRM.

\subsection{Theoretical Analysis of TBRM}
\label{sec:tbrm_theory}

\newcommand{\Qs}{Q^\star}
\newcommand{\Vs}{V^\star}
\newcommand{\ths}{{\theta^\star}}

We use $\Theta$ to denote the parameter space, equipped with norm $\nrm{\cdot}$. We assume the following standard realizability condition, which can be relaxed to hold approximately \citep[see, e.g.,][]{cheng2022adversarially}.

\begin{assumption}[Realizability]\label[assumption]{asmp:realizable}
There exists $\ths\in\Theta$ such that $Q_{\ths}=Q^\star$.
\end{assumption}

\paragraph{Motivation}
We first show that $\ths$ is the population minimizer of the TBRM loss~\eqref{eq:brm_mc_obj} through Bellman equation. Under the parametrization \eqref{eq:ar_func_approx}, $Q_{\ths}(s,a)=\logit_{\ths}(s,a)$ is the optimal soft Q-function for the transformed reward function $R_\beta(s,a)=\frac{r(s,a)}{\beta}+\log\piref(a\mid s)$, and the optimal value function is given by $V_{\ths}(s)=Q_{\ths}(s,a)-\log \pi_{\ths}(a\mid s)$. Therefore, 
the Bellman equation becomes (deterministically for a trajectory $\tau$ drawn from the MDP)
\begin{align*}
    Q_{\ths}(s_h,a_h)=R_\beta(s_h,a_h)+V_{\ths}(s_{h+1})= R_\beta(s_h,a_h)+Q_{\ths}(s_{h+1},a_{h+1})-\log \pi_{\ths}(a_{h+1}\mid s_{h+1}).
\end{align*}
Then, summing over $h=1,2,\cdots,H-1$ for any admissible trajectory $\tau$, we have
\begin{align*}
0\equiv&~ \sum_{h=1}^{H-1} \brac{ Q_{\ths}(s_h,a_h)-R_\beta(s_h,a_h)-Q_{\ths}(s_{h+1},a_{h+1})+\log \pi_{\ths}(a_{h+1}\mid s_{h+1}) } \\
=&~\logit_{\ths}(s_1,a_1) - R(\tau) + \sum_{h=2}^{H}\log\pi_{\ths}(a_{h} \mid s_{h}),
\end{align*}
where $R(\tau)=\frac{r(\tau)}{\beta}+\log\piref(\tau)$ is the trajectory transformed reward. Hence, it holds that $\Lcal_{\Dcal}^{\sf TBRM}(\ths)\equiv 0$ deterministically, and any approximate minimizer of the loss $\Lcal_{\Dcal}^{\sf TBRM}$ must also attain low trajectory Bellman residual.

The analysis above establishes a necessary condition for the optimal soft Q-function $Q_{\ths}(s,a)$ or $\logit_{\ths}(s,a)$.
Beyond this, the sub-optimality of a policy $\pi_\theta$ can also be related to the trajectory Bellman residual: through our analysis in \cref{app: perf-diff}, we can show that for any $\theta\in\Theta$,
\begin{align*}
J_\beta(\pi^\star) - J_\beta(\pi_\theta)
\leq &~ 2 \beta \max_{\pi\in\set{\pi^\star,\pi_\theta}} \left| \E_{\pi}\left[\logit_{\theta}(s_1,a_1) - R(\tau) + \sum_{h=2}^{H}\log\pi_{\theta}(a_{h} \mid s_{h})\right] \right|.
\end{align*}
Therefore, it remains to relate the expected trajectory Bellman residual under the off-policy distribution induced by $\mu$ and any policy $\pi$ through change-of-trajectory-measure.

\paragraph{Change-of-trajectory-measure}
A key to our analysis is the following improved version of the \emph{change-of-trajectory-measure lemma} \citep[]{jia2025we}.
Let $\chisq{P}{Q}=\EE_P[d P/dQ] - 1$ be the $\chi^2$-divergence.
Let $d_h^\pi(\cdot), d_h^{\mu}(\cdot)\in \Delta(\calS_h\times\calA)$  denote the occupancy measures of Markovian policies $\pi$ and $\mu$.
\begin{lemma}[Change-of-Trajectory-Measure Lemma]\label[lemma]{lem: change-of-trajectory-measure}
    Given an MDP $M = (H, \calS, \calA, \calP, r, \rho)$ and a policy $\pi$,
    \begin{align}\neurips{\textstyle}
        \label{eq:cotm-improved}
        \sup_{f}~ \frac{\left(\EE_{\pi}\left[\sum_{h=1}^H f(s_h, a_h)\right]\right)^2}{\EE_{\mu}\left[\left(\sum_{h=1}^H f(s_h, a_h)\right)^2\right]}\le 1+\sum_{h=1}^H \chisq{d_h^\pi}{d_h^{\mu}},
    \end{align}
    where the supremum is over all measurable functions $f: \calS\times\calA\to \RR$.
\end{lemma}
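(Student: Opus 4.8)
The plan is to avoid the trajectory-level importance weight $\prod_h \pi/\mu$ (whose second moment blows up exponentially in $H$) and instead \emph{localize} the change of measure to a single step via the performance-difference lemma, paying one divergence $\chisq{d_h^\pi}{d_h^\mu}$ per layer and controlling the aggregate through the variance of the return. Write $S := \sum_{h=1}^H f(s_h,a_h)$, $m := \EE_\mu[S]$, and $\sigma^2 := \mathrm{Var}_\mu(S)$, so that $\EE_\mu[S^2] = m^2 + \sigma^2$. I would first reduce the claim to the one-sided estimate
\[
\bigl|\EE_\pi[S] - \EE_\mu[S]\bigr| \;\le\; \sqrt{\textstyle\sum_{h=1}^H \chisq{d_h^\pi}{d_h^\mu}}\;\cdot\;\sigma ,
\]
since, writing $C := \sum_h \chisq{d_h^\pi}{d_h^\mu}$, the triangle inequality gives $|\EE_\pi[S]|\le |m|+\sqrt{C}\,\sigma$, and the elementary inequality $(|m|+\sqrt{C}\,\sigma)^2 \le (1+C)(m^2+\sigma^2)$ (equivalently $(\sqrt{C}\,|m|-\sigma)^2\ge 0$) then yields $(\EE_\pi[S])^2 \le (1+C)\,\EE_\mu[S^2]$, which is exactly \eqref{eq:cotm-improved}. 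If some $\chisq{d_h^\pi}{d_h^\mu}=\infty$ the bound is vacuous, so I may assume $d_h^\pi \ll d_h^\mu$ for all $h$ and set $u_h := d_h^\pi/d_h^\mu$ on $\calS_h\times\calA$ and $v_h(s) := d_h^\pi(s)/d_h^\mu(s)$ on $\calS_h$, where $d_h^\pi(s)=\sum_a d_h^\pi(s,a)$.

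To prove the one-sided estimate, I treat $f$ as a reward and introduce the $\mu$-value functions $Q_h^\mu(s,a) := \EE_\mu[\sum_{h'\ge h} f(s_{h'},a_{h'}) \mid s_h=s,\, a_h=a]$ and $V_h^\mu(s) := \EE_{a\sim\mu(\cdot\mid s)}[Q_h^\mu(s,a)]$, with advantage $A_h^\mu := Q_h^\mu - V_h^\mu$. The performance-difference lemma gives $\EE_\pi[S]-\EE_\mu[S] = \sum_{h=1}^H \EE_{(s_h,a_h)\sim d_h^\pi}[A_h^\mu(s_h,a_h)]$. Changing measure layerwise from $d_h^\pi$ to $d_h^\mu$ contributes only the state-action ratio $u_h$, and since $\EE_{a\sim\mu(\cdot\mid s)}[A_h^\mu(s,a)]=0$ I may subtract the state-only factor $v_h$ at no cost, obtaining the centered identity
\[
\EE_\pi[S]-\EE_\mu[S] \;=\; \sum_{h=1}^H \EE_{d_h^\mu}\bigl[(u_h - v_h)\,A_h^\mu\bigr].
\]

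Now I would apply Cauchy--Schwarz twice. Layerwise, $\EE_{d_h^\mu}[(u_h-v_h)A_h^\mu] \le \|u_h-v_h\|_{d_h^\mu}\,\|A_h^\mu\|_{d_h^\mu}$, and because $v_h-1$ is precisely the conditional expectation (orthogonal projection) of $u_h-1$ onto $\sigma(s_h)$ under $d_h^\mu$, Pythagoras gives $\|u_h-v_h\|_{d_h^\mu}^2 = \|u_h-1\|_{d_h^\mu}^2 - \|v_h-1\|_{d_h^\mu}^2 \le \chisq{d_h^\pi}{d_h^\mu}$. A second, cross-layer Cauchy--Schwarz then yields $\EE_\pi[S]-\EE_\mu[S] \le \sqrt{\sum_h \chisq{d_h^\pi}{d_h^\mu}}\cdot\sqrt{\sum_h \|A_h^\mu\|_{d_h^\mu}^2}$. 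The final ingredient is the variance decomposition $\sum_{h=1}^H \|A_h^\mu\|_{d_h^\mu}^2 \le \mathrm{Var}_\mu(S) = \sigma^2$: expanding $S-\EE_\mu[S]$ as a Doob martingale along the refined filtration $\sigma(s_1)\subseteq\sigma(s_1,a_1)\subseteq\sigma(s_1,a_1,s_2)\subseteq\cdots$, the \emph{action} increment at step $h$ equals exactly $A_h^\mu(s_h,a_h)$, so by orthogonality of martingale differences $\mathrm{Var}_\mu(S)$ equals $\sum_h \|A_h^\mu\|_{d_h^\mu}^2$ plus the nonnegative transition-increment terms. Combining the three displays gives the one-sided estimate and hence the lemma.

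The step I expect to be the crux is the variance decomposition $\sum_h \EE_{d_h^\mu}[(A_h^\mu)^2]\le \mathrm{Var}_\mu(S)\le\EE_\mu[S^2]$, which is exactly where the Markov property prevents an exponential-in-$H$ blow-up. A naive per-layer Cauchy--Schwarz applied directly to $\EE_\pi[S]-\EE_\mu[S]=\sum_h\EE_{d_h^\mu}[f_h(u_h-1)]$ would instead produce $\sum_h \EE_{d_h^\mu}[f_h^2]$, which is \emph{not} controlled by $\EE_\mu[(\sum_h f_h)^2]$ once the per-step terms are anticorrelated along trajectories; replacing the individual rewards $f_h$ by the advantages $A_h^\mu$ of the \emph{return} is what retains the cross-correlations and collapses the aggregate to the single quantity $\mathrm{Var}_\mu(S)$, yielding the additive rather than multiplicative dependence on the $\chi^2$ divergences.
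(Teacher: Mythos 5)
Your proof is correct, and at its core it runs on the same engine as the paper's: both arguments introduce the value function of $f$ under $\mu$ (your $V_h^\mu$ is the paper's $\barf$), decompose the return $S$ into martingale increments so that the Markov property kills the cross terms, pay one $\chisq{d_h^\pi}{d_h^{\mu}}$ per layer via Cauchy--Schwarz, and recombine across layers with a second Cauchy--Schwarz. The packaging differs in three genuine ways. First, you reach the layerwise decomposition through the performance-difference lemma and the action-advantages $A_h^\mu$, whereas the paper telescopes $S=\barf(s_1)+\sum_h\brk{f(s_h,a_h)+\barf(s_{h+1})-\barf(s_h)}$ directly and applies the $\chi^2$ step to the combined action-plus-transition increment (a function of $(s_h,a_h,s_{h+1})$, which still costs only $\chisq{d_h^\pi}{d_h^{\mu}}$ because the transition kernel is shared); your version works with functions of $(s_h,a_h)$ alone and simply discards the nonnegative transition-increment variance, which is slightly lossier at that step but changes nothing in the final constant. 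Second, you center at $\EE_\mu[S]$ and recover the full bound via the elementary inequality $(|m|+\sqrt{C}\sigma)^2\le(1+C)(m^2+\sigma^2)$, while the paper keeps $\barf(s_1)$ as the extra ``$+1$'' term inside the cross-layer Cauchy--Schwarz; these are interchangeable. Third, your Pythagoras refinement $\nrm{u_h-v_h}_{d_h^\mu}^2\le\chisq{d_h^\pi}{d_h^{\mu}}$ (projecting out the state-marginal ratio, valid because $\EE_{a\sim\mu(\cdot\mid s)}A_h^\mu(s,a)=0$) is a small sharpening the paper does not exploit. Your closing diagnosis of why the naive per-layer bound fails --- that one must change measure on the advantages of the return rather than on the raw $f_h$ to preserve cross-step correlations --- is exactly the right way to understand why the Markov property averts the exponential blow-up.
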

The proof of \cref{lem: change-of-trajectory-measure} significantly simplifies the one in \citet{jia2025we}, as shown in \cref{appendix:cotm_proof}.
As a direct corollary, the RHS of \cref{eq:cotm-improved} can be further upper bounded by $H\cdot \max_{h,s,a}\frac{d_h^\pi(s_h, a_h)}{d_h^{\mu}(s_h, a_h)}$, improving upon  \citet{jia2025we} by a factor of $H^2$.

Our main result will be stated in terms of the following \emph{concentrability coefficient} of the data collection policy $\mu$:
\begin{align}\label{eq:conc}
    \Conchis{\mu}\defeq 1+\max_{\theta\in\Theta}\max_{h\in[H]}\chisq{d^{\pi_{\theta}}_h}{d^\mu_h},
\end{align}
a notion weaker than the commonly-studied  $L_\infty$-concentrability $\Concinf{\mu}\defeq \max_{\pi,h,s,a} \frac{d^\pi_h(s,a)}{d^\mu_h(s,a)}$. %

The proof of the following theorem is deferred to \cref{app: proof-tbrm}.

\begin{theorem}\label{thm:tbrm_main}
Suppose $\htheta$ is a parameter that satisfies
$\Lcal_{\Dcal}^{\sf TBRM}(\htheta)-\inf_{\theta\in\Theta}\Lcal_{\Dcal}^{\sf TBRM}(\theta)\leq \epsopt$, data $\Dcal$ are i.i.d. according to $\mu$, and \cref{asmp:theta-bound,asmp:parametric-theta,asmp:realizable} hold. Then, with high probability, it holds that
\begin{align*}%
    J_{\beta}(\pi^\star) - J_{\beta}(\pi_{\htheta}) \leq \tbO\paren{ \sqrt{H\Conchis{\mu} \paren{\beta^2\epsopt+\frac{H^2\dim(\Theta)}{|\cD|}}} },
\end{align*}
where $\dim(\Theta)$ is the measure of the dimension of $\Theta$ defined in \cref{asmp:parametric-theta}. 
\end{theorem}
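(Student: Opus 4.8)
The plan is to chain three reductions: from the performance gap $J_\beta(\pi^\star)-J_\beta(\pi_{\htheta})$ to an \emph{expected} trajectory Bellman residual under a target policy, then to the \emph{population} TBRM loss under $\mu$, and finally to the \emph{empirical} loss that $\htheta$ approximately minimizes. Throughout I write $g_\theta(\tau)\defeq \logit_\theta(s_1,a_1)-R(\tau)+\sum_{h=2}^H\log\pi_\theta(a_h\mid s_h)$ for the trajectory residual, so that $\Lcal_\Dcal^{\sf TBRM}(\theta)=\frac1{|\Dcal|}\sum_{\tau\in\Dcal}g_\theta(\tau)^2$ with population version $\Lcal_\mu^{\sf TBRM}(\theta)=\E_\mu[g_\theta(\tau)^2]$. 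The structural observation that drives everything is that, in a deterministic MDP, $g_\theta$ telescopes into a sum of per-step residuals $g_\theta(\tau)=\sum_{h=1}^H f_\theta(s_h,a_h)$ with $f_\theta(s_h,a_h)=Q_\theta(s_h,a_h)-R_\beta(s_h,a_h)-V_\theta(s_{h+1})$, and this $f_\theta$ is a genuine function of $(s_h,a_h)$ alone because $s_{h+1}$ is determined by $(s_h,a_h)$. This Markovian form is exactly what lets me apply \cref{lem: change-of-trajectory-measure} and is the reason no exponential-in-$H$ factor appears.

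First I would apply the performance-difference bound from \cref{app: perf-diff}, namely $J_\beta(\pi^\star)-J_\beta(\pi_{\htheta})\le 2\beta\max_{\pi\in\{\pi^\star,\pi_{\htheta}\}}\abs{\E_\pi[g_{\htheta}(\tau)]}$. Under \cref{asmp:realizable} the optimal policy is $\pi^\star=\pi_{\ths}$ with $\ths\in\Theta$, so both policies in the maximum are realizable and their occupancies are controlled by the concentrability coefficient in \cref{eq:conc}. Next I would invoke \cref{lem: change-of-trajectory-measure} with $f=f_{\htheta}$ to obtain, for each such $\pi$,
\[
\abs{\E_\pi[g_{\htheta}(\tau)]}^2\le\Big(1+\sum_{h=1}^H\chisq{d_h^\pi}{d_h^\mu}\Big)\,\E_\mu[g_{\htheta}(\tau)^2]\le H\,\Conchis{\mu}\,\Lcal_\mu^{\sf TBRM}(\htheta),
\]
where the last step uses $1+\sum_h\chisq{d_h^\pi}{d_h^\mu}\le 1+H(\Conchis{\mu}-1)\le H\,\Conchis{\mu}$ together with the definition of $\Conchis{\mu}$. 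Combining the two estimates yields $J_\beta(\pi^\star)-J_\beta(\pi_{\htheta})\le 2\beta\sqrt{H\,\Conchis{\mu}\,\Lcal_\mu^{\sf TBRM}(\htheta)}$.

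It then remains to bound the population loss $\Lcal_\mu^{\sf TBRM}(\htheta)$. The Motivation computation shows $g_{\ths}(\tau)\equiv 0$ on every admissible trajectory, hence $\Lcal_\mu^{\sf TBRM}(\ths)=0$ and $\Lcal_\Dcal^{\sf TBRM}(\ths)=0$; since $\htheta$ is an $\epsopt$-approximate empirical minimizer, $\Lcal_\Dcal^{\sf TBRM}(\htheta)\le\epsopt$. Because the best-in-class attains zero loss, the squared residuals obey a Bernstein-type variance bound $\mathrm{Var}_\mu(g_\theta^2)\le B^2\,\E_\mu[g_\theta^2]$, where $B\defeq\sup_{\theta,\tau}\abs{g_\theta(\tau)}=\tbO(H/\beta)$ is the residual range fixed by \cref{asmp:theta-bound}. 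I would then run a one-sided uniform Bernstein/localization argument over the parametric class, whose metric entropy is captured by $\dim(\Theta)$ through \cref{asmp:parametric-theta}, to get $\Lcal_\mu^{\sf TBRM}(\theta)\lesssim\Lcal_\Dcal^{\sf TBRM}(\theta)+\tbO\!\big(B^2\dim(\Theta)/|\Dcal|\big)$ uniformly in $\theta$ with high probability. Specializing to $\htheta$ gives $\Lcal_\mu^{\sf TBRM}(\htheta)\lesssim\epsopt+\tbO\!\big(H^2\dim(\Theta)/(\beta^2|\Dcal|)\big)$.

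Finally I would substitute this into the Step-2 inequality and pull $2\beta$ inside the root, at which point the $\beta^{-2}$ in the statistical term cancels the $\beta^2$, producing
\[
J_\beta(\pi^\star)-J_\beta(\pi_{\htheta})\le\tbO\Big(\sqrt{H\,\Conchis{\mu}\big(\beta^2\epsopt+H^2\dim(\Theta)/|\Dcal|\big)}\Big).
\]
The main obstacle is the last reduction: obtaining the \emph{fast} $1/|\Dcal|$ rate for the squared-loss excess risk. A naive slow rate $\sqrt{\dim(\Theta)/|\Dcal|}$ would give the wrong exponent, so I must exploit realizability (the target has exactly zero loss) to verify the Bernstein condition and deploy a localized uniform-convergence bound, while carefully tracking the residual range $B=\tbO(H/\beta)$, which is exactly what cancels $\beta$ and yields the clean $H^2\dim(\Theta)/|\Dcal|$ statistical term. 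By contrast, the performance-difference and change-of-measure steps are light bookkeeping given the already-stated lemmas.
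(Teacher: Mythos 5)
Your proposal is correct and follows essentially the same route as the paper's proof: the performance-difference corollary (\cref{cor:all-pol-sub-opt}) reduces the suboptimality to the expected trajectory residual under $\pi^\star$ and $\pi_{\htheta}$, the change-of-trajectory-measure lemma (\cref{lem: change-of-trajectory-measure}) converts this to the population squared loss under $\mu$ at cost $H\Conchis{\mu}$, and realizability plus a variance-adapted (Freedman/Bernstein) uniform-convergence bound over a cover of $\Theta$ delivers the fast $H^2\dim(\Theta)/|\cD|$ statistical term with residual range $B=\tbO(H/\beta)$, exactly as in \cref{lem:TBRM-stat}. The only cosmetic difference is that you phrase the last step as a localized Bernstein argument while the paper invokes Freedman's inequality with a standard union bound, but these are the same mechanism exploiting that the best-in-class loss is exactly zero.
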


\section{Experiments}\label{sec:exp}
In this section, we present experiments to evaluate the performance of TBRM on reasoning tasks. We compare TBRM against two policy-based methods: GRPO and PPO. The codebase for the experiments is publicly available at \url{https://github.com/rlx-lab/TBRM}.

\subsection{Experiment Setup}\label{sec:exp-setup}

\paragraph{Datasets and models}
We train our models using the prompt set from DAPO \citep[][Apache license 2.0]{yu2025dapo}, which comprises approximately 17.4k math problems sourced from the AoPS\footnote{\url{https://artofproblemsolving.com/}} website and official competition homepages. All problems are standardized to have integer answers. To demonstrate the generality of our method across model scales, we conduct experiments using Qwen2.5-Math-1.5B and Qwen2.5-Math-7B.

\paragraph{Evaluation}
We assess the models' reasoning abilities on several standard math benchmarks: AIME24, AIME25, AMC23, MATH500 \citep{hendrycks2021measuring}, Minerva Math \citep{lewkowycz2022solving}, and OlympiadBench \citep{he2024olympiadbench}. For MATH500, Minerva Math, and OlympiadBench, we generate a single response per problem and report the overall accuracy, denoted as \textbf{Avg@1}. For the smaller benchmarks AIME24, AIME25, and AMC23, where performance can fluctuate due to limited data, we generate 32 responses per problem and average the accuracies to mitigate the intrinsic randomness of LLM outputs; this metric is denoted as \textbf{Avg@32}. Responses are sampled with temperature 0 for \textbf{Avg@1} and temperature 1.0 for \textbf{Avg@32}. We employ Math-Verify \citep[][Apache-2.0 license]{Kydlicek_Math-Verify_Math_Verification} as the verifier.

\paragraph{Implementation details}
We implement our methods and baselines using the VERL framework \citep[][Apache-2.0 license]{sheng2024hybridflow}, following most of the recommended hyperparameter settings for GRPO and PPO. To balance performance and efficiency, we use a prompt batch size of 128 and a response length of 2048 tokens per training step. For PPO, we generate one response per prompt ($n = 1$), while for GRPO, which requires multiple rollouts, we generate four responses ($n = 4$). On TBRM, we experiment with both settings ($n = 1$ and $n = 4$). All responses are sampled with a temperature of 1.0. For TBRM, we set $\beta = 0.002$ across all experiments. All models are trained for the same number of steps. More details of our implementation can be found in \cref{appendix:impl}. 

\subsection{Main Results}\label{sec:exp-main-result}
The effectiveness of our algorithm is demonstrated in \cref{tab:results}. Across six challenging math benchmarks, TBRM consistently matches or surpasses its comparable baselines. Specifically, with a single rollout per prompt, TBRM$_{n=1}$ achieves higher accuracies than PPO$_{n=1}$ on most benchmarks and matches the performance of GRPO$_{n=4}$, despite the latter using four times as many samples during training. Notably, our Qwen2.5-Math-1.5B-based model attains 13.2\% accuracy on AIME24, outperforming both the Qwen2.5-Math-7B base model and the 1.5B GRPO$_{n=4}$ model. When increasing the number of sampled responses to four, TBRM$_{n=4}$ surpasses baselines by a larger margin. On AIME24, our 1.5B model reaches 14.3\% accuracy, while the 7B model further advances to 30.5\%, exceeding GRPO$_{n=4}$ by 1.6\%. Additional results with more rollouts are presented in \cref{app:more-rollouts}.

\renewcommand{\arraystretch}{1.2}
\newcommand{\auxinfofont}[1]{{\scriptsize \textbf{#1}}}
\definecolor{bgcolor}{HTML}{D0E0F7}
\begin{table*}[ht!]
    \begin{adjustbox}{width=\textwidth,center}
    \centering
    \begin{tabular}{lcccccc}
        \toprule
        \textbf{Method} & \textbf{AIME24} & \textbf{AIME25} & \textbf{AMC23} & \textbf{MATH500} & \textbf{Minerva Math} & \textbf{OlympiadBench} \\
        & \auxinfofont{Avg@32} & \auxinfofont{Avg@32} & \auxinfofont{Avg@32} & \auxinfofont{Avg@1} & \auxinfofont{Avg@1} & \auxinfofont{Avg@1} \\
        \midrule
        \textbf{Qwen2.5-Math-1.5B} & 5.0 & 1.9 & 24.9 & 63.4 & 16.5 & 30.8 \\
        \makebox[3em][l]{PPO} \auxinfofont{$n=1$} & 11.4 & 4.5 & 46.6 & \underline{\textbf{72.2}} & 26.8 & \underline{36.0} \\
        \rowcolor{bgcolor} \makebox[3em][l]{TBRM} \auxinfofont{$n=1$} & \underline{13.2} & \underline{5.6} & \underline{48.6} & \underline{\textbf{72.2}} & \underline{27.2} & 35.7 \\
        \makebox[3em][l]{GRPO} \auxinfofont{$n=4$} & 13.0 & \textbf{7.1} & 49.9 & 71.2 & 28.7 & \textbf{37.5} \\
        \rowcolor{bgcolor} \makebox[3em][l]{TBRM} \auxinfofont{$n=4$} & \textbf{14.3} & 6.9 & \textbf{52.0} & \textbf{72.2} & \textbf{30.5} & 36.1 \\
        \midrule
        \textbf{Qwen2.5-Math-7B} & 10.6 & 2.8 & 31.6 & 67.4 & 13.2 & 29.3 \\
        \makebox[3em][l]{PPO} \auxinfofont{$n=1$} & \underline{25.4} & \underline{\textbf{13.2}} & \underline{63.4} & 76.4 & 33.8 & 39.3 \\
        \rowcolor{bgcolor} \makebox[3em][l]{TBRM} \auxinfofont{$n=1$} & 24.1 & \underline{\textbf{13.2}} & \underline{63.4} & \underline{78.6} & \underline{\textbf{36.4}} & \underline{41.5} \\
        \makebox[3em][l]{GRPO} \auxinfofont{$n=4$} & 28.9 & 10.7 & 66.8 & \textbf{79.8} & 36.0 & 42.5 \\
        \rowcolor{bgcolor} \makebox[3em][l]{TBRM} \auxinfofont{$n=4$} & \textbf{30.5} & 13.1 & \textbf{68.4} & \textbf{79.8} & \textbf{36.4} & \textbf{44.1} \\
        \bottomrule
    \end{tabular}
    \end{adjustbox}
    \caption{Performance of various methods on math benchmarks, where $n$ denotes the number of responses sampled per prompt during training. For each benchmark, the highest accuracy across all methods is bolded, and the highest accuracy among methods with $n = 1$ is underscored.}
    \label{tab:results}
\end{table*}

\begin{figure}[th]
    \centering
    \begin{subfigure}[t]{0.655\textwidth}
        \includegraphics[width=1.0\linewidth]{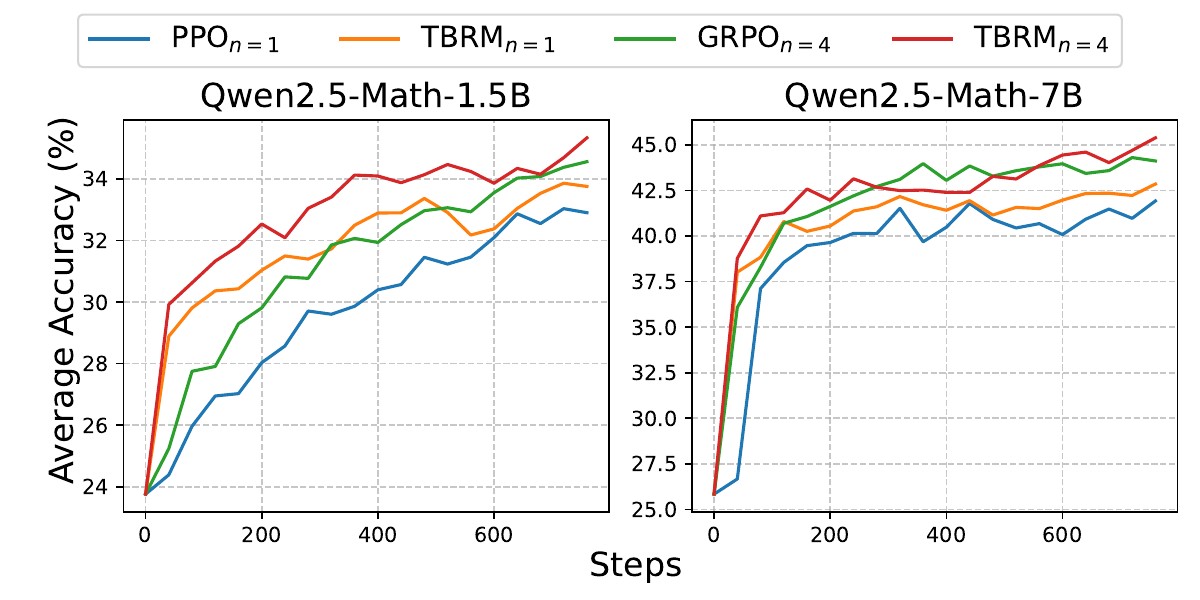}
        \caption{Average learning accuracy across benchmarks}
        \label{fig:average_accuracy}
    \end{subfigure}
    \begin{subfigure}[t]{0.325\textwidth}
        \includegraphics[width=1.0\linewidth]{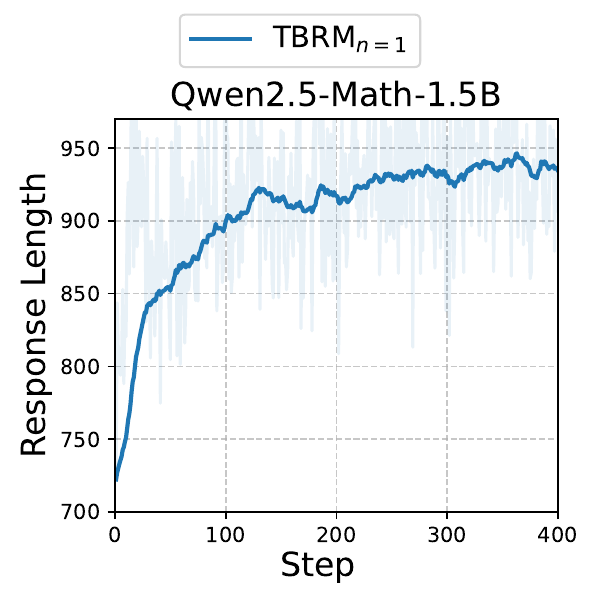}
        \caption{Train-time response length. }
        \label{fig:response_length}
    \end{subfigure}
    \caption{\cref{fig:average_accuracy} shows average learning accuracy across benchmarks for PPO, GRPO, and TBRM. Per-benchmark results can be found in \cref{appendix:perbench_accuracy}. \cref{fig:response_length} demonstrates model's ability to engage in extended reasoning at test time with R1 template.
    }
\end{figure}

\cref{fig:average_accuracy} illustrates the step-wise average performance across all benchmarks for TBRM and baseline methods. While all approaches demonstrate improved reasoning with increased training data, TBRM consistently exhibits a superior convergence rate and achieves higher absolute performance than its counterparts. Notably, TBRM$_{n=4}$ attains the highest performance throughout nearly the entire training duration. Futhermore, TBRM$_{n=1}$ outperforms PPO$_{n=1}$ and performs similarly to GRPO$_{n=4}$, especially on 1.5B model, with only a mild gap in their results.

\subsection{Training Dynamics and Performance Analysis}

\paragraph{Reward}
We present the training reward curves in \cref{fig:reward}, which shows that TBRM achieves comparable reward levels to its baselines. Furthermore, TBRM demonstrates a faster convergence rate during early training. This is particularly evident with the 1.5B model, where TBRM attains significantly higher rewards than PPO and GRPO. 

\begin{figure}[th]
    \centering
    \begin{subfigure}[t]{0.655\textwidth}
        \includegraphics[width=1.0\linewidth]{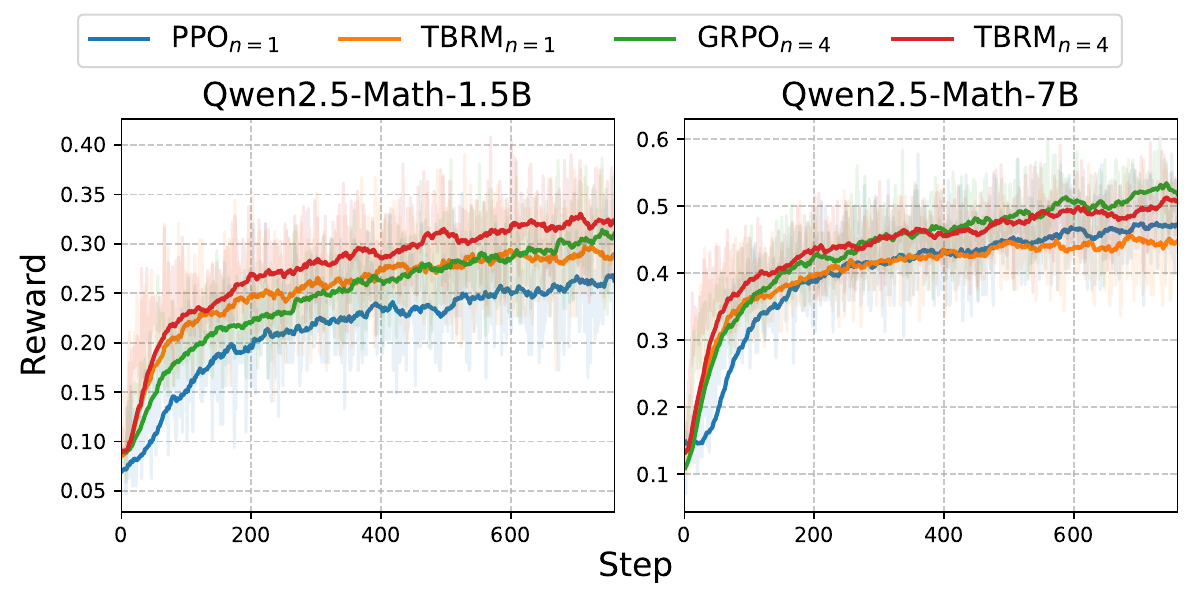}
        \caption{Training reward curves. }
        \label{fig:reward}
    \end{subfigure}
    \begin{subfigure}[t]{0.325\textwidth}
        \includegraphics[width=1.0\linewidth]{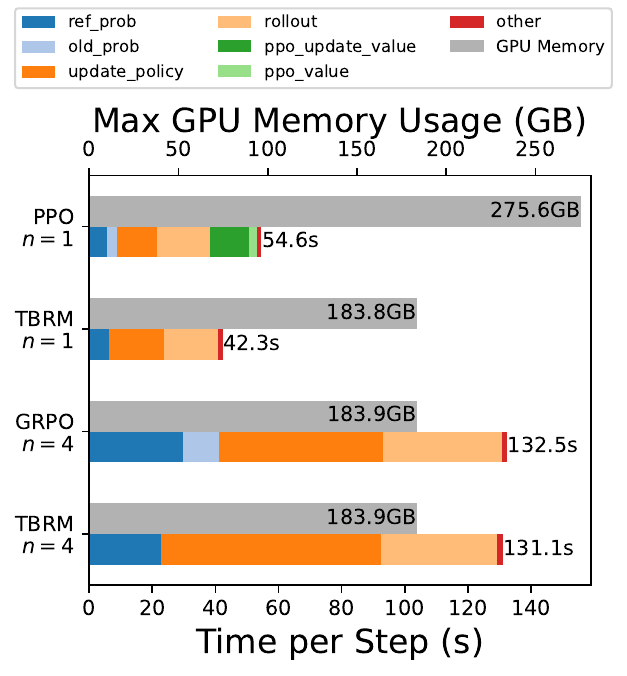}
        \caption{Train-time overhead. }
        \label{fig:train_eff}
    \end{subfigure}
    \caption{\cref{fig:reward} demonstrates the increment of rewards during training. \cref{fig:train_eff} is a comparison of maximal GPU memory consumption and per-step time cost across different methods with base model Qwen2.5-Math-7B. Time cost is segmented into key partitions, with time partition labels defined in \cref{tab:train-eff-label-desc} (\cref{appendix:train-eff}).
    }
\end{figure}

\paragraph{Response length}\label{par:response_length}
Prior work has shown that reinforcement learning can enhance a model's ability to solve increasingly complex reasoning tasks by leveraging extended test-time computation, as reflected in progressively longer responses during training \citep{guo2025deepseek,zeng2025simplerl,liu2025understanding}. We find that TBRM exhibits a similar capability. Specifically, we adopt the prompt template from DeepSeek-R1 \citep{guo2025deepseek} and apply TBRM training to Qwen2.5-Math-1.5B. Following previous studies \citep{zeng2025simplerl}, we include only responses that terminate under normal conditions---excluding those truncated due to length limits---as truncated outputs often suffer from repetition and incompleteness. As illustrated in \cref{fig:response_length}, TBRM encourages the model to explore and refine its reasoning more deeply over time, enabling models to take advantage of extended computation at test time to improve reasoning performance. 

\paragraph{Training efficiency}\label{par:train-eff}
TBRM offers substantial implementation simplicity relative to existing approaches. Specifically, it eliminates the need for critic models $V_\phi$, as required by PPO, and avoids the necessity of sampling multiple responses per prompt ($n > 1$), as in GRPO. Moreover, due to its fully off-policy nature, TBRM does not require multiple updates per training step. We conduct experiments using Qwen2.5-Math-7B under consistent training conditions (see \cref{appendix:train-eff} for details) and report the peak GPU memory usage and wall-clock time per training step for each method in \cref{fig:train_eff}. Overall, TBRM exhibits matched or lower resource consumption compared to its counterparts. When $n = 1$, TBRM$_{n=1}$ uses $33.3\%$ less GPU memory than PPO$_{n=1}$ and achieves a $1.3\times$ speedup. For $n = 4$, TBRM$_{n=4}$ demonstrates comparable resource usage to GRPO$_{n=4}$. Notably, TBRM$_{n=1}$ achieves a $3.1\times$ training speedup relative to GRPO$_{n=4}$, while yielding similar performance despite sampling only a single response---GRPO$_{n=4}$ outperforms TBRM$_{n=1}$ by only $0.80\%$ with the 1.5B model and $1.27\%$ with the 7B model on average across math benchmarks.

\paragraph{Extended Analysis}
We further analyze the responses generated by the TBRM models and identify several notable reasoning patterns, including verification, backtracking, decomposition, and enumeration. Illustrative examples of these patterns are provided in \cref{appendix:qual-analysis}. In \cref{app:classical-brm}, we compare TBRM with the classical, token-level BRM formulation and show that directly applying BRM to LLMs leads to unstable training and reward collapse, underscoring the importance of the trajectory-level design. In \cref{app:reasoning-gym}, we evaluate TBRM on a suite of non-mathematical reasoning tasks, demonstrating its ability to generalize beyond the mathematical domain.

\arxiv{\section{Related Works}
\label{sec:related}

\paragraph{Value-based RL}
Value-based methods are arguably the oldest and most widely studied concepts of reinforcement learning (RL) algorithms \citep{bellman1957markovian,samuel1959some}. They seek to learn an approximation of the optimal state-action-value function $Q^\star$ and act greedily with respect to it, in contrast to policy-gradient methods that directly optimize a parameterized policy. Early works such as Q-learning \citep{watkins1992q}, SARSA \citep{sutton1998reinforcement}, Approximate Dynamic Programming \citep{bertsekas1996neuro} established the foundations, while the successive studies introduced function approximation \citep{sutton1988learning,bertsekas1996neuro}, eligibility traces \citep{singh1996reinforcement}, and residual updates \citep{baird1995residual}.
The combination of value-based ideas with deep neural networks culminated in the Deep Q-Network \citep{mnih2015human}, which sparked a wave of extensions including Double DQN \citep{hasselt2010double,van2016deep}, distributional learning \citep{bellemare2017distributional}, and the integrative DQN-based agent \citep{hessel2018rainbow}.

\paragraph{KL-regularized RL}
KL-regularized (or entropy-regularized) reinforcement learning (RL) originated from the maximum-entropy formulation of \citet{ziebart2008maximum,ziebart2010modeling,neu2017unified}, where a Kullback--Leibler (KL) penalty encourages policies to stay close to a reference distribution while optimizing reward. Different styles of algorithms have emerged from this line of work, including Soft Q-Learning (SQL) style algorithms such as SQL itself \citep{haarnoja2017reinforcement,schulman2017equivalence,guo2022efficient}, Soft Actor-Critic (SAC) style algorithms like PCL \citep{nachum2017bridging}, SAC \citep{haarnoja2018soft}, DQO \citep{ji2024enhancing}, and OREO \citep{wang2024offline}, Point-Wise Direct Alignment Algorithms (DAA-pt) such as KTO \citep{ethayarajh2024kto}, DRO \citep{richemond2024offline}, and an online policy mirror descent variant \citep{team2025kimi}, and Pair-Wise Direct Alignment Algorithms (DAA-pair) like DPO \citep{rafailov2023direct,rafailov2024r} and IPO \citep{azar2024general}. In \cref{sec:comparison}, we provide a comprehensive discussion of the differences between TBRM and these related algorithms.

\paragraph{RL training for LLM reasoning}
RL has played a pivotal role in the post-training of LLMs. The most prominent early example is reinforcement learning from human feedback \citep[RLHF;][]{ouyang2022training,bai2022training}, which uses PPO to align LLMs with human preferences. A series of subsequent works introduced contrastive learning objectives based on pairwise datasets \citep{rafailov2024r,zhao2023slic,azar2024general,tang2024generalized}, or verification-driven objectives using a binary verifier \citep{ethayarajh2024kto,chen2025bridging}. The release of OpenAI's O1 \citep{jaech2024openai} and DeepSeek's R1 \citep{guo2025deepseek} marked a new era of RL algorithms for LLMs---particularly for reasoning tasks---by framing the response generation process as a Markov Decision Process (MDP) and using rule-based verifiers to provide reward signals. Numerous studies have demonstrated and analyzed the effectiveness of RL algorithms in enhancing LLM reasoning capabilities, with PPO and GRPO emerging as the most widely adopted approaches. Prior studies, such as SimpleRL-Zoo \citep{zeng2025simplerl}, Open Reasoner Zero \citep{hu2025openreasonerzeroopensourceapproach}, Light-r1 \citep{wen2025light}, Logic-rl \citep{xie2025logic}, and Skywork-OR1 \citep{skywork-or1-2025}, fall in this category. Variants of these algorithms have been proposed to further improve performance \citep{zhang2025100}. For instance, DAPO \citep{yu2025dapo} enhances GRPO with techniques like clip-higher, dynamic sampling, and token-level policy gradient loss, achieving strong results on AIME24. Dr.~GRPO \citep{liu2025understanding} addresses optimization bias in GRPO to improve token efficiency, while CPPO \citep{lin2025cppo} reduces its computational cost by skipping rollouts with low advantages. VC-PPO \citep{yuan2025s} resolves PPO's challenges with value initialization bias and delayed reward signals through value pretraining and decoupled-GAE. Building on this, VAPO \citep{yuan2025vapo} improve DAPO further by incorporating selected techniques from VC-PPO. Additionally, several works explore REINFORCE \citep{williams1991function,williams1992simple} style RL algorithms, including ReMax \citep{li2023remax}, REINFORCE++ \citep{hu2025reinforce++}, RAFT++ \citep{xiong2025minimalist}, and RLOO \citep{kool2019buy,ahmadian2024back}. However, all these approaches rely on policy-based methods or their variants. In contrast, our method adopts a value-based, off-policy RL approach that is principled, efficient, and theoretically grounded.

}
\arxiv{\section{Comparison with Related Algorithms}
\label{sec:comparison}

In this section, we compare TBRM with other related algorithms in detail.
We group the related algorithms into the following categories:
\begin{itemize}
    \item \textbf{Soft Q-Learning (SQL) Style}: SQL \citep{haarnoja2017reinforcement,schulman2017equivalence,guo2022efficient}
    \item \textbf{Soft Actor-Critic (SAC) Style}: PCL \citep{nachum2017bridging}, SAC \citep{haarnoja2018soft}, DQO \citep{ji2024enhancing}, OREO \citep{wang2024offline}
    \item \textbf{Point-Wise Direct Alignment Algorithms (DAA-pt)}: KTO \citep{ethayarajh2024kto}, DRO \citep{richemond2024offline}, online policy mirror decent variant \citep{team2025kimi}
    \item \textbf{Pair-Wise Direct Alignment Algorithms (DAA-pair)}: DPO \citep{rafailov2023direct,rafailov2024r}, IPO \citep{azar2024general}
\end{itemize}
It is important to note that the present paper primarily addresses LLM reasoning in environments where the state space is tokenized and the base model operates autoregressively. Several algorithms mentioned above were initially developed for continuous control domains such as robotics; however, our analysis considers only their adaptation to the discrete, tokenized setting relevant to language models, as in this paper. Given space constraints, we restrict our discussion to the fundamental principles underlying each algorithmic category rather than providing exhaustive implementation details.

We first present \cref{tab:comparison} to summarize the key differences between TBRM and other algorithms. Note that this comparison is only for algorithm design; the consequences of these differences for theoretical guarantees are likely to be more significant. However, given that TBRM is the only algorithm here with established finite-sample guarantees under the more general MDP setting, we will leave the theoretical comparison to future work.

\begin{table*}[ht]
    \centering
    \begin{adjustbox}{width=\linewidth}
    \begin{tabular}{lcccc}
    \hline
    \bf Algorithms & \bf Optimization & \bf Single~Rollout & \bf Single~Model~Training & \bf Traj.~Reward~Allowed
    \\
    \hline
    \bf SQL        & {\color{Crimson}\xmark}            & {\color{LimeGreen}\cmark}              & {\color{LimeGreen}\cmark} & {\color{Crimson}\xmark}
    \\
    \bf SAC        & {\color{LimeGreen}\cmark}            & {\color{LimeGreen}\cmark}              & {\color{Crimson}\xmark} & {\color{Crimson}\xmark}
    \\
    \bf DAA-pt     & {\color{LimeGreen}\cmark}            & {\color{Crimson}\xmark}              & {\color{LimeGreen}\cmark} & {\color{LimeGreen}\cmark}
    \\
    \bf DAA-pair   & {\color{LimeGreen}\cmark}            & {\color{Crimson}\xmark}              & {\color{LimeGreen}\cmark} & {\color{LimeGreen}\cmark}
    \\
    \rowcolor{bgcolor} \bf TBRM       & {\color{LimeGreen}\cmark}   & {\color{LimeGreen}\cmark}     & {\color{LimeGreen}\cmark} & {\color{LimeGreen}\cmark}
    \\ \hline
    \end{tabular}
    \end{adjustbox}
\caption{Comparison between TBRM and related algorithms in terms of algorithm design.}
\label{tab:comparison}
\end{table*}

For the ease of comparison, we rewrite the loss function of TBRM as follows, by the definition of the autoregressive function approximation:
\begin{align*}
\Lcal_{\Dcal}^{\sf TBRM}({\color{Crimson}\theta}) = &~ \frac{1}{|\Dcal|}\sum_{\tau\in\Dcal} \left(\logit_{\color{Crimson}\theta}(s_1,a_1) - \frac{r(\tau)}{\beta} - \log\piref(\tau) + \sum_{h=2}^{H}\log\pi_{\color{Crimson}\theta}(a_{h} \mid s_{h}) \right)^2
\\
= &~ \frac{1}{|\Dcal|}\sum_{\tau\in\Dcal} \left(V_{\color{Crimson}\theta}(s_1) - \frac{r(\tau)}{\beta} + \sum_{h=1}^{H}\log\frac{\pi_{\color{Crimson}\theta}(a_{h} \mid s_{h})}{\piref(a_{h} \mid s_{h})} \right)^2.
\end{align*}

\paragraph{Comparing TBRM with Soft Q-Learning Style Algorithms}
The soft Q-learning based algorithms are typically iterative algorithms, formulated as two different versions: single-step case and multi-step case. We consider the loss used in  \citet{guo2022efficient} which is motivated by \emph{path consistency learning} \citep[PCL;][]{nachum2017bridging}. The single-step case is then formulated as
\begin{align*}
\theta_{t+1} \leftarrow \argmin_\theta \Lcal_{\Dcal}^{\sf SQL\text{-}s}({\color{Crimson}\theta}; {\color{DodgerBlue}\theta_t}) \coloneqq \frac{1}{|\Dcal|} \sum_{\tau\in\Dcal} \sum_{h=1}^{H} \left(V_{\color{DodgerBlue}\theta_t}(s_h) - \frac{r(s_h,a_h)}{\beta} + \log\frac{\pi_{\color{Crimson}\theta}(a_{h} \mid s_{h})}{\piref(a_{h} \mid s_{h})} - V_{\color{DodgerBlue}\theta_t}(s_{h+1}) \right)^2,
\end{align*}
while the multi-step case is formulated as
\begin{align*}
    \theta_{t+1} \leftarrow \argmin_\theta \Lcal_{\Dcal}^{\sf SQL\text{-}m}({\color{Crimson}\theta}; {\color{DodgerBlue}\theta_t}) \coloneqq \frac{1}{|\Dcal|} \sum_{\tau\in\Dcal} \sum_{h=1}^{H} \left(V_{\color{DodgerBlue}\theta_t}(s_h) - \sum_{h'=h}^{H}\frac{r(s_{h'},a_{h'})}{\beta} + \sum_{h'=h}^{H}\log\frac{\pi_{\color{Crimson}\theta}(a_{h'} \mid s_{h'})}{\piref(a_{h'} \mid s_{h'})} \right)^2.
\end{align*}

If we want to exactly match the original loss of soft Q-learning \citep{haarnoja2017reinforcement,schulman2017equivalence}, then these should be rewritten as
\neurips{\begin{small}}
\begin{gather*}
    \theta_{t+1} \leftarrow \argmin_\theta \widetilde\Lcal_{\Dcal}^{\sf SQL\text{-}s}({\color{Crimson}\theta}; {\color{DodgerBlue}\theta_t}) \coloneqq \frac{1}{|\Dcal|} \sum_{\tau\in\Dcal} \sum_{h=1}^{H} \left(\logit_{\color{Crimson}\theta}(s_h) - \frac{r(s_h,a_h)}{\beta} - \log\piref(a_{h} \mid s_{h}) - V_{\color{DodgerBlue}\theta_t}(s_{h+1}) \right)^2,
    \\
    \theta_{t+1} \leftarrow \argmin_\theta \widetilde\Lcal_{\Dcal}^{\sf SQL\text{-}m}({\color{Crimson}\theta}; {\color{DodgerBlue}\theta_t}) \coloneqq \frac{1}{|\Dcal|} \sum_{\tau\in\Dcal} \sum_{h=1}^{H} \left(\logit_{\color{Crimson}\theta}(s_h) - \sum_{h'=h}^{H}\frac{r(s_{h'},a_{h'})}{\beta} - \log\piref(a_{h} \mid s_{h}) + \sum_{h'=h+1}^{H}\log\frac{\pi_{\color{DodgerBlue}\theta_t}(a_{h'} \mid s_{h'})}{\piref(a_{h'} \mid s_{h'})} \right)^2.
\end{gather*}
\neurips{\end{small}}
There can be a more general multi-step version, which blends the multi-step return in $\Lcal_{\Dcal}^{\sf SQL\text{-}m}$ and the value bootstrap in $\Lcal_{\Dcal}^{\sf SQL\text{-}s}$, but we omit it here for brevity as our existing argument would directly extend to this case.

From the derivation above, we can identify two key distinctions between TBRM and soft Q-learning based algorithms: 1) TBRM employs direct optimization rather than an iterative approach, and 2) TBRM's loss function operates on complete trajectories rather than summing losses over individual timesteps within trajectories, hence eliminating the need for per-step reward.

\paragraph{Comparing TBRM with Soft Actor-Critic Style Algorithms}
The soft actor-critic style algorithms for LLMs are similar to SQL, but they 1) introduce a separate $V$ model; 2) operate as optimization rather than iteration. In particular, in the single-step case, 
\begin{align*}
\argmin_{\theta,\phi} \Lcal_{\Dcal}^{\sf SAC\text{-}s}({\color{Crimson}\theta}, {\color{DodgerBlue}\phi}) \coloneqq \frac{1}{|\Dcal|} \sum_{\tau\in\Dcal} \sum_{h=1}^{H} \left(V_{\color{DodgerBlue}\phi}(s_h) - \frac{r(s_h,a_h)}{\beta} + \log\frac{\pi_{\color{Crimson}\theta}(a_{h} \mid s_{h})}{\piref(a_{h} \mid s_{h})} - V_{\color{DodgerBlue}\phi}(s_{h+1}) \right)^2,
\end{align*}
while the multi-step case is
\begin{align*}
    \argmin_{\theta,\phi} \Lcal_{\Dcal}^{\sf SAC\text{-}m}({\color{Crimson}\theta}, {\color{DodgerBlue}\phi}) \coloneqq \frac{1}{|\Dcal|} \sum_{\tau\in\Dcal} \sum_{h=1}^{H} \left(V_{\color{DodgerBlue}\phi}(s_h) - \sum_{h'=h}^{H}\frac{r(s_{h'},a_{h'})}{\beta} + \sum_{h'=h}^{H}\log\frac{\pi_{\color{Crimson}\theta}(a_{h'} \mid s_{h'})}{\piref(a_{h'} \mid s_{h'})} \right)^2.
\end{align*}

\paragraph{Comparing TBRM with Point-Wise Direct Alignment Algorithms}

Perhaps surprisingly, among all four categories of algorithms, the point-wise direct alignment algorithms appear to be the most similar to TBRM, although they are derived from a  different perspective (mostly from bandit formulation). We view the core objective of these algorithms as optimizing the following loss
\begin{align*}
    \argmin_{\theta} \Lcal_{\Dcal}^{\sf DAA\text{-}pt}({\color{Crimson}\theta})
    = \frac{1}{|\Dcal|}\sum_{\tau\in\Dcal} \left(\widehat V_\beta^\star(s_1) - \frac{r(\tau)}{\beta} + \sum_{h=1}^{H}\log\frac{\pi_{\color{Crimson}\theta}(a_{h} \mid s_{h})}{\piref(a_{h} \mid s_{h})} \right)^2,
\end{align*}
where $\widehat V_\beta^\star$ is an estimate of $V_\beta^\star$ (the optimal value function for KL-regularized MDP).
Note that $V_\beta^\star$ is exactly the same as the partition function in the bandit formulation. One popular way to estimate $\widehat V_\beta^\star$ is by using the softmax of returns from multiple rollouts for each question,\footnote{$\widehat V_\beta^\star$ can also be estimated using a separate model \citep{richemond2024offline}, similar to our discussion of SAC-style algorithms.} for example \citep{team2025kimi},
\begin{align*}
\widehat V_\beta^\star(s_1) \leftarrow \beta \log\sum_{\tau \sim \pi_{\theta} \mid s_1}\exp\left( \frac{r(\tau)}{\beta} \right).
\end{align*}
However, it is unclear whether this estimate is accurate enough, particularly when the rollout policy $\pi_{\theta}$ differs significantly from the optimal policy $\pi^\star_\beta$.
In contrast, TBRM leverages 1) Bellman equation in KL-regularized RL (see, e.g., \cref{sec:arfa}) and 2) recent advances in change of trajectory measure, which allows us to directly use $V_{\theta}$ instead of requiring $\widehat V_\beta^\star$. This approach enables TBRM to provably converge to a near-optimal policy using only a single rollout per prompt, while maintaining the advantages of a direct optimization algorithm.

\paragraph{Comparing TBRM with Pair-Wise Direct Alignment Algorithms}
Under the perspective above, we can view the motivation of pair-wise direct alignment algorithms as using a pair of responses from the same question $s_1$ to cancel the need for $V_{\theta}(s_1)$, which leads to the following objective:
\begin{align*}
    \argmin_{\theta} \Lcal_{\Dcal}^{\sf DAA\text{-}pair}({\color{Crimson}\theta})
    = \frac{1}{|\Dcal|}\sum_{(\tau, \tau')\in\Dcal} \left(\sum_{h=1}^{H}\log\frac{\pi_{\color{Crimson}\theta}(a_{h} \mid s_{h})}{\piref(a_{h} \mid s_{h})} - \sum_{h=1}^{H}\log\frac{\pi_{\color{Crimson}\theta}(a'_{h} \mid s'_{h})}{\piref(a'_{h} \mid s'_{h})} - \frac{r(\tau)}{\beta} + \frac{r(\tau')}{\beta}\right)^2.
\end{align*}
Comparing with TBRM, the pair-wise direct alignment algorithms are basically optimizing the difference between the Bellman residuals of two trajectories.
}

\section{Conclusion}\label{sec:conclusion}

In this paper, we present TBRM, a simple yet effective value-based RL algorithm for enhancing LLM reasoning capabilities. TBRM operates efficiently with just a single rollout per prompt and employs a lightweight optimization objective, eliminating the need for critics, importance-sampling ratios, or clipping mechanisms that are commonly required in policy-based approaches. Our theoretical analysis demonstrates that TBRM is guaranteed to converge to a near-optimal policy using off-policy data, while our empirical results show its effectiveness across standard mathematical reasoning benchmarks. We hope this work may contribute to expanding interest in value-based approaches for LLM reasoning, potentially complementing the policy-based algorithms that currently dominate LLM post-training methods.

\section*{Acknowledgements}
We acknowledge support of the Simons Foundation and the NSF through awards DMS-2031883 and PHY-2019786,  ARO through award W911NF-21-1-0328, and the DARPA AIQ award.

\neurips{
\clearpage}
\bibliographystyle{plainnat}
\bibliography{ref}

\begin{thebibliography}{71}
\providecommand{\natexlab}[1]{#1}
\providecommand{\url}[1]{\texttt{#1}}
\expandafter\ifx\csname urlstyle\endcsname\relax
  \providecommand{\doi}[1]{doi: #1}\else
  \providecommand{\doi}{doi: \begingroup \urlstyle{rm}\Url}\fi

\bibitem[Ahmadian et~al.(2024)Ahmadian, Cremer, Gall{\'e}, Fadaee, Kreutzer, Pietquin, {\"U}st{\"u}n, and Hooker]{ahmadian2024back}
Arash Ahmadian, Chris Cremer, Matthias Gall{\'e}, Marzieh Fadaee, Julia Kreutzer, Olivier Pietquin, Ahmet {\"U}st{\"u}n, and Sara Hooker.
\newblock Back to basics: Revisiting reinforce style optimization for learning from human feedback in llms.
\newblock \emph{arXiv preprint arXiv:2402.14740}, 2024.

\bibitem[Azar et~al.(2024)Azar, Guo, Piot, Munos, Rowland, Valko, and Calandriello]{azar2024general}
Mohammad~Gheshlaghi Azar, Zhaohan~Daniel Guo, Bilal Piot, Remi Munos, Mark Rowland, Michal Valko, and Daniele Calandriello.
\newblock A general theoretical paradigm to understand learning from human preferences.
\newblock In \emph{International Conference on Artificial Intelligence and Statistics}, pages 4447--4455. PMLR, 2024.

\bibitem[Bai et~al.(2022)Bai, Jones, Ndousse, Askell, Chen, DasSarma, Drain, Fort, Ganguli, Henighan, et~al.]{bai2022training}
Yuntao Bai, Andy Jones, Kamal Ndousse, Amanda Askell, Anna Chen, Nova DasSarma, Dawn Drain, Stanislav Fort, Deep Ganguli, Tom Henighan, et~al.
\newblock Training a helpful and harmless assistant with reinforcement learning from human feedback.
\newblock \emph{arXiv preprint arXiv:2204.05862}, 2022.

\bibitem[Baird et~al.(1995)]{baird1995residual}
Leemon Baird et~al.
\newblock Residual algorithms: Reinforcement learning with function approximation.
\newblock In \emph{Proceedings of the twelfth international conference on machine learning}, pages 30--37, 1995.

\bibitem[Bellemare et~al.(2017)Bellemare, Dabney, and Munos]{bellemare2017distributional}
Marc~G Bellemare, Will Dabney, and R{\'e}mi Munos.
\newblock A distributional perspective on reinforcement learning.
\newblock In \emph{International conference on machine learning}, pages 449--458. PMLR, 2017.

\bibitem[Bellman(1957)]{bellman1957markovian}
Richard Bellman.
\newblock A markovian decision process.
\newblock \emph{Journal of mathematics and mechanics}, pages 679--684, 1957.

\bibitem[Bertsekas and Tsitsiklis(1996)]{bertsekas1996neuro}
Dimitri Bertsekas and John~N Tsitsiklis.
\newblock \emph{Neuro-dynamic programming}.
\newblock Athena Scientific, 1996.

\bibitem[Brown et~al.(2020)Brown, Mann, Ryder, Subbiah, Kaplan, Dhariwal, Neelakantan, Shyam, Sastry, Askell, et~al.]{brown2020language}
Tom Brown, Benjamin Mann, Nick Ryder, Melanie Subbiah, Jared~D Kaplan, Prafulla Dhariwal, Arvind Neelakantan, Pranav Shyam, Girish Sastry, Amanda Askell, et~al.
\newblock Language models are few-shot learners.
\newblock \emph{Advances in neural information processing systems}, 33:\penalty0 1877--1901, 2020.

\bibitem[Chen et~al.(2025)Chen, Zheng, Zhang, Cui, Cui, Ye, Lin, Liu, Zhu, and Wang]{chen2025bridging}
Huayu Chen, Kaiwen Zheng, Qinsheng Zhang, Ganqu Cui, Yin Cui, Haotian Ye, Tsung-Yi Lin, Ming-Yu Liu, Jun Zhu, and Haoxiang Wang.
\newblock Bridging supervised learning and reinforcement learning in math reasoning.
\newblock \emph{arXiv preprint arXiv:2505.18116}, 2025.

\bibitem[Chen and Jiang(2019)]{chen2019information}
Jinglin Chen and Nan Jiang.
\newblock Information-theoretic considerations in batch reinforcement learning.
\newblock In \emph{International conference on machine learning}, pages 1042--1051. PMLR, 2019.

\bibitem[Cheng et~al.(2022)Cheng, Xie, Jiang, and Agarwal]{cheng2022adversarially}
Ching-An Cheng, Tengyang Xie, Nan Jiang, and Alekh Agarwal.
\newblock Adversarially trained actor critic for offline reinforcement learning.
\newblock In \emph{International Conference on Machine Learning}, pages 3852--3878. PMLR, 2022.

\bibitem[Ethayarajh et~al.(2024)Ethayarajh, Xu, Muennighoff, Jurafsky, and Kiela]{ethayarajh2024kto}
Kawin Ethayarajh, Winnie Xu, Niklas Muennighoff, Dan Jurafsky, and Douwe Kiela.
\newblock Kto: Model alignment as prospect theoretic optimization.
\newblock \emph{arXiv preprint arXiv:2402.01306}, 2024.

\bibitem[Guo et~al.(2025)Guo, Yang, Zhang, Song, Zhang, Xu, Zhu, Ma, Wang, Bi, et~al.]{guo2025deepseek}
Daya Guo, Dejian Yang, Haowei Zhang, Junxiao Song, Ruoyu Zhang, Runxin Xu, Qihao Zhu, Shirong Ma, Peiyi Wang, Xiao Bi, et~al.
\newblock Deepseek-r1: Incentivizing reasoning capability in llms via reinforcement learning.
\newblock \emph{arXiv preprint arXiv:2501.12948}, 2025.

\bibitem[Guo et~al.(2022)Guo, Tan, Liu, Xing, and Hu]{guo2022efficient}
Han Guo, Bowen Tan, Zhengzhong Liu, Eric Xing, and Zhiting Hu.
\newblock Efficient (soft) q-learning for text generation with limited good data.
\newblock In \emph{Findings of the Association for Computational Linguistics: EMNLP 2022}, pages 6969--6991, 2022.

\bibitem[Haarnoja et~al.(2017)Haarnoja, Tang, Abbeel, and Levine]{haarnoja2017reinforcement}
Tuomas Haarnoja, Haoran Tang, Pieter Abbeel, and Sergey Levine.
\newblock Reinforcement learning with deep energy-based policies.
\newblock In \emph{International conference on machine learning}, pages 1352--1361. PMLR, 2017.

\bibitem[Haarnoja et~al.(2018)Haarnoja, Zhou, Abbeel, and Levine]{haarnoja2018soft}
Tuomas Haarnoja, Aurick Zhou, Pieter Abbeel, and Sergey Levine.
\newblock Soft actor-critic: Off-policy maximum entropy deep reinforcement learning with a stochastic actor.
\newblock In \emph{International conference on machine learning}, pages 1861--1870. PMLR, 2018.

\bibitem[Hasselt(2010)]{hasselt2010double}
Hado Hasselt.
\newblock Double q-learning.
\newblock \emph{Advances in neural information processing systems}, 23, 2010.

\bibitem[He et~al.(2024)He, Luo, Bai, Hu, Thai, Shen, Hu, Han, Huang, Zhang, et~al.]{he2024olympiadbench}
Chaoqun He, Renjie Luo, Yuzhuo Bai, Shengding Hu, Zhen~Leng Thai, Junhao Shen, Jinyi Hu, Xu~Han, Yujie Huang, Yuxiang Zhang, et~al.
\newblock Olympiadbench: A challenging benchmark for promoting agi with olympiad-level bilingual multimodal scientific problems.
\newblock \emph{arXiv preprint arXiv:2402.14008}, 2024.

\bibitem[He et~al.(2025)He, Liu, Liu, Yan, Wang, Cheng, Zhang, Zhang, Xu, Shen, Li, Zeng, Wei, Cheng, An, Liu, and Zhou]{skywork-or1-2025}
Jujie He, Jiacai Liu, Chris~Yuhao Liu, Rui Yan, Chaojie Wang, Peng Cheng, Xiaoyu Zhang, Fuxiang Zhang, Jiacheng Xu, Wei Shen, Siyuan Li, Liang Zeng, Tianwen Wei, Cheng Cheng, Bo~An, Yang Liu, and Yahui Zhou.
\newblock Skywork open reasoner series.
\newblock \url{https://capricious-hydrogen-41c.notion.site/Skywork-Open-Reaonser-Series-1d0bc9ae823a80459b46c149e4f51680}, 2025.
\newblock Notion Blog.

\bibitem[Hendrycks et~al.(2021)Hendrycks, Burns, Kadavath, Arora, Basart, Tang, Song, and Steinhardt]{hendrycks2021measuring}
Dan Hendrycks, Collin Burns, Saurav Kadavath, Akul Arora, Steven Basart, Eric Tang, Dawn Song, and Jacob Steinhardt.
\newblock Measuring mathematical problem solving with the math dataset.
\newblock \emph{arXiv preprint arXiv:2103.03874}, 2021.

\bibitem[Hessel et~al.(2018)Hessel, Modayil, Van~Hasselt, Schaul, Ostrovski, Dabney, Horgan, Piot, Azar, and Silver]{hessel2018rainbow}
Matteo Hessel, Joseph Modayil, Hado Van~Hasselt, Tom Schaul, Georg Ostrovski, Will Dabney, Dan Horgan, Bilal Piot, Mohammad Azar, and David Silver.
\newblock Rainbow: Combining improvements in deep reinforcement learning.
\newblock In \emph{Proceedings of the AAAI conference on artificial intelligence}, volume~32, 2018.

\bibitem[Hu(2025)]{hu2025reinforce++}
Jian Hu.
\newblock Reinforce++: A simple and efficient approach for aligning large language models.
\newblock \emph{arXiv preprint arXiv:2501.03262}, 2025.

\bibitem[Hu et~al.(2025)Hu, Zhang, Han, Jiang, Zhang, and Shum]{hu2025openreasonerzeroopensourceapproach}
Jingcheng Hu, Yinmin Zhang, Qi~Han, Daxin Jiang, Xiangyu Zhang, and Heung-Yeung Shum.
\newblock Open-reasoner-zero: An open source approach to scaling up reinforcement learning on the base model, 2025.
\newblock URL \url{https://arxiv.org/abs/2503.24290}.

\bibitem[Jaech et~al.(2024)Jaech, Kalai, Lerer, Richardson, El-Kishky, Low, Helyar, Madry, Beutel, Carney, et~al.]{jaech2024openai}
Aaron Jaech, Adam Kalai, Adam Lerer, Adam Richardson, Ahmed El-Kishky, Aiden Low, Alec Helyar, Aleksander Madry, Alex Beutel, Alex Carney, et~al.
\newblock Openai o1 system card.
\newblock \emph{arXiv preprint arXiv:2412.16720}, 2024.

\bibitem[Ji et~al.(2024)Ji, Liu, Dai, Yang, Zheng, Wu, Dun, Gu, and Yan]{ji2024enhancing}
Kaixuan Ji, Guanlin Liu, Ning Dai, Qingping Yang, Renjie Zheng, Zheng Wu, Chen Dun, Quanquan Gu, and Lin Yan.
\newblock Enhancing multi-step reasoning abilities of language models through direct q-function optimization.
\newblock \emph{arXiv preprint arXiv:2410.09302}, 2024.

\bibitem[Jia et~al.(2025)Jia, Rakhlin, and Xie]{jia2025we}
Zeyu Jia, Alexander Rakhlin, and Tengyang Xie.
\newblock Do we need to verify step by step? rethinking process supervision from a theoretical perspective.
\newblock \emph{arXiv preprint arXiv:2502.10581}, 2025.

\bibitem[Kool et~al.(2019)Kool, van Hoof, and Welling]{kool2019buy}
Wouter Kool, Herke van Hoof, and Max Welling.
\newblock Buy 4 {REINFORCE} samples, get a baseline for free!, 2019.
\newblock URL \url{https://openreview.net/forum?id=r1lgTGL5DE}.

\bibitem[Kydlíček(2025)]{Kydlicek_Math-Verify_Math_Verification}
Hynek Kydlíček.
\newblock {Math-Verify: Math Verification Library}.
\newblock 2025.
\newblock URL \url{https://github.com/huggingface/math-verify}.

\bibitem[Lewkowycz et~al.(2022)Lewkowycz, Andreassen, Dohan, Dyer, Michalewski, Ramasesh, Slone, Anil, Schlag, Gutman-Solo, et~al.]{lewkowycz2022solving}
Aitor Lewkowycz, Anders Andreassen, David Dohan, Ethan Dyer, Henryk Michalewski, Vinay Ramasesh, Ambrose Slone, Cem Anil, Imanol Schlag, Theo Gutman-Solo, et~al.
\newblock Solving quantitative reasoning problems with language models.
\newblock \emph{Advances in Neural Information Processing Systems}, 35:\penalty0 3843--3857, 2022.

\bibitem[Li et~al.(2023)Li, Xu, Zhang, Lin, Yu, Sun, and Luo]{li2023remax}
Ziniu Li, Tian Xu, Yushun Zhang, Zhihang Lin, Yang Yu, Ruoyu Sun, and Zhi-Quan Luo.
\newblock Remax: A simple, effective, and efficient reinforcement learning method for aligning large language models.
\newblock \emph{arXiv preprint arXiv:2310.10505}, 2023.

\bibitem[Lin et~al.(2025)Lin, Lin, Xie, and Ji]{lin2025cppo}
Zhihang Lin, Mingbao Lin, Yuan Xie, and Rongrong Ji.
\newblock Cppo: Accelerating the training of group relative policy optimization-based reasoning models.
\newblock \emph{arXiv preprint arXiv:2503.22342}, 2025.

\bibitem[Liu et~al.(2025)Liu, Chen, Li, Qi, Pang, Du, Lee, and Lin]{liu2025understanding}
Zichen Liu, Changyu Chen, Wenjun Li, Penghui Qi, Tianyu Pang, Chao Du, Wee~Sun Lee, and Min Lin.
\newblock Understanding r1-zero-like training: A critical perspective.
\newblock \emph{arXiv preprint arXiv:2503.20783}, 2025.

\bibitem[Mnih et~al.(2015)Mnih, Kavukcuoglu, Silver, Rusu, Veness, Bellemare, Graves, Riedmiller, Fidjeland, Ostrovski, et~al.]{mnih2015human}
Volodymyr Mnih, Koray Kavukcuoglu, David Silver, Andrei~A Rusu, Joel Veness, Marc~G Bellemare, Alex Graves, Martin Riedmiller, Andreas~K Fidjeland, Georg Ostrovski, et~al.
\newblock Human-level control through deep reinforcement learning.
\newblock \emph{nature}, 518\penalty0 (7540):\penalty0 529--533, 2015.

\bibitem[Nachum et~al.(2017)Nachum, Norouzi, Xu, and Schuurmans]{nachum2017bridging}
Ofir Nachum, Mohammad Norouzi, Kelvin Xu, and Dale Schuurmans.
\newblock Bridging the gap between value and policy based reinforcement learning.
\newblock \emph{Advances in neural information processing systems}, 30, 2017.

\bibitem[Neu et~al.(2017)Neu, Jonsson, and G{\'o}mez]{neu2017unified}
Gergely Neu, Anders Jonsson, and Vicen{\c{c}} G{\'o}mez.
\newblock A unified view of entropy-regularized markov decision processes.
\newblock \emph{arXiv preprint arXiv:1705.07798}, 2017.

\bibitem[Ouyang et~al.(2022)Ouyang, Wu, Jiang, Almeida, Wainwright, Mishkin, Zhang, Agarwal, Slama, Ray, et~al.]{ouyang2022training}
Long Ouyang, Jeffrey Wu, Xu~Jiang, Diogo Almeida, Carroll Wainwright, Pamela Mishkin, Chong Zhang, Sandhini Agarwal, Katarina Slama, Alex Ray, et~al.
\newblock Training language models to follow instructions with human feedback.
\newblock \emph{Advances in neural information processing systems}, 35:\penalty0 27730--27744, 2022.

\bibitem[Polyanskiy and Wu(2025)]{polyanskiy2025information}
Yury Polyanskiy and Yihong Wu.
\newblock \emph{Information theory: From coding to learning}.
\newblock Cambridge university press, 2025.

\bibitem[Rafailov et~al.(2023)Rafailov, Sharma, Mitchell, Manning, Ermon, and Finn]{rafailov2023direct}
Rafael Rafailov, Archit Sharma, Eric Mitchell, Christopher~D Manning, Stefano Ermon, and Chelsea Finn.
\newblock Direct preference optimization: Your language model is secretly a reward model.
\newblock \emph{Advances in Neural Information Processing Systems}, 36:\penalty0 53728--53741, 2023.

\bibitem[Rafailov et~al.(2024)Rafailov, Hejna, Park, and Finn]{rafailov2024r}
Rafael Rafailov, Joey Hejna, Ryan Park, and Chelsea Finn.
\newblock From $r$ to $q^\star$: Your language model is secretly a q-function.
\newblock \emph{arXiv preprint arXiv:2404.12358}, 2024.

\bibitem[Richemond et~al.(2024)Richemond, Tang, Guo, Calandriello, Azar, Rafailov, Pires, Tarassov, Spangher, Ellsworth, et~al.]{richemond2024offline}
Pierre~Harvey Richemond, Yunhao Tang, Daniel Guo, Daniele Calandriello, Mohammad~Gheshlaghi Azar, Rafael Rafailov, Bernardo~Avila Pires, Eugene Tarassov, Lucas Spangher, Will Ellsworth, et~al.
\newblock Offline regularised reinforcement learning for large language models alignment.
\newblock \emph{arXiv preprint arXiv:2405.19107}, 2024.

\bibitem[Samuel(1959)]{samuel1959some}
Arthur~L Samuel.
\newblock Some studies in machine learning using the game of checkers.
\newblock \emph{IBM Journal of research and development}, 3\penalty0 (3):\penalty0 210--229, 1959.

\bibitem[Schulman et~al.(2015)Schulman, Moritz, Levine, Jordan, and Abbeel]{schulman2015high}
John Schulman, Philipp Moritz, Sergey Levine, Michael Jordan, and Pieter Abbeel.
\newblock High-dimensional continuous control using generalized advantage estimation.
\newblock \emph{arXiv preprint arXiv:1506.02438}, 2015.

\bibitem[Schulman et~al.(2017{\natexlab{a}})Schulman, Chen, and Abbeel]{schulman2017equivalence}
John Schulman, Xi~Chen, and Pieter Abbeel.
\newblock Equivalence between policy gradients and soft q-learning.
\newblock \emph{arXiv preprint arXiv:1704.06440}, 2017{\natexlab{a}}.

\bibitem[Schulman et~al.(2017{\natexlab{b}})Schulman, Wolski, Dhariwal, Radford, and Klimov]{schulman2017proximal}
John Schulman, Filip Wolski, Prafulla Dhariwal, Alec Radford, and Oleg Klimov.
\newblock Proximal policy optimization algorithms.
\newblock \emph{arXiv preprint arXiv:1707.06347}, 2017{\natexlab{b}}.

\bibitem[Schweitzer and Seidmann(1985)]{schweitzer1985generalized}
Paul~J Schweitzer and Abraham Seidmann.
\newblock Generalized polynomial approximations in markovian decision processes.
\newblock \emph{Journal of mathematical analysis and applications}, 110\penalty0 (2):\penalty0 568--582, 1985.

\bibitem[Shao et~al.(2024)Shao, Wang, Zhu, Xu, Song, Bi, Zhang, Zhang, Li, Wu, et~al.]{shao2024deepseekmath}
Zhihong Shao, Peiyi Wang, Qihao Zhu, Runxin Xu, Junxiao Song, Xiao Bi, Haowei Zhang, Mingchuan Zhang, YK~Li, Y~Wu, et~al.
\newblock Deepseekmath: Pushing the limits of mathematical reasoning in open language models.
\newblock \emph{arXiv preprint arXiv:2402.03300}, 2024.

\bibitem[Sheng et~al.(2024)Sheng, Zhang, Ye, Wu, Zhang, Zhang, Peng, Lin, and Wu]{sheng2024hybridflow}
Guangming Sheng, Chi Zhang, Zilingfeng Ye, Xibin Wu, Wang Zhang, Ru~Zhang, Yanghua Peng, Haibin Lin, and Chuan Wu.
\newblock Hybridflow: A flexible and efficient rlhf framework.
\newblock \emph{arXiv preprint arXiv: 2409.19256}, 2024.

\bibitem[Singh and Sutton(1996)]{singh1996reinforcement}
Satinder~P Singh and Richard~S Sutton.
\newblock Reinforcement learning with replacing eligibility traces.
\newblock \emph{Machine learning}, 22\penalty0 (1):\penalty0 123--158, 1996.

\bibitem[Stojanovski et~al.(2025)Stojanovski, Stanley, Sharratt, Jones, Adefioye, Kaddour, and Köpf]{stojanovski2025reasoninggymreasoningenvironments}
Zafir Stojanovski, Oliver Stanley, Joe Sharratt, Richard Jones, Abdulhakeem Adefioye, Jean Kaddour, and Andreas Köpf.
\newblock Reasoning gym: Reasoning environments for reinforcement learning with verifiable rewards, 2025.
\newblock URL \url{https://arxiv.org/abs/2505.24760}.

\bibitem[Sutton(1988)]{sutton1988learning}
Richard~S Sutton.
\newblock Learning to predict by the methods of temporal differences.
\newblock \emph{Machine learning}, 3:\penalty0 9--44, 1988.

\bibitem[Sutton et~al.(1998)Sutton, Barto, et~al.]{sutton1998reinforcement}
Richard~S Sutton, Andrew~G Barto, et~al.
\newblock \emph{Reinforcement learning: An introduction}.
\newblock MIT press Cambridge, 1998.

\bibitem[Tang et~al.(2024)Tang, Guo, Zheng, Calandriello, Munos, Rowland, Richemond, Valko, Pires, and Piot]{tang2024generalized}
Yunhao Tang, Zhaohan~Daniel Guo, Zeyu Zheng, Daniele Calandriello, R{\'e}mi Munos, Mark Rowland, Pierre~Harvey Richemond, Michal Valko, Bernardo~{\'A}vila Pires, and Bilal Piot.
\newblock Generalized preference optimization: A unified approach to offline alignment.
\newblock \emph{arXiv preprint arXiv:2402.05749}, 2024.

\bibitem[Team et~al.(2025)Team, Du, Gao, Xing, Jiang, Chen, Li, Xiao, Du, Liao, et~al.]{team2025kimi}
Kimi Team, Angang Du, Bofei Gao, Bowei Xing, Changjiu Jiang, Cheng Chen, Cheng Li, Chenjun Xiao, Chenzhuang Du, Chonghua Liao, et~al.
\newblock Kimi k1. 5: Scaling reinforcement learning with llms.
\newblock \emph{arXiv preprint arXiv:2501.12599}, 2025.

\bibitem[Van~Hasselt et~al.(2016)Van~Hasselt, Guez, and Silver]{van2016deep}
Hado Van~Hasselt, Arthur Guez, and David Silver.
\newblock Deep reinforcement learning with double q-learning.
\newblock In \emph{Proceedings of the AAAI conference on artificial intelligence}, volume~30, 2016.

\bibitem[Wainwright(2019)]{wainwright2019high}
Martin~J Wainwright.
\newblock \emph{High-dimensional statistics: A non-asymptotic viewpoint}, volume~48.
\newblock Cambridge University Press, 2019.

\bibitem[Wang et~al.(2024)Wang, Hao, Dong, Zhang, Bao, Yang, and Wu]{wang2024offline}
Huaijie Wang, Shibo Hao, Hanze Dong, Shenao Zhang, Yilin Bao, Ziran Yang, and Yi~Wu.
\newblock Offline reinforcement learning for llm multi-step reasoning.
\newblock \emph{arXiv preprint arXiv:2412.16145}, 2024.

\bibitem[Watkins and Dayan(1992)]{watkins1992q}
Christopher~JCH Watkins and Peter Dayan.
\newblock Q-learning.
\newblock \emph{Machine learning}, 8:\penalty0 279--292, 1992.

\bibitem[Wen et~al.(2025)Wen, Cai, Xiao, He, An, Duan, Du, Liu, Tang, Lv, et~al.]{wen2025light}
Liang Wen, Yunke Cai, Fenrui Xiao, Xin He, Qi~An, Zhenyu Duan, Yimin Du, Junchen Liu, Lifu Tang, Xiaowei Lv, et~al.
\newblock Light-r1: Curriculum sft, dpo and rl for long cot from scratch and beyond.
\newblock \emph{arXiv preprint arXiv:2503.10460}, 2025.

\bibitem[Williams(1992)]{williams1992simple}
Ronald~J Williams.
\newblock Simple statistical gradient-following algorithms for connectionist reinforcement learning.
\newblock \emph{Machine learning}, 8:\penalty0 229--256, 1992.

\bibitem[Williams and Peng(1991)]{williams1991function}
Ronald~J Williams and Jing Peng.
\newblock Function optimization using connectionist reinforcement learning algorithms.
\newblock \emph{Connection Science}, 3\penalty0 (3):\penalty0 241--268, 1991.

\bibitem[Xie and Jiang(2020)]{xie2020q}
Tengyang Xie and Nan Jiang.
\newblock Q* approximation schemes for batch reinforcement learning: A theoretical comparison.
\newblock In \emph{Conference on Uncertainty in Artificial Intelligence}, pages 550--559. PMLR, 2020.

\bibitem[Xie et~al.(2025)Xie, Gao, Ren, Luo, Hong, Dai, Zhou, Qiu, Wu, and Luo]{xie2025logic}
Tian Xie, Zitian Gao, Qingnan Ren, Haoming Luo, Yuqian Hong, Bryan Dai, Joey Zhou, Kai Qiu, Zhirong Wu, and Chong Luo.
\newblock Logic-rl: Unleashing llm reasoning with rule-based reinforcement learning.
\newblock \emph{arXiv preprint arXiv:2502.14768}, 2025.

\bibitem[Xiong et~al.(2025)Xiong, Yao, Xu, Pang, Wang, Sahoo, Li, Jiang, Zhang, Xiong, et~al.]{xiong2025minimalist}
Wei Xiong, Jiarui Yao, Yuhui Xu, Bo~Pang, Lei Wang, Doyen Sahoo, Junnan Li, Nan Jiang, Tong Zhang, Caiming Xiong, et~al.
\newblock A minimalist approach to llm reasoning: from rejection sampling to reinforce.
\newblock \emph{arXiv preprint arXiv:2504.11343}, 2025.

\bibitem[Yu et~al.(2025)Yu, Zhang, Zhu, Yuan, Zuo, Yue, Fan, Liu, Liu, Liu, et~al.]{yu2025dapo}
Qiying Yu, Zheng Zhang, Ruofei Zhu, Yufeng Yuan, Xiaochen Zuo, Yu~Yue, Tiantian Fan, Gaohong Liu, Lingjun Liu, Xin Liu, et~al.
\newblock Dapo: An open-source llm reinforcement learning system at scale.
\newblock \emph{arXiv preprint arXiv:2503.14476}, 2025.

\bibitem[Yuan et~al.(2025{\natexlab{a}})Yuan, Yu, Zuo, Zhu, Xu, Chen, Wang, Fan, Du, Wei, et~al.]{yuan2025vapo}
Yufeng Yuan, Qiying Yu, Xiaochen Zuo, Ruofei Zhu, Wenyuan Xu, Jiaze Chen, Chengyi Wang, TianTian Fan, Zhengyin Du, Xiangpeng Wei, et~al.
\newblock Vapo: Efficient and reliable reinforcement learning for advanced reasoning tasks.
\newblock \emph{arXiv preprint arXiv:2504.05118}, 2025{\natexlab{a}}.

\bibitem[Yuan et~al.(2025{\natexlab{b}})Yuan, Yue, Zhu, Fan, and Yan]{yuan2025s}
Yufeng Yuan, Yu~Yue, Ruofei Zhu, Tiantian Fan, and Lin Yan.
\newblock What's behind ppo's collapse in long-cot? value optimization holds the secret.
\newblock \emph{arXiv preprint arXiv:2503.01491}, 2025{\natexlab{b}}.

\bibitem[Zeng et~al.(2025)Zeng, Huang, Liu, Liu, He, Ma, and He]{zeng2025simplerl}
Weihao Zeng, Yuzhen Huang, Qian Liu, Wei Liu, Keqing He, Zejun Ma, and Junxian He.
\newblock Simplerl-zoo: Investigating and taming zero reinforcement learning for open base models in the wild.
\newblock \emph{arXiv preprint arXiv:2503.18892}, 2025.

\bibitem[Zhang et~al.(2025)Zhang, Deng, Lin, Wang, Ng, Ye, Li, Xiao, Mo, Zhang, et~al.]{zhang2025100}
Chong Zhang, Yue Deng, Xiang Lin, Bin Wang, Dianwen Ng, Hai Ye, Xingxuan Li, Yao Xiao, Zhanfeng Mo, Qi~Zhang, et~al.
\newblock 100 days after deepseek-r1: A survey on replication studies and more directions for reasoning language models.
\newblock \emph{arXiv preprint arXiv:2505.00551}, 2025.

\bibitem[Zhao et~al.(2023)Zhao, Joshi, Liu, Khalman, Saleh, and Liu]{zhao2023slic}
Yao Zhao, Rishabh Joshi, Tianqi Liu, Misha Khalman, Mohammad Saleh, and Peter~J Liu.
\newblock Slic-hf: Sequence likelihood calibration with human feedback.
\newblock \emph{arXiv preprint arXiv:2305.10425}, 2023.

\bibitem[Ziebart(2010)]{ziebart2010modeling}
Brian~D Ziebart.
\newblock \emph{Modeling purposeful adaptive behavior with the principle of maximum causal entropy}.
\newblock Carnegie Mellon University, 2010.

\bibitem[Ziebart et~al.(2008)Ziebart, Maas, Bagnell, and Dey]{ziebart2008maximum}
Brian~D Ziebart, Andrew~L Maas, J~Andrew Bagnell, and Anind~K Dey.
\newblock Maximum entropy inverse reinforcement learning.
\newblock \emph{AAAI Conference on Artificial Intelligence}, 8:\penalty0 1433--1438, 2008.

\end{thebibliography}

\clearpage

\appendix
\onecolumn

\begin{center}
{\LARGE Appendix}
\end{center}

\renewcommand{\contentsname}{}
\addtocontents{toc}{\protect\setcounter{tocdepth}{2}}
{\hypersetup{hidelinks}
\tableofcontents
}

\neurips{}
\section{Technical Tools}\label[appendix]{app: perf-diff}

We now present \cref{lem:soft_pd} as a soft performance difference lemma with arbitrary reference function. 
\cref{lem:soft_pd} holds generally for KL-regularized RL, which uses a slightly different form of Bellman operator defined as below,
\begin{align*}
(\Tcal^\pi_\beta f)(s_h,a_h) \coloneqq &~ r(s_h,a_h) + \E_{s_{h+1} \sim \Pcal(\cdot \mid s_h,a_h), a_{h+1} \sim \pi(\cdot \mid s_{h+1})} \left[ f(s_{h+1},a_{h+1}) - \beta \log\frac{\pi(a_{h+1} \mid s_{h+1})}{\piref(a_{h+1} \mid s_{h+1})} \right].
\end{align*}
We present this slightly different version as this lemma is more generally applicable than the results in this paper.
\begin{lemma}[Soft Performance Difference Lemma via Reference Function]
\label[lemma]{lem:soft_pd}
For any function $f$ as well as any policies $\pi^\dag$ and $\pi$, we have
\begin{align*}
J_\beta(\pi^\dag) - J_\beta(\pi) = &~ \E_{\pi^\dag}\left[\sum_{h=1}^{H} \left( (\Tcal_\beta^{\pi} f) (s_h,a_h) - f(s_h,a_h) \right)\right] + \E_{\pi}\left[\sum_{h=1}^{H} \left( f(s_h,a_h) - (\Tcal_\beta^{\pi} f) (s_h,a_h) \right)\right]
\\
&~ + \E_{\pi^\dag} \left[\sum_{h=1}^H \left( f(s_h,a_h) - \beta \log\frac{\pi^\dag(a_h \mid s_h)}{\piref(a_h \mid s_h)} - \E_{a_h \sim \pi(\cdot \mid s_h)} \left[ f(s_h,a_h) - \beta \log\frac{\pi(a_h \mid s_h)}{\piref(a_h \mid s_h)} \right] \right) \right].
\end{align*}
\end{lemma}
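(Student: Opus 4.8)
The plan is to prove this identity by introducing a single auxiliary function and telescoping it along trajectories, treating the three terms on the right-hand side separately and showing that their contributions recombine into $J_\beta(\pi^\dag) - J_\beta(\pi)$. The identity is a soft, KL-regularized analogue of the Kakade--Langford performance difference lemma, but with the value function replaced by an arbitrary reference $f$. The central object is the \emph{soft value of $f$ under $\pi$}, namely $g(s) \coloneqq \E_{a\sim\pi(\cdot\mid s)}[f(s,a) - \beta\log\frac{\pi(a\mid s)}{\piref(a\mid s)}]$, which is exactly the inner expectation subtracted in the third term. With this notation the Bellman operator collapses to $(\Tcal_\beta^{\pi} f)(s_h,a_h) = r(s_h,a_h) + \E_{s_{h+1}\sim\Pcal(\cdot\mid s_h,a_h)}[g(s_{h+1})]$, so every summand on the right becomes a combination of $r$, $f$, $g$, and log-ratio terms that I can track by index. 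No realizability or fixed-point property of $f$ is used, so the final identity will hold exactly for arbitrary $f$.

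First I would evaluate the second term, $\E_{\pi}[\sum_h (f - \Tcal_\beta^{\pi} f)]$, under the on-policy law. Here the pair $(s_{h+1},a_{h+1})$ entering $\Tcal_\beta^{\pi}$ is distributed exactly as in the $\pi$-trajectory, so $\E_{\pi}[g(s_{h+1})] = \E_{\pi}[f(s_{h+1},a_{h+1}) - \beta\log\frac{\pi(a_{h+1}\mid s_{h+1})}{\piref(a_{h+1}\mid s_{h+1})}]$, and the $f$-terms telescope: $\sum_{h=1}^H f(s_h,a_h) - \sum_{h=1}^H f(s_{h+1},a_{h+1})$ collapses to $f(s_1,a_1)$ under the boundary convention that any term indexed at $H+1$ vanishes. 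After collecting the reward and log-ratio contributions and taking the initial-action expectation, this term equals $-J_\beta(\pi) + \E_{s_1\sim\rho}[g(s_1)]$.

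Next I would combine the first and third terms, both evaluated under $\pi^\dag$. In the first term the reward terms assemble $\E_{\pi^\dag}[\sum_h r(s_h,a_h)]$, and $\E_{\pi^\dag}[\sum_{h=1}^H g(s_{h+1})]$ index-shifts to $\E_{\pi^\dag}[\sum_{h=2}^H g(s_h)]$; the third term supplies $-\E_{\pi^\dag}[\sum_{h=1}^H g(s_h)]$, so the two $g$-contributions telescope to $-\E_{\pi^\dag}[g(s_1)]$. Crucially, the $f(s_h,a_h)$ terms appearing with opposite signs in the first and third terms cancel identically, and the third term's remaining $-\beta\log\frac{\pi^\dag(a_h\mid s_h)}{\piref(a_h\mid s_h)}$ combines with the reward sum to form $J_\beta(\pi^\dag)$. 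Thus the first and third terms together equal $J_\beta(\pi^\dag) - \E_{s_1\sim\rho}[g(s_1)]$.

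Summing all three contributions, the two $\E_{s_1\sim\rho}[g(s_1)]$ boundary terms cancel, leaving $J_\beta(\pi^\dag) - J_\beta(\pi)$ as claimed. This cancellation is the crux of the argument, and it works precisely because the initial state obeys $s_1\sim\rho$ under both $\pi$ and $\pi^\dag$, so the telescope's boundary value is policy-independent. I expect the only delicate points to be bookkeeping: correctly matching the distribution of the ``next'' state-action pair inside $\Tcal_\beta^{\pi}$ to the trajectory law under each policy (it aligns with the trajectory for the on-policy second term but requires the $g$-marginalization for the off-policy first term), and handling the index shifts and the $h=H+1$ boundary convention consistently across all three terms.
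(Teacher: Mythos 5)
Your proof is correct, and it takes a genuinely different route from the paper's. You prove the identity by a single direct computation: introducing the soft value $g(s) = \E_{a\sim\pi(\cdot\mid s)}\brk{f(s,a) - \beta\log\frac{\pi(a\mid s)}{\piref(a\mid s)}}$, writing $(\Tcal_\beta^{\pi}f)(s_h,a_h) = r(s_h,a_h) + \E_{s_{h+1}}[g(s_{h+1})]$, and telescoping each of the three terms separately so that the policy-independent boundary value $\E_{s_1\sim\rho}[g(s_1)]$ cancels between the on-policy and off-policy contributions. The paper instead argues in two stages: it first establishes the performance difference lemma with $Q_\beta^\pi$ itself as the reference (the classical telescoping on $V_\beta^\pi$), and then reduces the case of arbitrary $f$ to that special case by defining an augmented reward $r_f = f - \Tcal_\beta^\pi f + r$ for which $f$ is exactly the soft Q-function $Q^\pi_{\beta,r_f}$, decomposing $J_\beta(\pi^\dag)-J_\beta(\pi)$ through the intermediate quantities $J_{\beta,r_f}(\cdot)$. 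Your version is more elementary and self-contained, and it makes explicit where each Bellman-residual term originates and why the only distributional subtlety is the $g$-marginalization in the off-policy first term; the paper's version buys a cleaner conceptual reduction (arbitrary $f$ is just the exact Q-function of a reshaped reward) at the cost of introducing the auxiliary objects $r_f$ and $J_{\beta,r_f}$. Your bookkeeping checks out, including the $h=H+1$ boundary convention and the cancellation of the $f(s_h,a_h)$ terms between the first and third summands, so no gap remains.
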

\begin{proof}[\cpfname{lem:soft_pd}]
We first prove the soft performance difference via $Q_\beta^\pi$ as the reference,
\begin{align*}
&~ J_\beta(\pi^\dag) - J_\beta(\pi)
\\
= &~ \E_{\pi^\dag} \left[\sum_{h=1}^H \left( r(s_h,a_h) - \beta \log\frac{\pi^\dag(a_h \mid s_h)}{\piref(a_h \mid s_h)} \right)\right] - J_\beta(\pi)
\\
= &~ \E_{\pi^\dag} \left[\sum_{h=1}^H \left( r(s_h,a_h) - \beta \log\frac{\pi^\dag(a_h \mid s_h)}{\piref(a_h \mid s_h)} - V_\beta^\pi(s_h) + V_\beta^\pi(s_h) \right)\right] - J_\beta(\pi)
\\
= &~ \E_{\pi^\dag} \left[\sum_{h=1}^H \left( r(s_h,a_h) - \beta \log\frac{\pi^\dag(a_h \mid s_h)}{\piref(a_h \mid s_h)} - V_\beta^\pi(s_h) + V_\beta^\pi(s_h) \right)\right] - \E_{s_1 \sim \rho} \left[ V_\beta^\pi(s_1) \right]
\\
= &~ \E_{\pi^\dag} \left[\sum_{h=1}^H \left( r(s_h,a_h) - \beta \log\frac{\pi^\dag(a_h \mid s_h)}{\piref(a_h \mid s_h)} - V_\beta^\pi(s_h) + V_\beta^\pi(s_h) \right) - V_\beta^\pi(s_1)\right]
\\
= &~ \E_{\pi^\dag} \left[\sum_{h=1}^H \left( r(s_h,a_h) + \E_{s_{h+1} \sim \Pcal(\cdot \mid s_h,a_h)} \left[ V_\beta^\pi(s_{h+1}) \right] - \beta \log\frac{\pi^\dag(a_h \mid s_h)}{\piref(a_h \mid s_h)} - V_\beta^\pi(s_h) \right)\right]
\\
= &~ \E_{\pi^\dag} \left[\sum_{h=1}^H \left( Q_\beta^\pi(s_h,a_h) - \beta \log\frac{\pi^\dag(a_h \mid s_h)}{\piref(a_h \mid s_h)} - \E_{a_h \sim \pi(\cdot \mid s_h)} \left[ Q_\beta^\pi(s_h,\pi) + \beta \log\frac{\pi(a_h \mid s_h)}{\piref(a_h \mid s_h)} \right] 
    \right) \right].
\end{align*}
Next, consider an arbitrary reference function $f$, we define the augment reward $r_f$ as
\begin{align*}
r_f(s_h,a_h) \coloneqq &~ f(s_h,a_h) - \E_{s_{h+1} \sim \Pcal(\cdot \mid s_h,a_h), a_{h+1} \sim \pi(\cdot \mid s_{h+1})} \left[ f(s_{h+1},a_{h+1}) - \beta \log\frac{\pi(a_{h+1} \mid s_{h+1})}{\piref(a_{h+1} \mid s_{h+1})} \right]
\\
= &~ f(s_h,a_h) - (\Tcal^\pi_\beta f)(s_h,a_h) + r(s_h,a_h),
\end{align*}
for any $(s_h,a_h) \in \Scal_h \times \Acal_h$. This means $f$ is the fixed point of $\Tcal^\pi_\beta$ with replacing reward function $r$ by $r_f$. We use $Q_{\beta,r_f}^\pi$ to denote the soft Q-function with replacing reward function $r$ by $r_f$, and we immediately have $f \equiv Q_{\beta,r_f}^\pi$. We also use $J_{\beta,r_f}(\pi)$ to denote the soft return of policy $\pi$ with replacing reward function $r$ by $r_f$.
Then
\begin{align*}
&~ J_\beta(\pi^\dag) - J_\beta(\pi)
\\
= &~ J_\beta(\pi^\dag) - J_{\beta,r_f}(\pi^\dag) + J_{\beta,r_f}(\pi^\dag) - J_{\beta,r_f}(\pi) + J_{\beta,r_f}(\pi) - J_\beta(\pi)
\\
= &~ \E_{\pi^\dag} \left[\sum_{h=1}^H \left( r(s_h,a_h) - r_f(s_h,a_h) \right) \right] + \E_{\pi} \left[\sum_{h=1}^H \left( r_f(s_h,a_h) - r(s_h,a_h) \right)\right]  \\
&~ + \E_{\pi^\dag} \left[\sum_{h=1}^H \left( f(s_h,a_h) - \beta \log\frac{\pi^\dag(a_h \mid s_h)}{\piref(a_h \mid s_h)} - \E_{a_h \sim \pi(\cdot \mid s_h)} \left[ f(s_h,a_h) - \beta \log\frac{\pi(a_h \mid s_h)}{\piref(a_h \mid s_h)} \right] \right) \right]
\tag{by the soft performance difference via reference as $f \equiv Q_{\beta,r_f}^\pi$}
\\
= &~ \E_{\pi^\dag} \left[\sum_{h=1}^H \left( (\Tcal^\pi_\beta f)(s_h,a_h) - f(s_h,a_h) \right)\right] + \sum_{h=1}^H \E_{d^{\pi}_h} \left[ f(s_h,a_h) - (\Tcal^\pi_\beta f)(s_h,a_h) \right]
\\
&~ + \E_{\pi^\dag} \left[\sum_{h=1}^H \left( f(s_h,a_h) - \beta \log\frac{\pi^\dag(a_h \mid s_h)}{\piref(a_h \mid s_h)} - \E_{a_h \sim \pi(\cdot \mid s_h)} \left[ f(s_h,a_h) - \beta \log\frac{\pi(a_h \mid s_h)}{\piref(a_h \mid s_h)} \right] \right)\right].
\end{align*}
This completes the proof.
\end{proof}
When specializing to the Bellman operator in this paper as in \cref{sec:arfa}, \cref{lem:soft_pd} becomes
\begin{align*}
    \frac{J_\beta(\pi^\dag) - J_\beta(\pi)}{\beta}
    = &~ \E_{\pi^\dag}\left[\sum_{h=1}^{H} \left( (\Tcal_\beta^{\pi} Q) (s_h,a_h) - Q(s_h,a_h) \right)\right] + \E_{\pi}\left[\sum_{h=1}^{H} \left( Q(s_h,a_h) - (\Tcal_\beta^{\pi} Q) (s_h,a_h) \right)\right]
    \\
    &~ + \E_{\pi^\dag} \left[\sum_{h=1}^H \left( Q(s_h,a_h) - \log\pi^\dag(a_h \mid s_h) - \E_{a_h \sim \pi(\cdot \mid s_h)} \left[ Q(s_h,a_h) - \log\pi(a_h \mid s_h) \right] \right) \right].
\end{align*}
As corollary, we have the following upper bound on the sub-optimality of any policy induced by a value function.
\begin{corollary}\label[corollary]{cor:all-pol-sub-opt}
Suppose that the policy $\pihat=\pi_Q$ is induced by a value function $Q$, i.e.,
\begin{align*}
    \pihat(a\mid s)=\frac{\exp\paren{Q(s,a)}}{\sum_{a'\in\cA} \exp\paren{Q(s,a')}}.
\end{align*}
Then it holds that
\begin{align*}
    J_\beta(\pi^\star) - J_\beta(\pihat)\leq 2 \beta \max_{\pi\in\set{\pi^\star,\pihat}} \left| \E_{\pi}\left[\sum_{h=1}^{H} \left( Q(s_h,a_h) - \left(\Tcal_\beta Q\right) (s_h,a_h) \right)\right] \right|
\end{align*}
\end{corollary}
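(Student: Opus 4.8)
The plan is to invoke the specialization of \cref{lem:soft_pd} to the Bellman operator of \cref{sec:arfa} (the identity displayed immediately before the corollary) with the choices $\pi^\dag=\pi^\star$, $\pi=\pihat=\pi_Q$, and reference function $f=Q$. This writes $(J_\beta(\pi^\star)-J_\beta(\pihat))/\beta$ as the sum of three pieces: a Bellman-residual term under $\pi^\star$, namely $\E_{\pi^\star}[\sum_h((\Tcal_\beta^{\pihat}Q)(s_h,a_h)-Q(s_h,a_h))]$; a Bellman-residual term under $\pihat$, namely $\E_{\pihat}[\sum_h(Q(s_h,a_h)-(\Tcal_\beta^{\pihat}Q)(s_h,a_h))]$; and a ``soft-greedy gap'' term $\E_{\pi^\star}[\sum_h(Q(s_h,a_h)-\log\pi^\star(a_h\mid s_h)-\E_{a_h\sim\pihat}[Q(s_h,a_h)-\log\pihat(a_h\mid s_h)])]$.

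The crux is two elementary consequences of the fact that $\pihat=\pi_Q$ is the soft-greedy (Boltzmann) policy. First, since $\pi_Q(\cdot\mid s)=e^{Q(s,\cdot)-V_Q(s)}$ is precisely the maximizer of $\E_{a\sim\pi}[Q(s,a)-\log\pi(a\mid s)]$ with optimal value $V_Q(s)=\log\sum_{a}e^{Q(s,a)}$, one has $\E_{a_{h+1}\sim\pi_Q}[Q(s_{h+1},a_{h+1})-\log\pi_Q(a_{h+1}\mid s_{h+1})]=V_Q(s_{h+1})$, so the policy Bellman operator collapses onto the optimality operator when applied to $Q$: $\Tcal_\beta^{\pihat}Q=\Tcal_\beta Q$. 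This rewrites the first two terms purely in terms of the optimality residual $Q-\Tcal_\beta Q$, evaluated under $\pi^\star$ and under $\pihat$. Second, the same maximization property gives, state by state, $\E_{a_h\sim\pi^\star}[Q(s_h,a_h)-\log\pi^\star(a_h\mid s_h)]\le V_Q(s_h)$, so the soft-greedy gap term equals $\E_{\pi^\star}[\sum_h(Q(s_h,a_h)-\log\pi^\star(a_h\mid s_h)-V_Q(s_h))]\le 0$ and may simply be dropped to obtain an upper bound.

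After these two reductions, what remains is $\E_{\pi^\star}[\sum_h((\Tcal_\beta Q)(s_h,a_h)-Q(s_h,a_h))]+\E_{\pihat}[\sum_h(Q(s_h,a_h)-(\Tcal_\beta Q)(s_h,a_h))]$. I would then bound each of these two expectations by its absolute value, and each absolute value by $\max_{\pi\in\set{\pi^\star,\pihat}}\abs{\E_{\pi}[\sum_{h=1}^H(Q(s_h,a_h)-(\Tcal_\beta Q)(s_h,a_h))]}$, which produces the factor of $2$; multiplying through by $\beta$ recovers the claimed bound.

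There is no genuine analytic obstacle here: the argument is a direct application of the performance difference identity followed by the triangle inequality. The only step requiring care is the bookkeeping that establishes $\Tcal_\beta^{\pihat}Q=\Tcal_\beta Q$ and fixes the correct (nonpositive) sign of the soft-greedy gap term; both follow from the definition of $\pi_Q$ as the soft-greedy policy together with the identity $V_Q(s)=\log\sum_a e^{Q(s,a)}$. The main subtlety is matching the normalization of the third term in the specialized lemma (which carries $\log\pi$ rather than $\beta\log\pi$, consistent with the division by $\beta$ on the left-hand side) so that the gap term is exactly the per-state suboptimality of $\pi^\star$ relative to the soft-greedy choice, and hence manifestly $\le 0$.
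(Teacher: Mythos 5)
Your proposal is correct and follows essentially the same route as the paper's proof: apply the specialized performance difference lemma with reference function $Q$, use the soft-greedy property of $\pi_Q$ both to collapse $\Tcal_\beta^{\pihat}Q$ onto $\Tcal_\beta Q$ and to drop the nonpositive greedy-gap term, then bound the two remaining residual terms by twice the maximum absolute value. The bookkeeping points you flag (the $\beta$ normalization and the sign of the gap term) are exactly the ones the paper handles, and you handle them correctly.
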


\begin{proof}[\cpfname{cor:all-pol-sub-opt}]
By \cref{lem:soft_pd}, we have
\begin{align*}
&~ J_\beta(\pi^\star) - J_\beta(\pihat)
\\
= &~ \beta \E_{\pi^\star}\left[\sum_{h=1}^{H} \left( (\Tcal_\beta^{\pihat} Q) (s_h,a_h) - Q(s_h,a_h) \right)\right] + \beta \E_{\pihat}\left[\sum_{h=1}^{H} \left( Q(s_h,a_h) - (\Tcal_\beta^{\pihat} Q) (s_h,a_h) \right)\right]
\\
&~ + \beta \E_{\pi^\star} \left[\sum_{h=1}^H \left( Q(s_h,a_h) - \log\pi^\star(a_h \mid s_h) - \E_{a_h' \sim \pihat(\cdot \mid s_h)} \left[ Q(s_h,a_h') - \log\pihat(a_h' \mid s_h) \right] \right) \right]
\\
\leq &~ \beta \E_{\pi^\star}\left[\sum_{h=1}^{H} \left( \left(\Tcal_\beta Q\right) (s_h,a_h) - Q(s_h,a_h) \right)\right] + \beta \E_{\pihat}\left[\sum_{h=1}^{H} \left( Q(s_h,a_h) - \left(\Tcal_\beta Q\right) (s_h,a_h) \right)\right]
\\
\leq &~ 2 \beta \max_{\pi} \left| \E_{\pi}\left[\sum_{h=1}^{H} \left( Q(s_h,a_h) - \left(\Tcal_\beta Q\right) (s_h,a_h) \right)\right] \right| ,
\end{align*}
where the first inequality follows from the definition that 
\begin{align*}
     \pihat(\cdot \mid s_h)=\argmax_{p\in\Delta(\cA)}\E_{a_h' \sim p} \left[ Q(s_h,a_h') - \log p(a_h') \right], \quad \forall s_h\in\cS_h,
\end{align*}
and hence
\begin{align*}
    \E_{a_h \sim \pi^\star(\cdot \mid s_h)} \left[Q(s_h,a_h) - \log\pi^\star(a_h \mid s_h) \right]
    \leq \E_{a_h' \sim \pihat(\cdot \mid s_h)} \left[ Q(s_h,a_h') - \log\pihat(a_h' \mid s_h) \right].
\end{align*}
\end{proof}

\section{Change of Trajectory Measure with Concentrability}
\label[appendix]{appendix:cotm_proof}
We first state the following lemma, which follows immediately from the definition of $\chi^2$-divergence, see e.g. \cite[Eq. (7.91)]{polyanskiy2025information}
\begin{lemma}\label[lemma]{lem:chi-square}
For distribution $P, Q\in\Delta(\mathcal{X})$ and function $F:\mathcal{X}\to \RR$ such that $\EE_{Q}[F(X)]=0$, it holds that
\begin{align*}
    \left(\EE_P F(X) \right)^2\leq \chisq{P}{Q}\cdot \EE_Q F(X)^2.
\end{align*}
\end{lemma}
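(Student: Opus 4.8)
The plan is to recognize \cref{lem:chi-square} as a centered Cauchy--Schwarz inequality in $L^2(Q)$, in which the hypothesis $\EE_Q[F(X)]=0$ is precisely what produces the tight $\chi^2$ constant. I would begin by letting $g \defeq dP/dQ$ denote the likelihood ratio, so that the change of measure gives $\EE_P[F(X)] = \EE_Q[g(X)\,F(X)]$.

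The key step is to exploit the mean-zero assumption to center $g$. Since $\EE_Q[F(X)]=0$, I may subtract it for free and write
\[
\EE_P[F(X)] = \EE_Q[g(X)\,F(X)] = \EE_Q\big[(g(X)-1)\,F(X)\big].
\]
Applying Cauchy--Schwarz under $Q$ to the pair $(g-1)$ and $F$ then yields
\[
\big(\EE_P[F(X)]\big)^2 \le \EE_Q\big[(g(X)-1)^2\big]\cdot \EE_Q[F(X)^2].
\]
It remains only to identify the first factor with the divergence: expanding $\EE_Q[(g-1)^2] = \EE_Q[g^2]-2\EE_Q[g]+1 = \EE_Q[g^2]-1$, and using $\EE_Q[g^2]=\EE_P[g]=\EE_P[dP/dQ]$, this equals $\chisq{P}{Q}$ under the paper's definition, completing the argument.

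There is no substantial obstacle here; the result is classical and the centering does all the work. The only point worth emphasizing is \emph{why} the hypothesis $\EE_Q[F]=0$ is needed: without it, Cauchy--Schwarz applied directly to $\EE_Q[gF]$ would give the weaker bound with $\EE_Q[g^2]=\chisq{P}{Q}+1$ in place of $\chisq{P}{Q}$, losing the ``$-1$''. Centering $g$ by its $Q$-mean (which is $1$) replaces the second moment of $g$ with its $Q$-variance, and that variance is exactly $\chi^2(P\|Q)$.
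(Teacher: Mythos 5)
Your proof is correct and is essentially the same as the paper's: the paper does not actually prove \cref{lem:chi-square} but defers to the cited reference, whose derivation is precisely your centering-plus-Cauchy--Schwarz computation. Your identification $\EE_Q[(g-1)^2]=\EE_Q[g^2]-1=\EE_P[dP/dQ]-1=\chisq{P}{Q}$ matches the paper's definition of the $\chi^2$-divergence, so nothing is missing.
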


\begin{proof}[\cpfname{lem: change-of-trajectory-measure}]
    We only need to prove that for any function $f: \calS\times \calA\to \RR$, 
    \begin{equation}\label{eq: change-traj-eq1}
        \left(\EE_{\pi}\left[\sum_{h=1}^H f(s_h, a_h)\right]\right)^2\le \left(1+\sum_{h=1}^H \chisq{d_h^\pi}{d_h^{\piref}}\right)\cdot \EE_{\piref}\left[\left(\sum_{h=1}^H f(s_h, a_h)\right)^2\right].
    \end{equation}

    First of all, we construct function $\barf: \calS\to \RR$ as 
    \begin{equation}\label{eq: def-barf}
        \barf(s_h)\coloneqq \EE_{\mu}\cond{\sum_{h'=h}^H f(s_{h'}, a_{h'})}{s_h},\qquad \forall h\in[H], s_h\in\calS_h.
    \end{equation}
    Then, it is direct to verify that for $h\in[H]$, $s_h\in\calS_h$,
    \begin{align*}
        \barf(s_h)\coloneqq \EE_{\mu}\cond{f(s_h,a_h)+\barf(s_{h+1})}{s_h},
    \end{align*}
    where we adopt the notation that $s_{H+1}$ is a deterministic terminal state and $\barf(s_{H+1})=0.$ Then, we expand
    \begin{align*}
        &~\EE_{\mu} \left[\left(\sum_{h=1}^H f(s_h, a_h)\right)^2\right] \\
        =&~\EE_{\mu} \left[\left(\barf(s_1)+\sum_{h=1}^H \brac{ f(s_h, a_h)+\barf(s_{h+1})-\barf(s_h) }\right)^2\right] \\
        =&~ \EE_{\mu} \left[\barf(s_1)^2+\sum_{h=1}^H \left(f(s_h, a_h)+\barf(s_{h+1})-\barf(s_h) \right)^2 \right] \\
        &~+ \EE_{\mu} \left[\sum_{1\leq h\leq H } \barf(s_1)\left(f(s_h, a_h)+\barf(s_{h+1})-\barf(s_h) \right)\right]\\
        &~ +\EE_{\mu} \left[\sum_{1\leq h'<h\leq H } \left(f(s_{h'}, a_{h'})+\barf(s_{h'+1})-\barf(s_{h'}) \right)\left(f(s_h, a_h)+\barf(s_{h+1})-\barf(s_h) \right)\right].
    \end{align*}
    Therefore, using the Markov property, it holds that for any $h\in[H]$,
    \begin{align*}
        \EE_{\mu}\left[f(s_h, a_h)+\barf(s_{h+1})-\barf(s_h)|s_1,a_1,\cdots,s_h\right]=\EE_{\mu}\cond{f(s_h,a_h)+\barf(s_{h+1})}{s_h}-\barf(s_h)=0,
    \end{align*}
    and hence we can deduce that
    \begin{align}\label{eq: fbar-f}
        \EE_{\mu} \left[\left(\sum_{h=1}^H f(s_h, a_h)\right)^2\right]=\EE_{\mu} \left[\barf(s_1)^2+\sum_{h=1}^H \left(f(s_h, a_h)+\barf(s_{h+1})-\barf(s_h) \right)^2 \right].
    \end{align}
    Next, for every $h\in[H]$, we apply \cref{lem:chi-square} on the function $(s_h,a_h,s_{h+1})\mapsto f(s_h, a_h) - \barf(s_h) + \barf(s_{h+1})$ to derive
    \begin{align}\label{eq: fbar-f-chisq}
    \begin{aligned}
        \MoveEqLeft
        \left(\EE_{\pi}\left[f(s_h, a_h) - \barf(s_h) + \barf(s_{h+1})\right]\right)^2 \\
        \leq&~ \chisq{d^\pi_h}{d^{\mu}_h}\cdot \EE_{\mu}\left[\left(f(s_h, a_h) - \barf(s_h) + \barf(s_{h+1})\right)^2\right],
    \end{aligned}
    \end{align}
    where we again use the fact that $\EE_{\mu}\left[f(s_h, a_h)+\barf(s_{h+1})-\barf(s_h)\right]=0$.
    
    Furthermore, we note that $\EE_{\pi} \barf(s_1)=\EE_{s_1\sim \rho} \barf(s_1)=\EE_{\mu} \barf(s_1)$, and hence
    \begin{align}\label{eq: fbar-s1}
        \left(\EE_{\pi} \barf(s_1)\right)^2=\left(\EE_{\mu} \barf(s_1)\right)^2\leq \EE_{\mu} \barf(s_1)^2.
    \end{align}

    Therefore, combining the inequalities above, we have
    \begin{align*}
        &~\left(\EE_{\pi}\left[\sum_{h=1}^H f(s_h, a_h)\right]\right)^2 \\
        =&~ \left(\barf(s_1)+\sum_{h=1}^H\EE_{\pi}\left[f(s_h, a_h) - \barf(s_h) + \barf(s_{h+1})\right]\right)^2 \\
        \stackrel{(i)}{\leq }&~ \left(\sqrt{\EE_{\mu} \barf(s_1)^2}+\sum_{h=1}^H\sqrt{\chisq{d^\pi_h}{d^{\mu}_h}\cdot \EE_{\mu}\left[\left(f(s_h, a_h) - \barf(s_h) + \barf(s_{h+1})\right)^2\right]} \right)^2 \\
        \stackrel{(ii)}{\leq }&~ \left(1+\sum_{h=1}^H \chisq{d^\pi_h}{d^{\mu}_h} \right) \left(\EE_{\mu} \barf(s_1)^2 + \sum_{h=1}^H \EE_{\mu}\left(f(s_h, a_h) - \barf(s_h) + \barf(s_{h+1})\right)^2 \right) \\
        \stackrel{(iii)}{=}&~ \left(1+\sum_{h=1}^H \chisq{d^\pi_h}{d^{\mu}_h} \right)\cdot \EE_{\mu} \left[\left(\sum_{h=1}^H f(s_h, a_h)\right)^2\right],
    \end{align*}
    where $(i)$ uses \pref{eq: fbar-f-chisq} and \pref{eq: fbar-s1}, $(ii)$ uses Cauchy-Schwarz inequality, and (iii) uses \pref{eq: fbar-f}.
    Hence \pref{eq: change-traj-eq1} is verified.
\end{proof}

Recall the definition of state-action concentrability (e.g. Eq. (5) in \citet{jia2025we}):
$$C_{\mathtt{sa}}(\pi; \mu) = \max_{h\in [H]}\sup_{s_h\in \calS_h, a_h\in \calA} \frac{d_h^\pi(s_h, a_h)}{d_h^{\mu}(s_h, a_h)}.$$
We have following direct corollary of \cref{lem: change-of-trajectory-measure}, after noticing that the $\chi^2$ divergence can be upper bounded by the state concentrability. However, we remark that the state concentrability might yield a much more pessimistic bound compared to the upper bound of \cref{lem: change-of-trajectory-measure}.
\begin{corollary}
\label[corollary]{cor:cotm-cor}
    Given MDP $M = (H, \calS, \calA, \calP, r, \rho)$. We use $d_h^\pi(\cdot), d_h^{\mu}(\cdot)\in \Delta(\calS_h\times\calA)$ to denote the occupancy measure of the MDP under policy $\pi$ and $\mu$. Then we have
    \begin{align*}
        \sup_{f} \frac{\left(\EE_{\pi}\left[\sum_{h=1}^H f(s_h, a_h)\right]\right)^2}{\EE_{\mu}\left[\left(\sum_{h=1}^H f(s_h, a_h)\right)^2\right]}\le H\cdot C_{\mathtt{sa}}(\pi;\mu),
    \end{align*}
    where the supremum is over all functions $f: \calS\times\calA\to \RR$.
\end{corollary}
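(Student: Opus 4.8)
The plan is to obtain this bound as a one-step consequence of the change-of-trajectory-measure lemma, by controlling each per-step $\chi^2$-divergence with the state-action concentrability. First I would apply \cref{lem: change-of-trajectory-measure}, which already establishes
\[
\sup_{f}~ \frac{\left(\EE_{\pi}\left[\sum_{h=1}^H f(s_h, a_h)\right]\right)^2}{\EE_{\mu}\left[\left(\sum_{h=1}^H f(s_h, a_h)\right)^2\right]}\le 1+\sum_{h=1}^H \chisq{d_h^\pi}{d_h^{\mu}},
\]
so it remains only to bound the right-hand side by $H\cdot C_{\mathtt{sa}}(\pi;\mu)$.

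The key step is a uniform bound on each individual $\chi^2$-divergence. Expanding the definition, $\chisq{d_h^\pi}{d_h^{\mu}} = \sum_{s_h,a_h} d_h^\pi(s_h,a_h)\,\frac{d_h^\pi(s_h,a_h)}{d_h^{\mu}(s_h,a_h)} - 1$, I would replace the likelihood ratio $\frac{d_h^\pi(s_h,a_h)}{d_h^{\mu}(s_h,a_h)}$ by its uniform upper bound $C_{\mathtt{sa}}(\pi;\mu)$, which holds for every $(s_h,a_h)$ and every $h$ directly by the definition of the state-action concentrability. Since $d_h^\pi$ is a probability distribution on $\calS_h\times\calA$, the leftover factor $\sum_{s_h,a_h} d_h^\pi(s_h,a_h)$ equals one, so this pointwise substitution yields $\chisq{d_h^\pi}{d_h^{\mu}} \le C_{\mathtt{sa}}(\pi;\mu) - 1$ for each $h$.

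Summing over $h=1,\dots,H$ then gives $1+\sum_{h=1}^H \chisq{d_h^\pi}{d_h^{\mu}} \le 1 + H\big(C_{\mathtt{sa}}(\pi;\mu)-1\big) = H\cdot C_{\mathtt{sa}}(\pi;\mu) - (H-1)$. To collapse this to the stated clean form, I would note that the concentrability always satisfies $C_{\mathtt{sa}}(\pi;\mu)\ge 1$: because both $d_h^\pi$ and $d_h^{\mu}$ sum to one over $\calS_h\times\calA$, the supremum of their ratio cannot be strictly below one (otherwise $\sum_{s_h,a_h} d_h^\pi < \sum_{s_h,a_h} d_h^{\mu} = 1$, a contradiction). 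Hence $-(H-1)\le 0$ and the bound reduces to $H\cdot C_{\mathtt{sa}}(\pi;\mu)$, as claimed. There is essentially no genuine obstacle here — the result is an immediate corollary of \cref{lem: change-of-trajectory-measure} — and the only point requiring a moment's care is observing $C_{\mathtt{sa}}(\pi;\mu)\ge 1$ so that the additive $-(H-1)$ term may be dropped.
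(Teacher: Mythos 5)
Your argument is correct and follows essentially the same route as the paper: apply \cref{lem: change-of-trajectory-measure} and then bound each per-step $\chi^2$-divergence via $1+\chisq{d_h^\pi}{d_h^{\mu}}=\sum_{s_h,a_h}\frac{d_h^\pi(s_h,a_h)^2}{d_h^{\mu}(s_h,a_h)}\le C_{\mathtt{sa}}(\pi;\mu)$, exactly as in the paper's proof of \cref{cor:cotm-cor}. Your extra observation that $C_{\mathtt{sa}}(\pi;\mu)\ge 1$ (needed to drop the $-(H-1)$ term after summing over $h$) is a valid and slightly more explicit bookkeeping of a step the paper leaves implicit.
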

\begin{proof}[\cpfname{cor:cotm-cor}]
    In view of \pref{lem: change-of-trajectory-measure}, we only need to verify that 
    \begin{equation} \label{eq: chi-sq-cov} 
        1 + \chisq{d_h^\pi}{d_h^{\mu}}\le C_{\mathtt{sa}}.
    \end{equation}
    We have 
    \begin{align*} 
        1 + \chisq{d^\pi_h}{d^{\mu}_h} & = \sum_{s_h\in \calS_h, a_h\in \calA} \frac{d_h^\pi(s_h, a_h)^2}{d^{\mu}_h(s_h, a_h)} \\
        & \le \sum_{s_h\in \calS_h, a_h\in \calA} d^\pi_h(s_h, a_h)\cdot C_{\mathtt{sa}}(\pi; \mu) = C_{\mathtt{sa}}(\pi; \mu),
    \end{align*}
    and \pref{eq: chi-sq-cov} is verified.
\end{proof}

\section{Proof of \pref{thm:tbrm_main}}\label[appendix]{app: proof-tbrm}

\newcommand{\hR}{\widehat{R}}
\newcommandx{\LTBRM}[1][1=\cD]{\Lcal_{#1}^{\sf TBRM}}
Recall that the TBRM loss defined in \cref{eq:brm_mc_obj} is given by
\begin{align}
\LTBRM(\theta) \coloneqq \frac{1}{|\Dcal|}\sum_{\tau\in\Dcal} \left(\logit_\theta(s_1,a_1) - \frac{r(\tau)}{\beta} - \log\piref(\tau) + \sum_{h=2}^{H}\log\pi_\theta(a_{h} \mid s_{h}) \right)^2.
\end{align}
In the following, to simplify presentation, we define
\begin{align*}
    f_{\theta,1}(s_1,a_1)=&~\logit_\theta(s_1,a_1) - \frac{r(s_1,a_1)}{\beta} - \log\piref(a_1 \mid s_1), \\
    f_{\theta,h}(s_h,a_h)=&~\log \pi_\theta(a_h \mid s_h) - \frac{r(s_h,a_h)}{\beta} - \log\piref(a_h \mid s_h), \qquad \forall h>1.
\end{align*}
and 
\begin{align*}
    f_\theta(\tau)\defeq&~ \logit_\theta(s_1,a_1) - \frac{r(\tau)}{\beta} - \log\piref(\tau) + \sum_{h=2}^{H}\log\pi_\theta(a_{h} \mid s_{h}) \\
    =&~ \sum_{h=1}^H f_{\theta,h}(s_h,a_h), \qquad \forall \tau=(s_1,a_1,\cdots,s_H,a_H).
\end{align*}

\paragraph{Uniform convergence} Before applying \cref{lem: change-of-trajectory-measure}, we need to first relate the empirical loss $\LTBRM$ to the population loss. We introduce the following assumption on the parametrization.
\begin{assumption}[Bounded and smooth parametrization]\label[assumption]{asmp:theta-bound}
There exists constant $C_\Theta\geq 1$ and parameter $L_\Theta$ such that for any $\theta\in \Theta$, it holds that $\forall s\in\cS, a\in\cA$,
\begin{align*}
    \abs{\logit_\theta(s,a)}\leq \frac{C_\Theta}{\beta}, \qquad 
    \nrm{\nabla \logit_\theta(s,a)}_*\leq L_\Theta,
\end{align*}
where $\nrm{\cdot}_*$ is the dual norm of $\nrm{\cdot}$. 
We also assume $\abs{\log\piref(a\mid s)}\leq \frac{C_\Theta}{\beta}$ $\forall s\in\cS, a\in\cA$.
\end{assumption}

\begin{lemma}\label[lemma]{lem:TBRM-stat}
Fix $\delta\in(0,1)$.
Suppose that \cref{asmp:theta-bound} holds. Then \whp~(over the randomness of the dataset $\cD$),
\begin{align*}
    \EE_{\tau\sim \mu} f_\theta(\tau)^2 \leq 2\LTBRM(\theta)+ \frac{\epsstat(N)}{\beta^2},
\end{align*}
where the statistical error $\epsstat(N)$ is defined as
\begin{align*}
    \epsstat(N)\defeq c\cdot C_\Theta^2H^2\paren{ \frac{\log (\Ncov(\Theta,\alpha)/\delta)}{N}+HL_\Theta\beta\alpha }
\end{align*}
where $c>0$ is a large absolute constant and $\alpha\geq 0$ is a fixed parameter.
\end{lemma}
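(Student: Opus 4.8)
The plan is to prove a uniform (over $\theta\in\Theta$) one-sided concentration inequality for the nonnegative statistic $f_\theta(\tau)^2$, via a Bernstein argument whose variance term is self-bounded by the mean, followed by a covering argument over $\Theta$. The factor $2$ in front of $\LTBRM(\theta)$ and the $1/N$ (rather than $1/\sqrt N$) rate are precisely the signature of this ``fast-rate'' localization. As a first step I would use \cref{asmp:theta-bound} to show that $f_\theta(\tau)^2$ takes values in $[0,B]$ with $B=O(H^2C_\Theta^2/\beta^2)$, uniformly in $\theta$ and $\tau$: each summand $f_{\theta,h}$ is a sum of $\logit_\theta$ (or $\log\pi_\theta=\logit_\theta-V_\theta$), $r/\beta$, and $\log\piref$, each of magnitude $O(C_\Theta/\beta)$ under the assumption (the bound on $\log\pi_\theta$ follows from $V_\theta(s)\in[\max_a\logit_\theta(s,a),\,\log\abs{\cA}+\max_a\logit_\theta(s,a)]$), so $\abs{f_\theta(\tau)}\le O(HC_\Theta/\beta)$ and $N\defeq\abs{\cD}$ i.i.d.\ draws give $\LTBRM(\theta)$ as the empirical mean of $f_\theta(\tau)^2$.

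For a \emph{fixed} $\theta$, set $Z=f_\theta(\tau)^2$. Since $0\le Z\le B$, its variance is self-bounded, $\mathrm{Var}(Z)\le\EE[Z^2]\le B\,\EE[Z]$. Applying Bernstein's inequality and then AM-GM to split the resulting $\sqrt{B\,\EE[Z]\log(1/\delta)/N}$ term yields
\begin{align*}
\EE_{\tau\sim\mu}f_\theta(\tau)^2\;\le\; 2\,\LTBRM(\theta)+O\!\paren{\frac{B\log(1/\delta)}{N}},
\end{align*}
which already has the target shape, with leading statistical term $O\paren{C_\Theta^2H^2\log(1/\delta)/(\beta^2N)}$.

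To make the bound hold simultaneously for all $\theta\in\Theta$ (so that it can later be applied to the data-dependent $\htheta$), I would take an $\alpha$-net of $\Theta$ of cardinality $\Ncov(\Theta,\alpha)$, apply the previous step with a union bound over the net (turning $\log(1/\delta)$ into $\log(\Ncov(\Theta,\alpha)/\delta)$), and control the discretization error by Lipschitzness. The gradient bound in \cref{asmp:theta-bound} gives $\nrm{\nabla_\theta\logit_\theta}_*\le L_\Theta$ and, since $\nabla_\theta V_\theta(s)=\sum_{a}\pi_\theta(a\mid s)\,\nabla_\theta\logit_\theta(s,a)$ is a convex combination, $\nrm{\nabla_\theta\log\pi_\theta}_*\le 2L_\Theta$; summing over $h$ shows $f_\theta(\tau)$ is $2HL_\Theta$-Lipschitz in $\theta$, so $\abs{f_\theta(\tau)^2-f_{\theta'}(\tau)^2}\le O(H^2C_\Theta L_\Theta\alpha/\beta)$ whenever $\nrm{\theta-\theta'}\le\alpha$. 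Transferring the net bound to an arbitrary $\theta$ incurs this error on both $\EE_\mu f_\theta^2$ and $\LTBRM(\theta)$, contributing the $HL_\Theta\beta\alpha$ term of $\epsstat(N)/\beta^2$ (after absorbing the $C_\Theta\ge1$, $H\ge1$ slack into the constant $c$). Combining the two error contributions gives the claimed bound.

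The substantive step is the second one: obtaining the \emph{multiplicative} constant $2$ and the $1/N$ rate rather than a crude $1/\sqrt N$ Hoeffding bound. This is what lets \cref{thm:tbrm_main} pay only $\beta^2\epsopt$ and $H^2\dim(\Theta)/\abs{\cD}$ inside the square root; it hinges on the self-bounding $\mathrm{Var}(Z)\le B\,\EE[Z]$ together with the AM-GM rebalancing, and would fail if $Z$ were not a bounded nonnegative square. A secondary technical point is the Lipschitz control of $\log\pi_\theta$ through the $\softmax$/$\mathsf{log\text{-}sum\text{-}exp}$ normalizer, which the convex-combination form of $\nabla_\theta V_\theta$ handles cleanly.
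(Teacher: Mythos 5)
Your proposal is correct and follows essentially the same route as the paper's proof: the paper likewise bounds $\absn{f_\theta(\tau)}$ via \cref{asmp:theta-bound}, obtains the multiplicative constant $2$ and the $1/N$ rate from a Freedman/Bernstein argument with the self-bounded variance of the squared residual, and handles uniformity over $\Theta$ by exactly the covering-plus-Lipschitz argument you describe, including the same computation $\nrm{\nabla_\theta f_\theta(\tau)}_*\le 2HL_\Theta$ via the convex-combination form of $\nabla_\theta V_\theta$. The only cosmetic difference is that the paper states the union bound in terms of a cover of the induced function class $\cF=\set{f_\theta}$ and then converts $\Ncov(\cF,\alpha)\le\Ncov(\Theta,\alpha/(2HL_\Theta))$, whereas you cover $\Theta$ directly.
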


The above upper bound can be further simplified by the standard assumption on the covering number of $\Theta$.
\begin{assumption}[Parametric function class]\label{asmp:parametric-theta}
The parameter space $\Theta\subseteq\set{\theta\in\R^d: \nrm{\theta}\leq R}$. In this case, we write $\dim(\Theta)=d$.
\end{assumption}
Under \cref{asmp:parametric-theta}, it is clear that $\log \Ncov(\Theta,\alpha)\leq O\paren{d\log(1/\alpha)}$ for all $\alpha>0$ (see e.g., \citet{wainwright2019high}. Therefore, \cref{lem:TBRM-stat} implies that $\epsstat(N)\asymp \frac{H^2}{N}$ (up to poly-logarithmic factors).

\paragraph{Bounding the sub-optimality}

Under \cref{asmp:realizable}, it is clear that $f_\ths \equiv 0$ and hence $\LTBRM(\ths)=0$ (as we have argued in \cref{sec:tbrm_theory}). Therefore, using the condition that $\Lcal_{\Dcal}^{\sf TBRM}(\htheta)-\inf_{\theta\in\Theta}\Lcal_{\Dcal}^{\sf TBRM}(\theta)\leq \epsopt$, we have
\begin{align*}
    \EE_{\tau\sim \mu} f_{\htheta}(\tau)^2\leq 2\epsopt+\beta^{-2}\epsstat(N).
\end{align*}

In the following, we denote $\pihat\defeq \pi_{\htheta}$, $\Qhat(s,a)\defeq \logit_{\htheta}(s,a)$, and $\Vhat$ be the corresponding value function. Then, by \cref{cor:all-pol-sub-opt}, we have
\begin{align*}
J_\beta(\pi^\star) - J_\beta(\pihat)
\leq &~ 2 \beta \max_{\pi\in\set{\pi^\star,\pihat}} \left| \E_{\pi}\left[\sum_{h=1}^{H} \left( \Qhat(s_h,a_h) - \left(\Tcal_\beta \Qhat\right) (s_h,a_h) \right)\right] \right|.
\end{align*}
Note that the MDP is deterministic, and hence $\left(\Tcal_\beta \Qhat\right) (s_h,a_h)=R_\beta(s_h,a_h)+\Vhat(s_{h+1})$ holds deterministically. Therefore, for any fixed policy $\pi$, we have
\begin{align*}
    &~\E_{\pi}\left[\sum_{h=1}^{H} \left( \Qhat(s_h,a_h) - \left(\Tcal_\beta \Qhat\right) (s_h,a_h) \right)\right] \\
    =&~\E_{\pi}\left[\sum_{h=1}^{H} \left( \Qhat(s_h,a_h)-R_\beta(s_h,a_h)-\Vhat(s_{h+1}) \right)\right] \\
    =&~ \E_{\pi}\left[\Qhat(s_1,a_1)-\sum_{h=1}^{H}R_\beta(s_h,a_h)+\sum_{h=1}^{H} \brac{ \Qhat(s_h,a_h)-\Vhat(s_{h}) }\right] \\
    =&~ \E_{\pi}\left[\Qhat(s_1,a_1)-R_\beta(\tau)+\sum_{h=1}^{H} \log \pi_{\htheta}(a_h\mid s_h)\right]
    = \E_\pi \brac{ f_{\htheta}(\tau) }.
\end{align*}
Further, by \cref{lem: change-of-trajectory-measure}, it holds that for any $\pi=\pi_\theta$,
\begin{align*}
    \paren{ \E_\pi \brac{ f_{\htheta}(\tau) } }^2\leq H\Conchis{\mu} \cdot \E_{\tau\sim \mu} f_{\htheta}(\tau)^2
    \leq H\Conchis{\mu} \paren{2\epsopt+\beta^{-2}\epsstat(N)}.
\end{align*}
Therefore, we can conclude that
\begin{align*}
    J_\beta(\pi^\star) - J_\beta(\pihat)
\leq &~ 2 \beta \max_{\pi\in\set{\pi^\star,\pihat}}\sqrt{\E_\pi \brac{ f_{\htheta}(\tau) }}\leq 2\sqrt{H\Conchis{\mu} \paren{2\beta^2\epsopt+\epsstat(N)}}.
\end{align*}
This is the desired upper bound.
\qed

\subsection{Proof of \pref{lem:TBRM-stat}}

By \cref{asmp:theta-bound}, it holds that $\abs{f_\theta(\tau)}\leq B:=\frac{2C_\Theta +1}{\beta}$ for any $\theta\in\Theta$ and any trajectory $\tau$.
Using Freedman's inequality with the standard union bound, we have the following: \whp~(over the randomness of the dataset $\cD$), for all $\theta\in\Theta$,
\begin{align*}
    \EE_{\tau\sim \mu} f_\theta(\tau)^2 \leq 2\LTBRM(\theta)+ c_0B^2\paren{ \frac{\log \Ncov(\cF,\alpha)+\log(1/\delta)}{N}+B^{-1}\alpha },
\end{align*}
where $\cF=\set{f_\theta:\theta\in\Theta}$ is the function class induced by $\Theta$, and $c_0>0$ is an absolute constant. 

Next, for any fixed trajectory $\tau$, it holds that
\begin{align*}
    \nabla_\theta f_\theta(\tau)
    =&~\nabla_\theta\logit_\theta(s_1,a_1) + \sum_{h=2}^{H} \nabla_\theta\log\pi_\theta(a_{h} \mid s_{h}) \\
    =&~ \nabla_\theta\logit_\theta(s_1,a_1) + \sum_{h=2}^{H} \brac{ \nabla_\theta\logit_\theta(s_{h},a_h) - \EE_{a_h'\sim \pi_\theta(\cdot\mid s_h)}\nabla_\theta\logit_\theta(s_{h},a_h')  }.
\end{align*}
Hence, we can upper bound $\nrm{\nabla_\theta f_\theta(\tau)}_*\leq 2HL_\Theta$. This immediately implies that
\begin{align*}
    \linf{f_\theta-f_{\theta'}}=\sup_{\tau} \abs{ f_\theta(\tau)-f_{\theta'}(\tau) }\leq 2HL_\Theta\nrm{\theta-\theta'}, \qquad \forall \theta,\theta'\in\Theta.
\end{align*}
Therefore, we have
\begin{align*}
    \Ncov(\cF,\alpha)\leq \Ncov\paren{\Theta,\frac{\alpha}{2HL_\Theta}}.
\end{align*}
Combining the inequalities above and rescaling $\alpha\leftarrow 2HL_\Theta\alpha$ completes the proof.
\qed

\neurips{}
\section{Hard Instances for Iterative Algorithms}
\label[appendix]{app:hard-instances}

In this section, we demonstrate the advantages of direct optimization (TBRM) over its iterative variant using a simple but illustrative hard instance.

By the autoregressive function approximation definition, we can rewrite the loss function of TBRM as follows:
\begin{align*}
\Lcal_{\Dcal}^{\sf TBRM}(\theta) = &~ \frac{1}{|\Dcal|}\sum_{\tau\in\Dcal} \left(\logit_\theta(s_1,a_1) - \frac{r(\tau)}{\beta} - \log\piref(\tau) + \sum_{h=2}^{H}\log\pi_\theta(a_{h} \mid s_{h}) \right)^2
\\
= &~ \frac{1}{|\Dcal|}\sum_{\tau\in\Dcal} \left(V_\theta(s_1) - \frac{r(\tau)}{\beta} + \sum_{h=1}^{H}\log\frac{\pi_\theta(a_{h} \mid s_{h})}{\piref(a_{h} \mid s_{h})} \right)^2.
\end{align*}

A typical iterative variant of this approach can be formulated as:
\begin{align}
    \label{eq:pcl-update}
    \theta_{t+1} \leftarrow \argmin_\theta \frac{1}{|\Dcal|} \sum_{\tau\in\Dcal} \left(V_{\theta_t}(s_1) - \frac{r(\tau)}{\beta} + \sum_{h=1}^{H}\log\frac{\pi_\theta(a_{h} \mid s_{h})}{\piref(a_{h} \mid s_{h})} \right)^2,
\end{align}
where $V_{\theta_t}$ is fixed from the previous iteration while optimizing for $\theta$.

To illustrate the difference between these approaches, we consider a simple 2-arm bandit problem where $r(a_1) = 1$ and $r(a_2) = 0$. We will show that, even at the population level, the iterative algorithm becomes trapped at a suboptimal solution, whereas TBRM converges to the globally optimal solution.

For this example, our Q-function class contains only two elements: $Q^\dagger = (0,0)$ and $Q^\star = (10,0)$, corresponding to the uniform policy and optimal policy, respectively.

Suppose at iteration $t$ we have $Q_t = Q^\dagger = (0,0)$, with temperature parameter $\beta = 0.1$, and data uniformly distributed over actions. The loss for the next iteration becomes:
\begin{align*}
\ell_t(Q) \coloneqq (1 - Q(a_1) + V_Q - V_t)^2 + (- Q(a_2) + V_Q - V_t)^2,
\end{align*}
where $V_t = \beta \log\sum_a\exp(\nicefrac{Q_t(a)}{\beta})$ represents the value function from the current iteration.

In this setting, we can verify that $\ell_t(Q^\dagger) < \ell_t(Q^\star)$, meaning the iterative algorithm will select $Q_{t+1} = Q_t = Q^\dagger = (0,0)$ and remain stuck at this suboptimal solution. In contrast, $Q^\star$ is the global minimizer of the TBRM loss by definition, demonstrating the advantage of direct optimization over the iterative approach.

\section{Implementation Details}\label[appendix]{appendix:impl}

\subsection{Training Details}

To ensure rigorous and reproducible experimentation, we employ standardized and universally adopted hyperparameter settings, as detailed in \cref{sec:exp-main-result}. For baselines, we adhere closely to the recommended hyperparameter configurations as presented in VERL. Specifically, PPO training utilizes a learning rate of $1 \times 10^{-6}$ for the actor policy and $1 \times 10^{-5}$ for the critic policy. We incorporate a KL divergence coefficient and an entropy regularization coefficient of $0.001$ for PPO. The clip ratio for the actor loss function is set to $0.2$. For the GRPO baseline, we maintain the same KL divergence coefficient as PPO for the KL regularization term. To balance computational efficiency and performance, we utilize a prompt batch size of 128 and a maximum response length of 2048 tokens per training iteration. All generated responses are sampled using a temperature parameter of 1.0. For the TBRM method, the parameter $\beta$ is consistently set to $0.002$ across all experimental conditions. The learning rate for TBRM experiments is $2.5 \times 10^{-6}$, with the exception of the TBRM$_{n=1}$ with Qwen2.5-Math-7B model adopting a learning rate of $2 \times 10^{-6}$. All models are trained for a total of 760 steps.

All experiments are conducted on the same platform featuring 4x H100 80GB GPUs.

\subsection{Training Efficiency of TBRM}\label[appendix]{appendix:train-eff}

We compare the resource cost of TBRM, GRPO, and PPO in \cref{par:train-eff} by examining wall-clock time and maximal GPU memory usage. The labels for the time segments used in \cref{fig:train_eff} are detailed in \cref{tab:train-eff-label-desc}. To ensure a fair comparison, all experiments were conducted on the same platform featuring 4x H100 GPUs, and all configurations were standardized. Specifically, we employed vllm as the rollout backend and set \texttt{gpu\_memory\_utilization} to $0.4$. For policy updates, we set \texttt{micro\_batch\_size\_per\_gpu} to $1$, and for calculating log probabilities for both $\pi_{\theta_\mathsf{old}}$ and $\piref$, we also used a \texttt{micro\_batch\_size\_per\_gpu} of $1$. The value function model update for PPO also utilized a \texttt{micro\_batch\_size\_per\_gpu} of $1$. We use Qwen2.5-Math-7B as the base model, with a prompt batch size of $128$. 

\begin{table}[ht!]
    \centering
    \begin{tabular}{p{0.2\textwidth}m{0.5\textwidth}l}
        \toprule
        Name & Description & Involved Algorithms \\
        \midrule
        $\mathsf{ref\_prob}$ & Computing $\pi_{\text{ref}}(a_t\mid s_t)$ & TBRM, GRPO, PPO \\
        $\mathsf{old\_prob}$ & Computing $\pi_{\theta_{\text{old}}}(a_t\mid s_t)$ & GRPO, PPO \\
        $\mathsf{update\_policy}$ & Updating the policy parameter $\theta$ & TBRM, GRPO, PPO \\
        $\mathsf{ppo\_update\_value}$ & Updating the value function model $V_\phi$ & PPO \\
        $\mathsf{ppo\_value}$ & Computing $V_\phi (s_t)$ & PPO \\
        $\mathsf{rollout}$ & Sampling trajectories from the prompt set & TBRM, GRPO, PPO \\
        $\mathsf{other}$ &  Miscellaneous computations, e.g., rule-based reward $r(\tau)$, advantage (for GRPO and PPO), etc. &  \\
        \bottomrule
    \end{tabular}
    \label{tab:train-eff-label-desc}
    \caption{Description of labels of time segments in \cref{fig:train_eff}. }
\end{table}

\subsection{Prompt Templates}

\paragraph{Qwen-Math Prompt Template}
We use the default prompt template of Qwen2.5-Math in the main experiments (\cref{sec:exp-main-result}). 
\begin{tcolorbox}[colback=black!5, colframe=black!60, sharp corners, boxsep=2pt, top=2pt, bottom=2pt, left=4pt, right=4pt, boxrule=0.5pt, breakable]
\texttt{<|im\_start|>}system \\
Please reason step by step, and  put your final answer within \textbackslash{}boxed\{\}.\texttt{<|im\_end|>} \\
\texttt{<|im\_start|>}user \\
\textbf{question} \texttt{<|im\_end|>} \\
\texttt{<|im\_start|>}assistant
\end{tcolorbox}

\paragraph{DeepSeek-R1 Prompt Template}
We use DeepSeek-R1 prompt template in the experiment discussed in \cref{par:response_length}. 
\begin{tcolorbox}[colback=black!5, colframe=black!60, sharp corners, boxsep=2pt, top=2pt, bottom=2pt, left=4pt, right=4pt, boxrule=0.5pt, breakable]
A conversation between User and Assistant. The user asks a question, and the Assistant solves it. The Assistant first thinks about the reasoning process in the mind and then provides the user with the answer. The reasoning process and answer are enclosed within \texttt{<think> ... </think>} and \texttt{<answer> ... </answer>} tags, respectively, i.e., \texttt{<think>} reasoning process here \texttt{</think>} \texttt{<answer>} answer here \texttt{</answer>}. \\
User: \textbf{question} \\
Assistant:
\end{tcolorbox}

\section{Additional Experimental Results}\label[appendix]{appendix:additional-exp-result}

\subsection{Training-Time Performance}\label[appendix]{appendix:perbench_accuracy}

Following the setup described in \cref{sec:exp-setup}, we train and evaluate models of various sizes. Their performance on benchmarks during training is presented in \cref{fig:perbench_accuracy_15B} and \cref{fig:perbench_accuracy_7B}.

\begin{figure}
    \centering
    \includegraphics[width=1.0\linewidth]{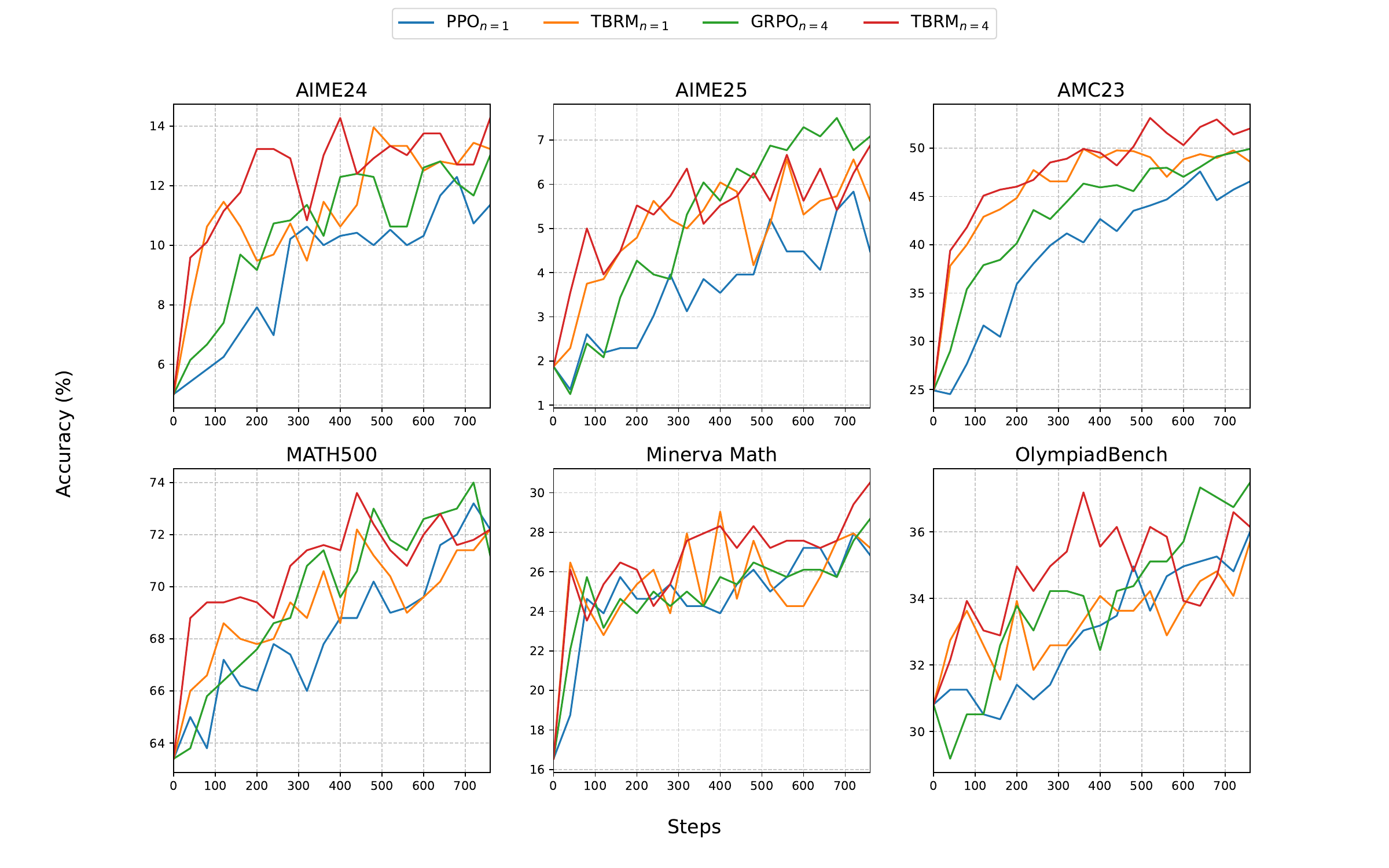}
    \caption{Learning accuracy across benchmarks with model Qwen2.5-Math-1.5B. }
    \label{fig:perbench_accuracy_15B}
\end{figure}

\begin{figure}
    \centering
    \includegraphics[width=1.0\linewidth]{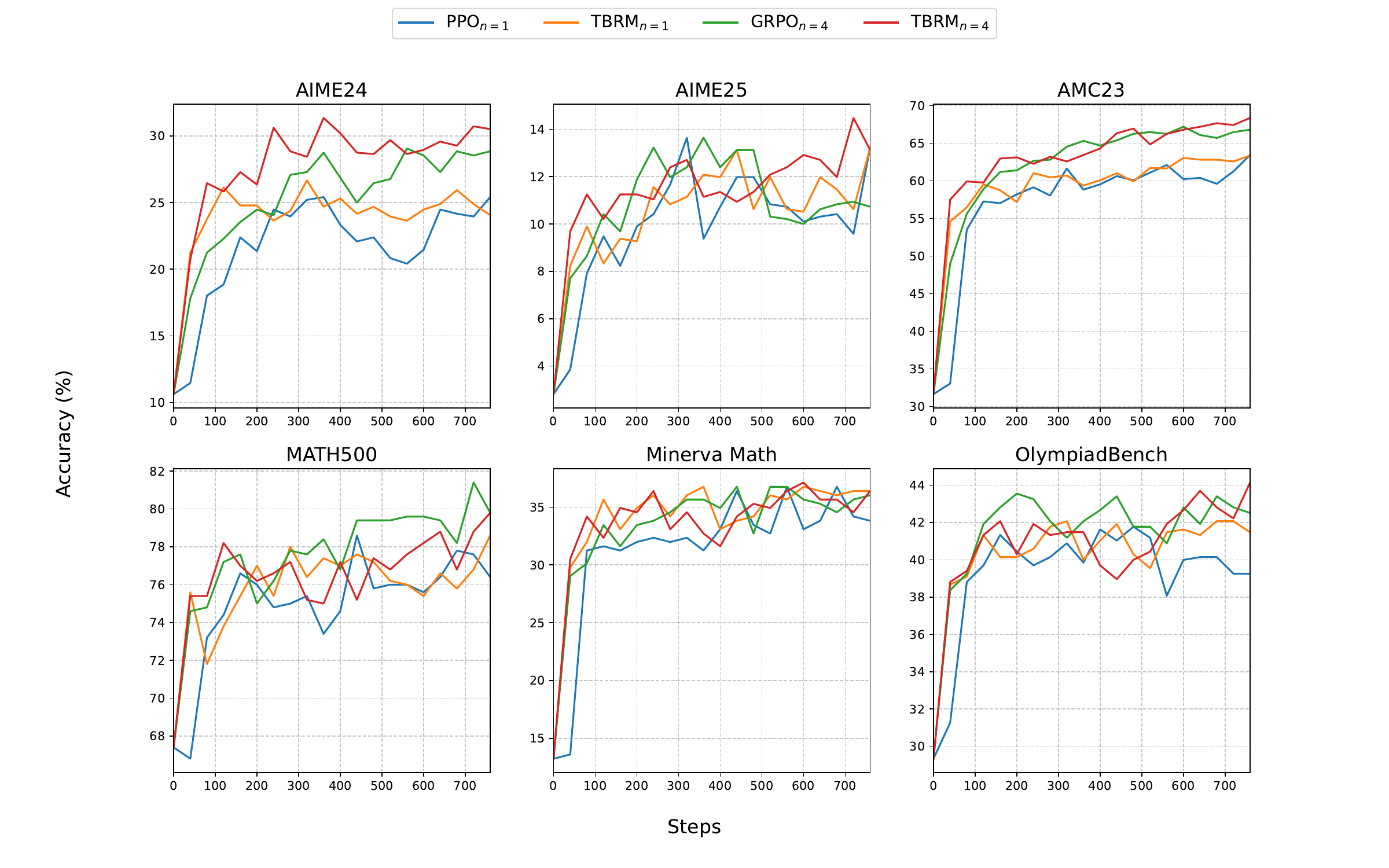}
    \caption{Learning accuracy across benchmarks with model Qwen2.5-Math-7B. }
    \label{fig:perbench_accuracy_7B}
\end{figure}

\subsection{Qualitative Analysis}\label[appendix]{appendix:qual-analysis}
We delve into the responses of the TBRM model and observe intriguing reasoning behaviors. In this subsection, we highlight emergent reasoning patterns that the model develops during training, which contribute to its improved reasoning capabilities. Unless otherwise noted, all responses presented here are sampled from Qwen2.5-Math-7B-TBRM$_{n=4}$.

\paragraph{Decomposition}
The model demonstrates the ability to break down complex problems into smaller, manageable subproblems, solving each sequentially to construct the final solution. This hierarchical strategy reflects a structured and efficient approach to reasoning.

\begin{tcolorbox}[breakable, colframe=gray, coltext=black, title=Decomposition, fonttitle=\bfseries, boxrule=0.5mm, width=\textwidth]

(MATH500) For what real values of $x$ is $-4<x^{4}+4x^{2}<21$ satisfied? Express your answer in interval notation.
\newline

{\color{blue}To solve the inequality $-4 < x^4 + 4x^2 < 21$, we need to break it into two parts and solve each part separately:

\begin{enumerate}
    \item Solve the inequality $x^4 + 4x^2 > -4$.
    \item Solve the inequality $x^4 + 4x^2 < 21$.
\end{enumerate}
}
\textbf{Step 1: Solve $x^4 + 4x^2 > -4$}

The expression $x^4 + 4x^2$ is always non-negative for all real numbers $x$ because both $x^4$ and $4x^2$ are non-negative. Therefore, $x^4 + 4x^2 \geq 0 > -4$ for all $x$. So, the inequality $x^4 + 4x^2 > -4$ is satisfied for all real numbers $x$.

\textbf{Step 2: Solve $x^4 + 4x^2 < 21$}

To solve $x^4 + 4x^2 < 21$, let's make a substitution. Let $y = x^2$. Then the inequality becomes $y^2 + 4y < 21$. Rearranging terms, we get $y^2 + 4y - 21 < 0$. We need to find the roots of the equation $y^2 + 4y - 21 = 0$ to determine the interval for $y$.

The quadratic equation $y^2 + 4y - 21 = 0$ can be factored as $(y + 7)(y - 3) = 0$. So, the roots are $y = -7$ and $y = 3$. Since $y = x^2$ and $x^2$ is always non-negative, we only consider the positive root $y = 3$. Thus, we need to solve $0 \leq y < 3$.

Substituting back $y = x^2$, we get $0 \leq x^2 < 3$. This means $- \sqrt{3} < x < \sqrt{3}$.

\textbf{Combining the Results}

Since the first inequality $x^4 + 4x^2 > -4$ is satisfied for all real numbers $x$, the solution to the original inequality $-4 < x^4 + 4x^2 < 21$ is determined by the second inequality. Therefore, the solution is $- \sqrt{3} < x < \sqrt{3}$.

The final answer in interval notation is:

$$
\boxed{(-\sqrt{3}, \sqrt{3})}
$$

\end{tcolorbox}

\paragraph{Enumeration}
The model adopts an exhaustive strategy by systematically iterating through possibilities or repeatedly applying a specific operation until the correct answer emerges. This approach is particularly useful in combinatorial or search-based problems.

\begin{tcolorbox}[breakable, colframe=gray, coltext=black, title=Enumeration, fonttitle=\bfseries, boxrule=0.5mm, width=\textwidth]

(AIME24) Alice chooses a set $A$ of positive integers. Then Bob lists all finite nonempty sets $B$ of positive integers with the property that the maximum element of $B$ belongs to $A$. Bob's list has 2024 sets. Find the sum of the elements of A.
\newline

To solve this problem, we need to understand the relationship between the set $A$ and the number of nonempty subsets $B$ of positive integers where the maximum element of $B$ belongs to $A$. 

If $A = \{a_1, a_2, \ldots, a_n\}$, then for each element $a_i \in A$, the number of subsets $B$ where the maximum element is $a_i$ is given by the number of nonempty subsets of $\{1, 2, \ldots, a_i - 1\}$, which is $2^{a_i-1} - 1$. Summing over all elements in $A$, the total number of such subsets $B$ is:
$$
    (2^{a_1} - 1) + (2^{a_2} - 1) + \ldots + (2^{a_n} - 1) = 2^{a_1} + 2^{a_2} + \ldots + 2^{a_n} - n
$$
Given that this total is 2024, we can write:
$$
    2^{a_1} + 2^{a_2} + \ldots + 2^{a_n} - n = 2024
$$
or
$$
    2^{a_1} + 2^{a_2} + \ldots + 2^{a_n} = 2024 + n
$$

To find the appropriate values for $a_1, a_2, \ldots, a_n$, we need to find a combination of powers of 2 that sum up to just over 2024. {\color{blue}We can start by finding the largest power of 2 less than or equal to 2024. The largest power of 2 less than 2024 is $2^{10} = 1024$. }

{\color{blue}By subtracting 1024 from 2024, we get 1000. The next largest power of 2 less than or equal to 1000 is $2^9 = 512$. Subtracting 512 from 1000 gives 488. The next largest power of 2 less than or equal to 488 is $2^8 = 256$, leaving us with 232. The next largest power of 2 less than or equal to 232 is $2^7 = 128$, leaving us with 104. The next largest power of 2 less than or equal to 104 is $2^6 = 64$, leaving us with 40. The next largest power of 2 less than or equal to 40 is $2^5 = 32$, leaving us with 8. The next largest power of 2 less than or equal to 8 is $2^3 = 8$, leaving us with 0.
So, the sum of the powers of 2 is:}
$$
    1024 + 512 + 256 + 128 + 64 + 32 + 8 = 2024
$$
This means that we have $ n = 7 $ elements in set $ A $, and these elements are \{11, 10, 9, 8, 7, 6, 4\} because the powers of 2 correspond to $2^{10}, 2^9, 2^8, 2^7, 2^6, 2^5, 2^3$.

So, the set $ A = \{11, 10, 9, 8, 7, 6, 4\} $. The sum of the elements of $ A $ is:
$$
    11 + 10 + 9 + 8 + 7 + 6 + 4 = 55
$$
The final answer is:
$$
    \boxed{55}
$$
\end{tcolorbox}

\paragraph{Verification}
The model exhibits self-monitoring behavior by verifying intermediate results during problem solving. This helps catch early mistakes and reinforces correct reasoning steps, enhancing both accuracy and reliability.

\begin{tcolorbox}[breakable, colframe=gray, coltext=black, title=Verification, fonttitle=\bfseries, boxrule=0.5mm, width=\textwidth]

(MATH500) What is the length, in units, of the radius of a sphere whose volume and surface area, in cubic units and square units, respectively, are numerically equal?
\newline

To find the radius of a sphere whose volume and surface area are numerically equal, we need to set up the equations for the volume and surface area of a sphere and then solve for the radius $ r $.

The volume $ V $ of a sphere is given by:
$ V = \frac{4}{3} \pi r^3 $

The surface area $ A $ of a sphere is given by:
$ A = 4 \pi r^2 $

We are given that the volume and surface area are numerically equal, so we can set the two equations equal to each other:
$ \frac{4}{3} \pi r^3 = 4 \pi r^2 $

To solve for $ r $, we can simplify the equation by dividing both sides by $ 4 \pi r^2 $ (assuming $ r \neq 0 $):
$ \frac{r}{3} = 1 $
$ r = 3 $

So, the radius of the sphere is $ r = 3 $. {\color{blue}Let's verify this by substituting $ r = 3 $ back into the volume and surface area formulas to ensure they are indeed equal.}

The volume when $ r = 3 $ is:
$ V = \frac{4}{3} \pi (3)^3 = \frac{4}{3} \pi \cdot 27 = 36 \pi $

The surface area when $ r = 3 $ is:
$ A = 4 \pi (3)^2 = 4 \pi \cdot 9 = 36 \pi $

Since the volume and surface area are both $ 36 \pi $, our solution is correct. The length of the radius of the sphere is $ \boxed{3} $.
\end{tcolorbox}

\paragraph{Backtracking}
Upon encountering contradictions or errors, the model is able to reassess its prior steps, revise incorrect assumptions, and pursue an alternative line of reasoning. This reflective capability mirrors human-like problem-solving resilience.

\begin{tcolorbox}[breakable, colframe=gray, coltext=black, title=Backtracking, fonttitle=\bfseries, boxrule=0.5mm, width=\textwidth]

(MATH500) The asymptotes of a hyperbola are $y = 2x - 3$ and $y = 17 - 2x.$ Also, the hyperbola passes through the point $(4,7).$ Find the distance between the foci of the hyperbola.
\newline

To find the distance between the foci of the hyperbola, we need to follow these steps:

\begin{enumerate}
    \item \textbf{Determine the center of the hyperbola:}
    The center of the hyperbola is the intersection point of the asymptotes. To find this point, we solve the system of equations given by the asymptotes:
    $$
    y = 2x - 3
    $$
    $$
    y = 17 - 2x
    $$
    Setting the two equations equal to each other:
    $$
    2x - 3 = 17 - 2x
    $$
    Solving for $x$:
    $$
    4x = 20 \implies x = 5
    $$
    Substituting $x = 5$ back into one of the equations to find $y$:
    $$
    y = 2(5) - 3 = 10 - 3 = 7
    $$
    So, the center of the hyperbola is $(5, 7)$.
    \item \textbf{Determine the slopes of the asymptotes:}
    The slopes of the asymptotes are $2$ and $-2$. This indicates that the hyperbola is oriented with its transverse axis vertical or horizontal. Since the slopes are $\pm 2$, the hyperbola is of the form:
    $$
    \frac{(x - h)^2}{b^2} - \frac{(y - k)^2}{a^2} = -1 \quad \text{or} \quad \frac{(y - k)^2}{a^2} - \frac{(x - h)^2}{b^2} = 1
    $$
    where the slopes of the asymptotes are $\pm \frac{a}{b}$ or $\pm \frac{b}{a}$. In this case, $\frac{a}{b} = 2$ or $\frac{b}{a} = 2$. Let's assume the hyperbola is of the form:
    $$
    \frac{(y - 7)^2}{a^2} - \frac{(x - 5)^2}{b^2} = 1
    $$
    with $\frac{a}{b} = 2$, so $a = 2b$.
    \item \textbf{Use the point (4, 7) to find $a$ and $b$:}
    The hyperbola passes through the point $(4, 7)$. Substituting $x = 4$ and $y = 7$ into the equation of the hyperbola:
    $$
    \frac{(7 - 7)^2}{a^2} - \frac{(4 - 5)^2}{b^2} = 1
    $$
    Simplifying, we get:
    $$
    0 - \frac{1}{b^2} = 1 \implies -\frac{1}{b^2} = 1 \implies b^2 = -1
    $$
    {\color{blue}This is incorrect, so let's assume the hyperbola is of the form:
    $$
    \frac{(x - 5)^2}{b^2} - \frac{(y - 7)^2}{a^2} = -1
    $$}
    with $\frac{a}{b} = 2$, so $a = 2b$. Substituting $x = 4$ and $y = 7$:
    $$
    \frac{(4 - 5)^2}{b^2} - \frac{(7 - 7)^2}{a^2} = -1
    $$
    Simplifying, we get:
    $$
    \frac{1}{b^2} - 0 = -1 \implies \frac{1}{b^2} = -1 \implies b^2 = 1
    $$
    So, $b = 1$ and $a = 2b = 2$.
    \item \textbf{Find the distance between the foci:}
    The distance between the foci of a hyperbola is $2c$, where $c = \sqrt{a^2 + b^2}$. In this case:
    $$
    c = \sqrt{a^2 + b^2} = \sqrt{2^2 + 1^2} = \sqrt{4 + 1} = \sqrt{5}
    $$
    So, the distance between the foci is:
    $$
    2c = 2\sqrt{5}
    $$
\end{enumerate}
The final answer is:
$$
\boxed{2\sqrt{5}}
$$

\end{tcolorbox}

\subsection{Ablation Study: Classical BRM on LLMs}\label[appendix]{app:classical-brm}

As discussed in \cref{sec:tbrm}, in the context of LLMs, directly minimizing BRM may not be possible because the token-level reward signal is either unavailable (e.g. if we assign the outcome reward to the final token) or very sparse. To highlight the advantages of the trajectory-level approach, we implemented classical BRM, whose loss function is defined in \cref{eq:classical_brm}, using the same hyperparameters as in our main setup on Qwen2.5-Math-1.5B. 

\begin{figure}[htbp]
    \centering
    \includegraphics[width=0.5\linewidth]{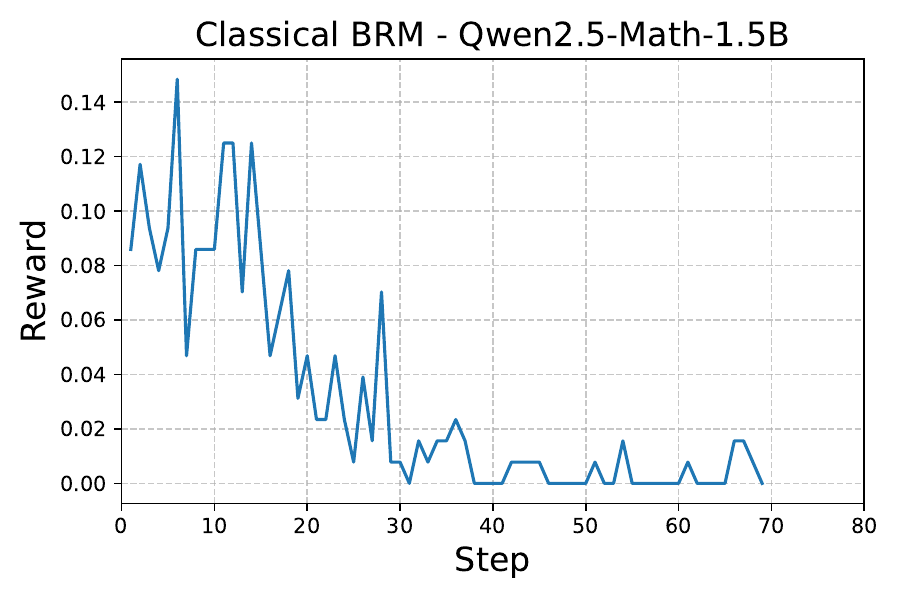}
    \caption{Training reward with classical BRM on model Qwen2.5-Math-1.5B. }
    \label{fig:statebrm}
\end{figure}

\cref{fig:statebrm} shows that training reward quickly collapses, and we observe that the model outputs become random and meaningless. Intuitively, this degradation occurs because BRM has to propagate the sparse reward signal, which only receives at the final token, back through multiple token-wise regressions, whereas the TBRM provides a better implicit credit assignment through a single trajectory-level objective.

\subsection{Tasks Beyond Mathematical Problems}\label[appendix]{app:reasoning-gym}

To demonstrate the generalizability of our method beyond mathematical tasks, we evaluate TBRM on five tasks from the reasoning-gym \citep{stojanovski2025reasoninggymreasoningenvironments} under the \textit{graphs} category: $\mathsf{course\_schedule}$, $\mathsf{family\_relationships}$, $\mathsf{largest\_island}$, $\mathsf{quantum\_lock}$, and $\mathsf{shortest\_path}$. These tasks are naturally represented as graphs, consisting of nodes and edges, and typically require traversing connections to identify relationships, compute optimal paths, or determine reachable components. They involve reasoning patterns that differ significantly from those in mathematical tasks.

\definecolor{bgcolor}{HTML}{D0E0F7}
\begin{table*}[ht!]
    \begin{adjustbox}{width=\textwidth,center}
    \centering
    \begin{tabular}{lcccccc}
        \toprule
        \textbf{Method} & \textbf{course\_schedule} & \textbf{family\_relationships} & \textbf{largest\_island} & \textbf{quantum\_lock} & \textbf{shortest\_path} & \textbf{Average} \\
        \midrule
        \textbf{Qwen2.5-Math-1.5B} & 29.5 & 3.0 & 11.0 & 5.5 & 0.0 & 9.8 \\
        \makebox[3em][l]{GRPO} \auxinfofont{$n=4$} & 54.0 & \textbf{84.0} & 34.0 & \textbf{30.5} & 26.0 & 45.7 \\
        \rowcolor{bgcolor} \makebox[3em][l]{TBRM} \auxinfofont{$n=4$} & \textbf{60.0} & 80.0 & \textbf{38.0} & 27.0 & \textbf{31.0} & \textbf{47.2} \\
        \bottomrule
    \end{tabular}
    \end{adjustbox}
    \caption{Performance of GRPO and TBRM on various tasks from reasoning-gym, category \textit{graphs}. }
    \label{tab:reasoninggym-results}
\end{table*}

We construct a training set of 10,000 problems, with 2,000 questions per task, and a test set of 500 problems, comprising 100 questions from each task. For both training and evaluation, we use the official verifiers provided by reasoning-gym to compute rewards. Our experiments are conducted on Qwen2.5-Math-1.5B using both TBRM and GRPO, with a prompt batch size of 1024 and 4 sampled responses per question ($n = 4$). Models are trained for 100 steps. All evaluations are conducted using greedy decoding. Results in \cref{tab:reasoninggym-results} demonstrate that TBRM generalizes well to diverse reasoning tasks and performs on par with GRPO.

\subsection{TBRM with More Rollouts}\label[appendix]{app:more-rollouts}

To demonstrate that TBRM scales effectively with increasing number of sampled responses per prompt, we rerun GRPO and TBRM using most hyperparameters from DAPO \citep{yu2025dapo}. Specifically, we used a prompt batch size of 512 and generated $n=16$ responses per prompt. For GRPO, we set the microbatch size to 512, resulting in 16 updates per training step. The experiments were conducted on the Qwen2.5-Math-7B model, following the same evaluation pipeline described in our paper. Both algorithms were trained for 100 steps. \cref{tab:more-rollouts-results} shows that TBRM remains comparable to GRPO under these aligned settings.

\definecolor{bgcolor}{HTML}{D0E0F7}
\begin{table*}[ht!]
    \begin{adjustbox}{width=\textwidth,center}
    \centering
    \begin{tabular}{lcccccc}
        \toprule
        \textbf{Method} & \textbf{AIME24} & \textbf{AIME25} & \textbf{AMC23} & \textbf{MATH500} & \textbf{Minerva Math} & \textbf{OlympiadBench} \\
        & \auxinfofont{Avg@32} & \auxinfofont{Avg@32} & \auxinfofont{Avg@32} & \auxinfofont{Avg@1} & \auxinfofont{Avg@1} & \auxinfofont{Avg@1} \\
        \midrule
        \textbf{Qwen2.5-Math-7B} & 10.6 & 2.8 & 31.6 & 67.4 & 13.2 & 29.3 \\
        \makebox[3em][l]{GRPO} \auxinfofont{$n=16$} & 26.6 & \textbf{11.0} & 61.8 & \textbf{77.8} & 32.7 & \textbf{40.4} \\
        \rowcolor{bgcolor} \makebox[3em][l]{TBRM} \auxinfofont{$n=16$} & \textbf{27.9} & 10.9 & \textbf{62.8} & 76.4 & \textbf{33.5} & 39.9 \\
        \bottomrule
    \end{tabular}
    \end{adjustbox}
    \caption{Performance of GRPO and TBRM with $n=16$. }
    \label{tab:more-rollouts-results}
\end{table*}

\end{document}